\DeclareMathAlphabet{\mathbbold}{U}{bbold}{m}{n}
\newcommand{\Prob}{\mathbb{P}}
\crefname{equation}{Eq.}{Eqs.}
\theoremstyle{plain}
\newtheorem{theorem}{Theorem}[section]
\newtheorem{proposition}[theorem]{Proposition}
\newtheorem{lemma}[theorem]{Lemma}
\theoremstyle{definition}
\newtheorem{definition}[theorem]{Definition}
\newtheorem{assumption}[theorem]{Assumption}
\theoremstyle{remark}
\newtheorem{remark}[theorem]{Remark}
\definecolor{Gray}{gray}{0.93}
\definecolor{lightroyalblue}{HTML}{F6F8FD} % more blue: E5EAFB
\definecolor{royalblue}{HTML}{4169E1}
\definecolor{lighterblue}{HTML}{f2fafd}  % more blue: e4f4fa
\newtcolorbox{abox}{colback=lightroyalblue,colframe=black}
\definecolor{LightCyan}{rgb}{.9, .95, 1.}
\DeclareRobustCommand{\blue}[1]{%
  \begingroup
  \if\relax\detokenize{#1}\relax
    \colorbox{lighterblue}{\strut\hspace{\linewidth}}%
  \else
    {\color{blue}#1}%
  \fi
  \endgroup
}
\icmltitlerunning{On the Provable Performance Guarantee of Efficient Reasoning Models}
\begin{document}

\twocolumn[
  \icmltitle{On the Provable Performance Guarantee of Efficient Reasoning Models}

  % It is OKAY to include author information, even for blind submissions: the
  % style file will automatically remove it for you unless you've provided
  % the [accepted] option to the icml2026 package.

  % List of affiliations: The first argument should be a (short) identifier you
  % will use later to specify author affiliations Academic affiliations
  % should list Department, University, City, Region, Country Industry
  % affiliations should list Company, City, Region, Country

  % You can specify symbols, otherwise they are numbered in order. Ideally, you
  % should not use this facility. Affiliations will be numbered in order of
  % appearance and this is the preferred way.
\icmlsetsymbol{equal}{*}

\begin{icmlauthorlist}
\icmlauthor{Hao Zeng}{sustech,equal}
\icmlauthor{Jianguo Huang}{ntu,equal}
\icmlauthor{Bingyi Jing}{cuhk,sustech}
\icmlauthor{Hongxin Wei}{sustech}
\icmlauthor{Bo An}{ntu}
\end{icmlauthorlist}

\icmlaffiliation{ntu}{Nanyang Technological University, Singapore}
\icmlaffiliation{sustech}{Southern University of Science and Technology, China}
\icmlaffiliation{cuhk}{The Chinese University of Hong Kong, Shenzhen, China}

\icmlcorrespondingauthor{Hongxin Wei}{weihx@sustech.edu.cn}
  % You may provide any keywords that you find helpful for describing your
  % paper; these are used to populate the "keywords" metadata in the PDF but
  % will not be shown in the document
\icmlkeywords{Machine Learning, ICML}

\vskip 0.3in
]

% this must go after the closing bracket ] following \twocolumn[ ...

% This command actually creates the footnote in the first column listing the
% affiliations and the copyright notice. The command takes one argument, which
% is text to display at the start of the footnote. The \icmlEqualContribution
% command is standard text for equal contribution. Remove it (just {}) if you
% do not need this facility.

% Use ONE of the following lines. DO NOT remove the command.
% If you have no special notice, KEEP empty braces:
\printAffiliationsAndNotice{}  % no special notice (required even if empty)
% Or, if applicable, use the standard equal contribution text:
% \printAffiliationsAndNotice{\icmlEqualContribution}

\begin{abstract}
Large reasoning models (LRMs) have achieved remarkable progress in complex problem-solving tasks.
Despite this success, LRMs typically suffer from high computational costs during deployment, highlighting a need for efficient inference. 
A practical direction of efficiency improvement is to switch the LRM between thinking and non-thinking modes dynamically.
However, such approaches often introduce additional reasoning errors and lack statistical guarantees for the performance loss, which are critical for high-stakes applications.
In this work, we propose \textit{Probably Approximately Correct} (PAC) \textit{reasoning} that controls the performance loss under the user-specified tolerance. 
Specifically, we construct an upper confidence bound on the performance loss and determine a threshold for switching to the non-thinking model.
Theoretically, using the threshold to switch between the thinking and non-thinking modes ensures bounded performance loss in a distribution-free manner.
Our comprehensive experiments on reasoning benchmarks show that the proposed method can save computational budgets and control the user-specified performance loss.
\end{abstract}

\section{Introduction}
Large reasoning models (LRMs) have shown strong performance in tackling complex problems~\citep {deepseek-ai2025deepseekr1,yang2025qwen3}.
However, this strong performance largely depends on long reasoning chains, which substantially increase the computational cost during inference. 
This phenomenon, often referred to as overthinking~\citep{yue2025dont}, is evident in mathematical and logic-intensive tasks.
And, in applications requiring real-time interaction or large-scale processing, such as text generation~\citep{zhang2022opt} and chatbot~\citep{roller2021recipes}, inference efficiency directly determines usability and user experience. 
Therefore, it is essential to improve the inference efficiency of LRMs.
 
To address this, some existing works proposed to switch the LRM into a non-thinking mode to avoid overthinking~\citep{chung2025thinker, fang2025thinkless, li2025dynamicmind, liang2025thinkswitcher, ma2025reasoning, paliotta2025thinking, pan2025survey, xiao2025fastslow, yong2025think}.
While effective in reducing computational demands, using a non-thinking model often degrades solution quality or introduces additional errors.
For instance, in theorem-proving tasks, switching techniques may lead to invalid logical steps, and in mathematical reasoning, it can result in calculation mistakes or overlooked solution paths.
Besides, such methods lack a rigorous theoretical guarantee for performance loss. 
This limitation raises a fundamental issue:
\begin{center}%
    \textit{How to improve the efficiency of LRMs, controlling the performance loss provably?}
\end{center}%

In this work, we formalize this challenge by the concept of a PAC efficient model, where an LRM provides probabilistic guarantees that its performance loss relative to a reference thinking model remains within a user-specified tolerance in Definition~\ref{def:pac}.
To achieve this goal, we propose \textbf{PAC reasoning}, which constructs a composite model $\hat{f}$ by adaptively routing each input between a high-cost thinking model $f$ and a cheaper non-thinking model $\tilde{f}$.
Specifically, PAC reasoning determines an uncertainty threshold on a calibration dataset via a calibration procedure (Algorithm~\ref{alg:compute_bound}). During deployment (Algorithm~\ref{alg:pac_reasoning}), the composite model uses $\tilde{f}$ for inputs whose uncertainty score falls below the calibrated threshold, and defers to $f$ otherwise.
By calibrating the switching rule with respect to an explicit loss tolerance and confidence level, PAC reasoning provides rigorous statistical guarantees on the resulting performance loss.

Theoretically, we show that PAC reasoning provides distribution-free control of performance loss with high-probability guarantees.
We formalize this through a composite model $\hat{f}$, for which the performance loss is non-decreasing with respect to the uncertainty threshold.
This monotonicity allows the identification of the largest feasible threshold via an upper confidence bound on the performance loss.
Under the i.i.d. assumption, we prove that PAC reasoning controls the loss below a user-specified tolerance with high probability, thereby satisfying PAC efficiency.

We then present comprehensive experimental results in \Cref{sec:experiments} that rigorously evaluate the PAC reasoning across diverse reasoning benchmarks, including MATH-500~\citep{lightman2023lets}, ZebraLogic\citep{lin2025zebralogic}, and Arena-Hard~\citep{li2025crowdsourced}. 
% Our experiments systematically assess the statistical validity of our PAC guarantees and the improvement of computational efficiency.
The results demonstrate that our approach effectively controls the performance loss and significantly reduces inference cost.  
For example, on Arena-Hard with tolerance $\epsilon = 0.08$ for the logits uncertainty score, our method controls the average empirical performance loss at $0.06$ (below the tolerance), and achieves token savings exceeding $40\%$.
We also find that the logits-based uncertainty score provides more stable performance loss control compared to the verbalized-based score.~\footnote{The reproducibility code is placed at an anonymous link: \href{https://anonymous.4open.science/r/pac_reasoning-BD64}{https://anonymous.4open.science/r/pac\_reasoning-BD64}}
% Our comprehensive experiments demonstrate that PAC reasoning effectively balances computational efficiency with rigorous performance guarantees across diverse reasoning tasks.
% We also discuss the effect of the loss function, the uncertainty score, and the difficulties of tasks. 

Our contributions are as follows:
\begin{itemize}
    \item 
    % We introduce the concept of \textbf{$(\epsilon, \alpha)$-PAC efficiency improvement}, the first efficiency-improvement theoretical concept with rigorous statistical guarantees for LRMs. 
    We introduce \textbf{\((\varepsilon,\alpha)\)-PAC efficient}, the first formal framework for quantifying and guaranteeing performance loss in LRM efficiency improvement, establishing a novel theoretical foundation for this domain.
    \item We propose \textbf{PAC reasoning}, a method that combines a thinking-mode model with its non-thinking counterpart via an uncertainty-based mechanism to improve efficiency. The method is model-agnostic and provides \emph{distribution-free} performance guarantees. 
    % \item We prove that PAC reasoning controls the performance loss below a user-specified tolerance $\epsilon$ with confidence level $1-\alpha$, i.e., it is a $(\epsilon, \alpha)$-PAC efficient model.
    \item We provide comprehensive experiments on mathematical reasoning, logical deduction, and text generation, demonstrating that PAC reasoning achieves efficiency gains while satisfying the statistical validity of the PAC efficient guarantee.
\end{itemize}

\paragraph{Notations}
We begin by introducing key notations. 
The first is the LRM with thinking-mode $f$, which is computationally expensive but delivers high performance on its answers. 
Given an input prompt $x$, $f$ produces an output $y = f(x)$, which we regard as the ``expert answer''. 
The second is the non-thinking LRM $\tilde{f}$, which is computationally cheaper but less accurate, and $\tilde{y} = \tilde{f}(x)$.
And for any input \(x\), we use \(y^{gold}\) as its ``gold reference''. 
Let $\mathcal{I}_{cal} = \{1, \ldots, n\}$ and $\mathcal{I}_{test} = \{n+1, \ldots, n+N\}$ denote the indices of the calibration and test sets, respectively. 
We define the calibration dataset and the test dataset as:
\[
\mathcal{D}_{cal} = \{(x_i, y_{i})\}_{i \in \mathcal{I}_{cal}}, 
\quad 
\mathcal{D}_{test} = \{(x_i, y_{i})\}_{i \in \mathcal{I}_{test}}.
\]
It is worth noting that the \(y_i\) is not a ground-truth label, but the ``expert answer'' provided by the LRM $f$.
Finally, let $y_{i} = (y_{i,1}, \dots, y_{i,l_{y_i}})$ denote an answer consisting of $l_{y_i}$ tokens, with $y_{i,j}$ representing the $j$-th token of $y_i$.

\section{Probably approximately correct reasoning}\label{sec:pac_reasoning}

\subsection{PAC efficient model}
We aim to build a efficient LRM, denoted by $\hat{f}$, that provides probably approximately correct guarantees for its performance loss while improving efficiency.
Specifically, given an error tolerance $\epsilon$ and a confidence level $1-\alpha$, $\hat{f}$ ensures its performance loss \emph{relative to the only thinking-mode} LRM $f$ does not exceed $\epsilon$ with probability at least $1-\alpha$.
We formulate the PAC guaranteed $\hat{f}$ as follows:

\begin{definition}[($\epsilon, \alpha$)-PAC efficient]\label{def:pac}
An LRM \(\hat{f}\) is called an (\(\epsilon, \alpha\))-probably approximately correct (PAC) efficient model (with respect to loss \(\ell\))
\footnote{For simplicity, we often omit mentioning ``with respect to $\ell$'' since most tasks have their conventional loss functions},
if for given error tolerance $\epsilon>0$, confidence level $\alpha \in (0,1)$, it satisfies
\[
\mathbb{P}\left( R(\hat{f}) \le \epsilon \right) \ge 1 - \alpha,
\]
where \(R(\hat{f}) = \mathbb{E}_{x\sim P}[\ell (\hat{f}(x), f(x))]\) is the risk function, $\ell(\cdot, \cdot)$ is a loss function, $x$ denotes an input prompt drawn from the underlying test sample distribution $P$.

\end{definition}

\begin{remark}
    The loss function can be a 0-1 loss for verifiable tasks or a semantic loss for generative tasks. 
    % The positive value $\epsilon > 0$ is called the error tolerance, and \(1-\alpha\) is termed the confidence level.
    We sometimes term \((\epsilon, \alpha)\)-PAC efficient model simply as PAC efficient model. 
    In practice, $\epsilon$ controls the performance loss relative to the thinking model $f$.
    Therefore, a meaningful range is $\epsilon \in (0, R(\tilde f))$, where $R(\tilde f)$ is the risk of always using the non-thinking model.
\end{remark}

\subsection{PAC reasoning}
Constructing such a controllable LRM $\hat{f}$ is straightforward intuitively. 
Given an LRM with thinking mode $f$ and a fast LRM without thinking $\tilde{f}$, we create an intermediate model that selectively uses either the LRM with thinking or not based on certain conditions. 
This condition acts like a ``sliding rheostat'' that allows us to tune the performance trade-off by adjusting the ``position'' of the intermediate. 
We can obtain a model $\hat{f}$ that achieves the desired error tolerance heuristically.
However, this approach lacks statistical guarantees on the underlying distribution of performance loss. 
To build a model with statistical guarantees, a hypothesis test will be used to determine an optimal threshold that balances computational efficiency with output quality while maintaining statistical confidence.
% Motivated by this intuition, we introduce PAC reasoning next, a scheme that constructs a composite LRM $\hat{f}$ satisfying the ($\epsilon, \alpha$)-PAC efficiency improvement.
% The framework strategically combines a fast LRM without thinking for most inputs, while selectively invoking the LRM with thinking only for highly uncertain cases.

Motivated by this, we present the \textbf{PAC reasoning}, which constructs a composite LRM $\hat{f}$ that improves the efficiency of an LRM with thinking $f$.
The composite LRM $\hat{f}$ provides PAC guarantees for the efficiency improvement.
The core idea is to use \textbf{uncertainty scores} to build an \textbf{upper confidence bound} for the performance loss. 
We could use the upper confidence bound to measure the uncertainty of performance loss for each value of the uncertainty score.
Then we \textbf{calibrate} an uncertainty threshold to switch between the thinking and non-thinking models.
We use the non-thinking LRM $\tilde{f}$ on most inputs and strategically invoke the expensive LRM with thinking $f$ only for inputs whose generation by $\tilde{f}$ has high uncertainty.
Mathematically, we could represent the composite model $\hat{f}$ as follows:
\begin{equation}
    \label{eq:composite_model}
    \hat f(x) \equiv \hat{f}_u(x) = \begin{cases}
        \tilde{f}(x), & U(x) < \hat{u}, \\
        f(x), & U(x) \ge \hat{u}.
    \end{cases}
\end{equation}
Next, we provide the details of the PAC reasoning.

\paragraph{Uncertainty scores and empirical performance loss}
We assume that for each input prompt $x_i$, the non-thinking LRM $\tilde{f}$ produces an output $\tilde{y}_i$, and that there exists a score $U_i \in [0,1]$ to quantify its uncertainty.
This score should ideally correlate with the likelihood of disagreement with the reference model $f$. 
The core idea is to use these uncertainty scores to selectively use the expensive model with thinking $f$. 
We aim to find a threshold, $\hat{u}$, and accept the non-thinking LRM's output $\tilde{y}_i$ for the instances such that $U_i < \hat{u}$, while querying the model with thinking $f$ for the cases where $U_i \ge \hat{u}$. 
To formalize, we define the empirical performance loss function as:
\begin{equation}
    L(u) = \frac{1}{N} \sum_{i=n+1}^{n+N} \ell(y_i, \tilde{y}_i) \mathbf{1}\{U_i \le u\}.
\end{equation}
This function measures the average performance loss for test data points with uncertainty scores no greater than $u$. 
If we could compute $L(u)$ for all $u$, we would choose the largest threshold $u^*$ such that $L(u^*) \le \epsilon$.
However, computing $L(u)$ requires access to all expert answers $y_i = f(x_i)$ in the test set, which is computationally expensive.
We try an alternative way to build a bound \(\hat{L}_u(\alpha)\) for $\mathbb{E} L(u)$ satisfying the following inequality pointwise w.r.t.~\(\alpha\):
\begin{equation}\label{eq:cum_error}
    \mathbb{P}(\hat{L}_u(\alpha) \ge \mathbb{E} L(u)) \ge 1-\alpha.
\end{equation}
The bound is also called as upper confidence bound in literature. 
By construction, $L(u)$ is non-decreasing in $u$ since $\ell$ is non-negative and $\mathbf{1}\{U_i \le u\}$ only expands as $u$ increases.
This monotonicity lets us obtain the PAC guarantee in Definition~\ref{def:pac} and apply fixed-sequence single-start without additional corrections; we summarize it in Theorem~\ref{thm:pac_guarantee}.
% , and summarize it in Assumption~\ref{assump:validity}.

\begin{algorithm}[ht!]
\caption{CLT-based UCB for $\hat{L}_u(\alpha)$}
\label{alg:compute_bound}
\begin{algorithmic}[1]
\REQUIRE Calibration set $\mathcal{S}_{\mathrm{cal}} = \{(x_i, y_i)\}_{i=1}^n$, models $f$ and $\tilde{f}$, uncertainty scores $\{U_i\}_{i=1}^n$, sampling weights $\{\pi_i\}_{i=1}^n$, sample size $m$, threshold $u$, confidence level $\alpha$
\ENSURE Confidence upper bound $\hat{L}_u(\alpha)$
\STATE Initialize an empty list of samples $\mathcal{Z} \leftarrow []$.
\STATE $\tilde{y}_i \leftarrow \tilde{f}(x_i)$ for all $i = 1, \dots, n$.
\FOR{$j = 1, \dots, m$}
    \STATE Sample an index $i_j \sim \text{Unif}(\{1, \dots, n\})$.
    \STATE Sample a Bernoulli random variable $\xi_{i_j} \sim \text{Bern}(\pi_{i_j})$.
    \IF{$\xi_{i_j} = 1$}
        \STATE Query the true label $y_{i_j}$ and compute the importance-weighted loss $Z_j \leftarrow \ell(y_{i_j}, \tilde{y}_{i_j}) / \pi_{i_j}$.
    \ELSE
        \STATE $Z_j \leftarrow 0$.
    \ENDIF
    \STATE Append $Z_j$ to $\mathcal{Z}$.
\ENDFOR
\STATE For a threshold $u$, set $Z_j(u) \leftarrow Z_j \cdot \mathbf{1}\{U_{i_j} \le u\}$ for $j = 1, \dots, m$.
\STATE $\hat{\mu}_Z(u) \leftarrow \frac{1}{m} \sum_{j=1}^m Z_j(u)$
\STATE $\hat{\sigma}_Z(u) \leftarrow \sqrt{\frac{1}{m-1} \sum_{j=1}^m (Z_j(u) - \hat{\mu}_Z(u))^2}$
\STATE $z_{1-\alpha} \leftarrow (1-\alpha)$-quantile of the standard normal distribution.
\STATE {\bfseries Return} $\hat{L}_u(\alpha) \leftarrow \hat{\mu}_Z(u) + z_{1-\alpha} \frac{\hat{\sigma}_Z(u)}{\sqrt{m}}$.
\end{algorithmic}
\end{algorithm}

\paragraph{Constructing the upper confidence bound (UCB)}
% To compute a feasible average performance loss approximation $\hat{L}_u(\alpha)$, we propose a procedure inspired by probably approximately correct labeling \citep{candes2025probably}. 
Next we discuss how to construct a valid UCB, via importance sampling. 
Given a sampling size $m$, we first collect $m$ indices $\{i_1, \dots, i_m\}$ by sampling uniformly with replacement from $\{1, \dots, n\}$. 
Then, for each selected index $i_j$, we decide whether to query its expert answer $y_{i_j}$ by performing a Bernoulli trial $\xi_{i_j} \sim \text{Bern}(\pi_{i_j})$, where $\{\pi_1, \dots, \pi_n\}$ are sampling weights. 
This procedure yields a dataset of $m$ i.i.d\@.~random variables:
\begin{equation}
    Z_j(u) = \ell(y_{i_j}, \tilde{y}_{i_j}) \frac{\xi_{i_j}}{\pi_{i_j}} \mathbf{1}\{U_{i_j} \le u\}.
\end{equation}
The expectation of $Z_j(u)$ is equals the target quantity $L(u)$, since $\mathbb{E}[\xi_{i_j} / \pi_{i_j} | i_j] = 1$. 
We can therefore estimate an upper bound for $L(u)$ by computing a confidence interval for the mean of $\{Z_j(u)\}_{j=1}^m$. 
We formally described it in the central limit theorem (CLT) based Algorithm~\ref{alg:compute_bound}.

\begin{remark}\label{remark:importance_sampling}
The procedure in Algorithm~\ref{alg:compute_bound} uses importance sampling to construct an unbiased estimator for the true performance loss $L(u)$. 
For any fixed threshold $u$, the random variables $Z_j(u)$ are i.i.d.\ with expectation $\mathbb{E}[Z_j(u)] = L(u)$.
This holds because the sampling process decouples the choice of index $i_j$ from the decision to query the label $y_{i_j}$.
Given the samples $\{Z_j(u)\}_{j=1}^m$, we can form an upper bound for $L(u)$. 
Algorithm~\ref{alg:compute_bound} illustrates this using a CLT-based approach, valid for large $m$. 
See Appendix~\ref{sec:clt_ucb_validity} for discussion about its UCB validation as Assumption~\ref{assump:validity}.
Alternatively, if the importance-weighted losses are bounded, one could use concentration inequalities like Hoeffding's or Bernstein's inequality to construct a valid confidence bound~\citep{bentkus2004hoeffdings, hao2019bootstrapping, hoeffding1994probability, learned-miller2020new, ramdas2022admissible, waudby-smith2021confidence, waudby-smith2024estimating}, which may provide better guarantees for smaller sample sizes. 
We give an example based on Hoeffding's inequality~\citep{hoeffding1994probability} in Algorithm~\ref{alg:compute_bound_finite_sample} in Appendix~\ref{app:bounded_loss_alg}. 
\end{remark}

\paragraph{Calibration}
Once the UCB $\hat{L}_u(\alpha)$ is constructed, the threshold $\hat{u}$ is  the highest one for which this estimated performance loss bound remains below the tolerance $\epsilon$:
\begin{equation}
    \label{eq:u^hat}
    \hat{u} = \max \{ u \in [0, 1] : \hat{L}_{u}(\alpha) \le \epsilon \}.
\end{equation}
We calibrate the $\hat{u}$ on the calibration set $\mathcal{D}_{cal}$, and apply it to the test sample. 
If the score is larger than $\hat{u}$, we use the thinking model to answer; otherwise, we use the non-thinking mode.
This procedure ensures we can accept as many outputs from the non-thinking model $\tilde{f}$ while controlling the overall performance loss with high probability.
We summarize the PAC reasoning algorithm in Algorithm~\ref{alg:pac_reasoning}.
% And we show the error and saved token percentage on the dataset ``ZebraLogic'' using PAC reasoning with logits-based score and verbalized score in Figure~\ref{fig:zebralogic_01_results}.
% Our method controls the performance loss below the target loss $0.08$ and saves token cost at least $20\%$ with confidence $95\%$.
% As for the naive method discussed in Appendix~\ref{sec:binary_result}, switching model using a fixed threshold, it could not control the performance, and may incur more token usage, see details in Table~\ref{table:01_results}. 

% Next, we will introduce the PAC guarantee in Section~\ref{sec:theoretical_analysis}.

\begin{algorithm}[ht]
\caption{PAC Reasoning}\label{alg:pac_reasoning}
\textbf{Input:} Calibration set $\{(x_i, y_i)\}_{i=1}^n$, test prompts $x_i, i \in \mathcal{I}_{test}$, model without thinking $\tilde{f}$, model with thinking $f$, loss function $\ell$, error tolerance $\epsilon$, confidence level $\alpha$\\
\textbf{Output:} Composite model $\hat{f}$

\begin{algorithmic}[1]
% \STATE $\forall i \in \mathcal{I}_{cal} \cup \mathcal{I}_{test}$, compute outputs without thinking $\tilde{y}_i = \tilde{f}(x_i)$ and uncertainty scores $U_i$.
\STATE Compute confidence bound function $\hat{L}_{u}(\alpha)$ via Algorithm~\ref{alg:compute_bound} on calibration data $\{(x_i, y_i)\}_{i=1}^n$.
\STATE Threshold  \(\hat{u} = \max \{ u \in [0, 1] : \hat{L}_{u}(\alpha) \le \epsilon \}\).
\STATE Composite model \(\hat{f}(x) \leftarrow f(x)\mathbbold{1}\{U(x) \ge \hat{u}\} + \tilde{f}(x)\mathbbold{1}\{U(x) < \hat{u}\}\).
\STATE \textbf{Return} $\hat{f}$.
\end{algorithmic}
\end{algorithm}

% \begin{figure}[t]
%     \centering
%     \subcaptionbox{Error}{
%         \includegraphics[width=0.45\linewidth]{}
%     }
%     \subcaptionbox{STP}{
%         \includegraphics[width=0.45\linewidth]{}
%         }
%     \caption{\textbf{Error control and saved token percentage (STP) of PAC reasoning,} 
%     with binary loss on ZebraLogic at a confidence level $95\%$. 
%     ``PAC(Logits)'' and ``PAC(Verbalized)'' present PAC reasoning using the logits-based score and the verbalized score. 
%     Both control the performance loss under the target $0.08$ and save at least $20\%$.
%     % The naive method, defined as~\eqref, using a threshold $0.05$, could not tightly control the performance loss but cost more tokens. 
%     All experiments are repeated 100 times, and other details follow as the main experiment in \Cref{sec:exp_setup} and Appendix~\ref{app:experimental_details}.
%     % We set the target performance loss $\epsilon=0.08$.
%     }
%     \label{fig:zebralogic_01_results}
% \end{figure}

\section{Theoretical Analysis}\label{sec:theoretical_analysis}
In this section, we aim to build the PAC guarantee.
Our PAC reasoning builds upon the theoretical foundation established by the distribution-free risk control framework \citep{angelopoulos2025learn}.
% As discussed in Appendix~\ref{sec:from_ltt_to_pac}, our PAC reasoning problem can be viewed as a generalization of the distribution-free risk control method.  
It provides the mathematical foundation for our risk control type method on the characteristics of PAC reasoning.
In detail, if the performance loss \(L(u)\) is bounded by a UCB, and the UCB based on CLT or concentration inequality is valid as Assumption~\ref{assump:validity}, we can prove the PAC guarantee as follows.

% To establish the theoretical validity of our approach, our PAC reasoning problem can be viewed as a generalization of the distribution-free risk control method.  

% which provides the mathematical foundation for our risk control type guarantees in \Cref{sec:from_ltt_to_pac}. 
% And we demonstrate the characteristic of PAC reasoning under LTT. 
% Then we show the theoretical validity of our PAC reasoning framework in \Cref{sec:validity_of_pac_guarantee}.

% \subsection{PAC guarantee}\label{sec:validity_of_pac_guarantee}

First, noting that while the confidence bound $\hat{L}_u(\alpha)$ is constructed to hold for a single, pre-specified threshold $u$, our algorithm selects the threshold $\hat{u}$ based on the calibration data. 
% This data-dependent selection requires a careful justification to ensure the final PAC guarantee is not violated.
% The proof, detailed in Appendix~\ref{app:proof}, demonstrates that the procedure rigorously controls the total error rate despite this adaptivity. The formal guarantee is stated in the following theorem.
Let $\mathcal{D}_{cal}$ be a calibration set, used to construct a threshold $\hat u$, and let $\mathcal{D}_{test}$ be an independent test set with i.i.d.\ samples as $\mathcal{D}_{cal}$.
For any threshold $u$, recalling the deployment strategy in \Cref{eq:composite_model}, we denote it by \(\hat{f}_u(x)\).
We could see that \(\hat{f} = \hat{f}_{\hat{u}}\) is the composite model. 
We re-parameterize \(\hat{f}\) and its risk with respect to \(u\), with a slight abuse of notation.
Its population risk \(R(\hat{f})\) is re-parameterized as:
$$R(u) = \mathbb{E}[\ell(y, \hat{f}_u(x))],$$ 
and the empirical risk is re-parameterized as:
$$\widehat R(u) = \frac{1}{N}\sum_{i \in \mathcal{I}_{test}} \ell(y_i, \hat{f}_u(x_i)).$$
Then we list the assumptions of the PAC reasoning. 
\begin{assumption}[UCB validity]\label{assump:validity}
For each threshold $u$ and any $\alpha\in(0,1)$, the UCB $\widehat L_u(\alpha)$, computed on $\mathcal{D}_{cal}$, satisfies
\[
\mathbb{P}\big( R(u) \le \widehat L_u(\alpha)\big) \ge 1-\alpha.
\]
\end{assumption}
As discussed in Remark~\ref{remark:importance_sampling}, we can build the UCB $\widehat L_u(\alpha)$ for $\mathbb{E} L(u)$ in two main ways: using the central limit theorem, or using bounds like Hoeffding's or Bernstein's inequality \citep{bentkus2004hoeffdings, hoeffding1994probability, waudby-smith2021confidence, waudby-smith2024estimating}, which may provide better guarantees for smaller sample sizes.
We prove the validity of the CLT-based method in Appendix~\ref {sec:clt_ucb_validity}.
The risk function $R(u)$ is non-decreasing with $u$ by construction. 
As $u$ increases, the condition $U(x) \ge u$ becomes harder to satisfy, so we defer to the expert less often. 
% Under Assumption~\ref{assump:validity}, the expert incurs zero loss, while the fast model incurs non-negative loss. 
Then, because $\ell(y,y)=0$ and $\ell\ge 0$, reducing deference to the expert can only increase the total risk.
We combine UCB validity and this monotonicity, and provide the PAC guarantee of our proposed method:
\begin{theorem}[PAC guarantee]\label{thm:pac_guarantee}
Let $\hat{u}$ be the threshold selected by the PAC reasoning algorithm (Algorithm~\ref{alg:pac_reasoning}). 
If calibration set and test set are i.i.d.\ and Assumption~\ref{assump:validity} holds, then the composite model $\hat{f}$ constructed by Algorithm~\ref{alg:pac_reasoning} satisfies the $(\epsilon, \alpha)$-PAC guarantee, i.e.,
\[
\mathbb{P}( R(\hat{f}) \le \epsilon ) \ge 1 - \alpha.
\]
\end{theorem}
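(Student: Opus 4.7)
The plan is to control the data-selected threshold $\hat{u}$ by a deterministic quantity. I would introduce $u^{\ast} := \sup\{u \in [0,1] : R(u) \le \epsilon\}$, which is well defined since the set is non-empty (at $u=0$ the composite strategy always defers to $f$, so $R(0) = \mathbb{E}[\ell(y,f(x))] = 0$). By the non-decreasing property of $R$ stressed in the paper (as $u$ grows, the strategy defers to the thinking model on fewer inputs), the event $\{\hat{u} \le u^{\ast}\}$ is contained in $\{R(\hat{u}) \le \epsilon\}$, so it is enough to prove $\mathbb{P}(\hat{u} \le u^{\ast}) \ge 1-\alpha$.

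I would then invoke Assumption~\ref{assump:validity} at the single, fixed (non-random) point $u^{\ast}$, which gives $\mathbb{P}(E) \ge 1 - \alpha$ for the event $E := \{R(u^{\ast}) \le \hat{L}_{u^{\ast}}(\alpha)\}$. Combining this with $R(u^{\ast}) \ge \epsilon$, which follows directly from the definition of the supremum (taking a right-limit if $R$ has a jump at $u^{\ast}$), yields $\hat{L}_{u^{\ast}}(\alpha) \ge \epsilon$ on $E$. Finally, using the monotonicity of the functional $u \mapsto \hat{L}_{u}(\alpha)$ (in Algorithm~\ref{alg:compute_bound} each summand $Z_{j}(u) = Z_{j}\,\mathbf{1}\{U_{i_{j}} \le u\}$ is non-negative and non-decreasing in $u$), the passing set $\{u : \hat{L}_{u}(\alpha) \le \epsilon\}$ is an initial segment of $[0,1]$, so $\hat{L}_{u^{\ast}}(\alpha) \ge \epsilon$ forces its maximizer $\hat{u}$ to satisfy $\hat{u} \le u^{\ast}$. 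Chaining the inclusions, $\mathbb{P}(R(\hat{f}) \le \epsilon) \ge \mathbb{P}(\hat{u} \le u^{\ast}) \ge \mathbb{P}(E) \ge 1 - \alpha$, which is the PAC guarantee.

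The main obstacle is rigorously establishing the monotonicity of the CLT-based $\hat{L}_{u}(\alpha)$ in $u$: the mean $\hat{\mu}_{Z}(u)$ is manifestly non-decreasing, but the sample standard deviation $\hat{\sigma}_{Z}(u)/\sqrt{m}$ entering the Gaussian correction need not be, so the CLT-based UCB could in principle be non-monotone in a finite sample. I expect this to be handled either by bounding the variance term by a monotone upper envelope (for example, the Hoeffding-style Algorithm~\ref{alg:compute_bound_finite_sample}, whose $u$-dependence runs only through the monotone mean and hence trivially satisfies the requirement), or by re-interpreting the selection rule as the fixed-sequence single-start procedure of \citet{angelopoulos2025learn} that the paper explicitly invokes, which is immune to non-monotonicities as long as tests are walked in a pre-specified order. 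A secondary technicality, the possible right-discontinuity of $R$ at $u^{\ast}$, is handled by the standard $\delta \downarrow 0$ limiting argument applied at $u^{\ast} + \delta$.
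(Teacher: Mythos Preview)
Your proposal is correct and follows essentially the same route as the paper: define a deterministic critical threshold (you use $u^{\ast}=\sup\{u:R(u)\le\epsilon\}$, the paper uses $u^{\star}=\inf\{u:R(u)>\epsilon\}$), invoke the pointwise UCB validity there, and use monotonicity to pin $\hat u$ below it. You are in fact more explicit than the paper on two points it glosses over: the paper's step ``if $\hat u>u^{\star}$ then $\widehat L_{u^{\star}}(\alpha)\le\epsilon$'' silently requires monotonicity of $u\mapsto\widehat L_u(\alpha)$, which you correctly flag as nontrivial for the CLT bound (and trivially true for the Hoeffding bound); and the paper simply asserts $R(u^{\star})>\epsilon$ without discussing right-limits, while you note the $\delta\downarrow 0$ argument.
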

\begin{proof}[Proof sketch]
The proof uses the monotonicity of the risk $R(u)$.
If the selected threshold $\hat{u}$ is larger than the oracle threshold $u^\star := \inf\{u: R(u) > \epsilon\}$, then the upper bound $\widehat{L}_{u^\star}$ must be less than $R(u^\star)$.
This contradicts Assumption~\ref{assump:validity}, which happens with probability at most $\alpha$.
A detailed proof is in Appendix~\ref{sec:proof_pac_guarantee}.
\end{proof}
If the loss is bound in $[a,b]$, we provide an empirical version:
% as Assumption~\ref{assump:bounded_loss}:
% \begin{assumption}[Bounded loss]\label{assump:bounded_loss}
% The loss function $\ell$ is bounded in $[a,b]$.
% \end{assumption}
\begin{theorem}[Empirical risk PAC guarantee]\label{thm:empirical_test_pac_appendix}
Assume Assumption~\ref{assump:validity} holds, and the test batch $\mathcal{D}_\text{test}$ is independent of the calibration data $\mathcal D_{cal}$.
Given $\epsilon,\alpha\in(0,1)$, $\ell \in [a,b]$, and $\hat u$ defined as in Theorem~\ref{thm:pac_guarantee},
then any $t>0$,
\[
\mathbb{P}\!\big(\widehat R(\hat u)\le \epsilon+t\big)\;\ge\;1-\alpha-\exp\!\Big(-\tfrac{2 N t^2}{(b-a)^2}\Big).
\]
\end{theorem}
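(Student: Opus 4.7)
The plan is to combine the population-level PAC guarantee from Theorem~\ref{thm:pac_guarantee} with a Hoeffding bound that controls the deviation of the empirical test risk from the population risk, and then to close the argument with a union bound.

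First, I would condition on the calibration data $\mathcal{D}_{cal}$. Since $\hat u$ is a deterministic function of $\mathcal{D}_{cal}$ and the test batch $\mathcal{D}_{test}$ is independent of $\mathcal{D}_{cal}$, conditional on $\mathcal{D}_{cal}$ the random variables $\{\ell(y_i, T_{\hat u}(x_i))\}_{i\in\mathcal{I}_{test}}$ are i.i.d.\ with mean $R(\hat u)$ and take values in $[a,b]$. Hoeffding's inequality then gives
\[
\mathbb{P}\!\big(\widehat R(\hat u) - R(\hat u) > t \,\big|\, \mathcal{D}_{cal}\big) \;\le\; \exp\!\Big(-\tfrac{2Nt^2}{(b-a)^2}\Big),
\]
and taking expectation over $\mathcal{D}_{cal}$ removes the conditioning, yielding the same unconditional bound on the deviation event $\{\widehat R(\hat u) - R(\hat u) > t\}$.

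Next, Theorem~\ref{thm:pac_guarantee} (applicable because Assumption~\ref{assump:validity} holds and calibration/test samples are i.i.d.) yields $\mathbb{P}(R(\hat u) > \epsilon) \le \alpha$. Since the event $\{\widehat R(\hat u) > \epsilon + t\}$ is contained in $\{R(\hat u) > \epsilon\} \cup \{\widehat R(\hat u) - R(\hat u) > t\}$, a union bound gives
\[
\mathbb{P}\!\big(\widehat R(\hat u) > \epsilon + t\big) \;\le\; \alpha + \exp\!\Big(-\tfrac{2Nt^2}{(b-a)^2}\Big),
\]
which is equivalent to the stated inequality after taking the complement.

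There is no real technical obstacle here: the only subtlety worth being careful about is the independence/measurability setup needed to invoke Hoeffding conditionally on $\hat u$. Specifically, I should make explicit that $\hat u$ is $\sigma(\mathcal{D}_{cal})$-measurable so that, under the assumed independence between $\mathcal{D}_{cal}$ and $\mathcal{D}_{test}$, the conditional distribution of each $\ell(y_i, T_{\hat u}(x_i))$ given $\mathcal{D}_{cal}$ is simply the distribution of the loss evaluated at the (now fixed) threshold $\hat u$ on a fresh i.i.d.\ sample. Everything else is a direct application of Hoeffding and a union bound.
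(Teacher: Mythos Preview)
Your proposal is correct and matches the paper's proof essentially step for step: the paper also conditions on $\hat u$ (equivalently on $\mathcal{D}_{cal}$) to apply Hoeffding, then uses the inclusion $\{\widehat R(\hat u)>\epsilon+t\}\subseteq\{R(\hat u)>\epsilon\}\cup\{\widehat R(\hat u)-R(\hat u)>t\}$ together with Theorem~\ref{thm:pac_guarantee} and a union bound. The only cosmetic difference is that the paper packages the conditional Hoeffding step as a separate lemma before combining.
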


\begin{proof}[Proof sketch]
The result follows from the decomposition of the error event:
$\{\widehat R(\hat u) > \epsilon + t\} \subseteq \{R(\hat u) > \epsilon\} \cup \{\widehat R(\hat u) - R(\hat u) > t\}$.
The first term is bounded by $\alpha$ via Theorem~\ref{thm:pac_guarantee}.
The second term is bounded by $\exp(-2Nt^2/(b-a)^2)$ using the conditional Hoeffding's inequality, as the test set is independent of $\hat u$.
A detailed proof is in Appendix~\ref{sec:proof_of_empirical_test_pac_appendix}.
\end{proof}

\begin{remark}
A common special case is a bounded loss $\ell\in[0,1]$, e.g., $0$-$1$ loss for binary verifiable answers.
Then $b-a=1$ and the bound simplifies to $\mathbb{P}\!\big(\widehat R(\hat u)\le \epsilon+t\big)\ge 1-\alpha-e^{-2 N t^2}$.
It provides exact risk control for $\hat{f}$ with probability at least \(1 - \alpha\) by some slacks \(t\). 
\end{remark}
We prove it in Appendix~\ref{sec:proof_of_empirical_test_pac_appendix}. 
If the i.i.d.\ assumption does not hold, PAC reasoning can be extended to a transductive setting.  
This extension is discussed in Appendix~\ref{sec:transductive_pac_reasoning}.

\section{Experiments}\label{sec:experiments}
In this section, we evaluate PAC reasoning over multi benchmarks. 
We study two questions. 
First, does PAC reasoning control the performance loss at the target tolerance with confidence? 
Second, how much computation can it save under this control?
We report results for two uncertainty scores (logits-based and verbalized) and two efficiency metrics (expert call percentage and saved token percentage). 
We first show the setup as follows. 
% Details are given in \Cref{sec:exp_setup}.
% Across diverse benchmarks, PAC reasoning consistently controls performance loss under the desired tolerance while saving computation, regardless of the uncertainty estimator or loss function, as shown in \Cref{fig:semantic_results} and \Cref{table:01_results}. 
% We also discuss the behaviors of PAC reasoning and a naive method among different loss functions, uncertainty scores in \Cref{sec:exp_results}.
% We conduct additional experiments on more datasets, GPQA~\citep{rein2024gpqa} and HumanEval~\citep{chen2021evaluating} in Appendix~\ref{app:more_data} and comparision with three other method, ``naive contorl'', Chain of Draft (CoD)~\citep{xu2025chain} and NoThinking~\citep{ma2025reasoning}, in Section~\ref{sec:more_methods}.

\subsection{Setup}
\label{sec:exp_setup}
\paragraph{Large language models} 
In this study, we evaluate the PAC reasoning based on the Qwen3 series models~\citep{yang2025qwen3} and Llama-3.1-8B–based models. 
Specifically, we employ the ``Qwen3-4B-Thinking-2507'' as the thinking model and ``Qwen3-4B-Instruct-2507'' as the non-thinking model.  
We also use the ``DeepSeek-R1-Distill-Llama-8B'' as the thinking model and ``Llama-3.1-8B-Instruct'' as the non-thinking model, and show the results in Appendix~\ref{app:exp_other_llm}.
% More details can be found in Appendix~\ref{app:experimental_details}.

\textbf{Uncertainty score}
Our theoretical guarantees do not depend on the choice of uncertainty score.
% They apply to logits-based scores, verbalized scores, and reward-style scores from router models.
% We adopt two complementary perspectives to quantify the uncertainty of $\tilde{y}$.
Here, we focus on two common uncertainty scores used in LLMs, the \textbf{logits-based score} and the \textbf{verbalized score}.
The logits-based score is a white-box score from model logits.
The verbalized score is a black-box score from self-reported confidence.
For the white-box score, we use token-level probabilities from the prediction logits~\citep{kwon2023efficient,zheng2024sglang,zhou2025theoretical}.
Specifically, we define the uncertainty score of $y_i$ as its average token probability~\citep{hao2023reasoning, huang2025look}:
$$U_{logits}(y_i) = 1-\frac{1}{l_{y_i}} \sum_{j=1}^{l_{y_i}} \mathbb{P}(y_{i,j} | y_{i,1}, \dots, y_{i,j-1}, x_i),
$$
where $\mathbb{P}(y_{i,j} | y_{i,1}, \dots, y_{i,j-1},  x_i)$ is the conditional probability of token $y_{i,j}$. 
Moreover, we use verbalized uncertainty scores from non-thinking models~\citep{xiong2023can,tian2023just,yang2024verbalized,zhou2025theoretical}, where the model explicitly states its self-reported confidence. The verbalized uncertainty score is mainly applicable in black-box scenarios, where access to generation logits is restricted, especially in the case of closed-source LLMs.
In this study, we report the average confidence over 10 trials, and the corresponding prompts are listed in Appendix~\ref{app:experimental_details}.

% Moreover, we perform the average log probabilities~\citep{fu2025deep} as an alternative.
% $$
% U_{log}(y_i) = 1-\exp(\frac{1}{l} \sum_{j=1}^l \log \mathbb{P}(y_{i,j} | y_{i,1}, \dots, y_{i,j-1}, x)),
% $$
% where we use the exp transformation to standardize the log probabilities.
% leverage Monte Carlo variance.
% For a given input $x_i$, we generate a small set of $K$ candidate outputs, $\{\tilde{y}_i^{(k)}\}_{k=1}^K$, by running the fast model $\tilde{f}$ with different random seeds.
% Each output $\tilde{y}_i^{(k)}$ is then mapped to a vector embedding $v_k = E(\tilde{y}_i^{(k)})$ using a powerful sentence encoder like SimCSE.
% The uncertainty $U_i$ for the input $x_i$ can be defined as the average variance of these embeddings around their mean $\bar{v} = \frac{1}{K} \sum_{k=1}^K v_k$.
% Formally, we can use the average cosine distance:
% \begin{equation}
%     U_i = \frac{1}{K} \sum_{k=1}^K \left(1 - \frac{v_k \cdot \bar{v}}{\|v_k\| \|\bar{v}\|}\right).
% \end{equation}
% A larger $U_i$ indicates greater inconsistency in the outputs of the fast model, signaling higher uncertainty.

\textbf{Datasets} We evaluate PAC reasoning on datasets spanning reasoning and open-ended generation tasks. Specifically, our evaluation covers a high-level mathematics benchmark, MATH-500~\citep{lightman2023lets}, a text-based logical reasoning task, ZebraLogic~\citep{lin2025zebralogic}, and an alignment-focused open-ended writing benchmark, Arena-Hard~\citep{li2025crowdsourced}. 
For each dataset, we partition the original test set into a calibration subset and a held-out test subset randomly. 
Table~\ref{table:splitting_settings} in Appendix~\ref{app:experimental_details} provides details on the specific splitting strategies.
We also report results on additional benchmarks like GPQA~\citep{rein2024gpqa} and HumanEval~\citep{chen2021evaluating} in Appendix~\ref{app:more_data}.

\textbf{Baselines} We compare PAC reasoning with several efficiency-oriented baselines, including Naive control, Router~\cite{ong2025routellm}, Chain of Draft (CoD)~\citep{xu2025chain}, and NoThinking~\citep{ma2025reasoning}, with details in Appendix~\ref{sec:baseline}.
These methods reduce inference cost via heuristic design choices, but lack explicit theoretical guarantees on performance loss.

\textbf{Loss functions}
PAC reasoning controls the extra error from switching from the thinking model $f$ to the non-thinking model $\tilde{f}$, i.e., the performance loss caused by accelerating the reasoning procedure.
For this reason, we define loss relative to the thinking model, not relative to the gold answer.
We use two loss functions in experiments:
(1) \textbf{Semantic cosine distance} in an embedding space.
Given reference output $y_i=f(x_i)$ and PAC output $\hat{y}_i=\hat{f}(x_i)$, let $v_{y_i}$ and $v_{\hat{y}_i}$ be their embeddings. The semantic loss is:
\begin{equation}
\label{eq:semantic_loss}
\ell(y_i, \hat{y}_i) = 1 - \frac{v_{y_i} \cdot v_{\hat{y}_i}}{\|v_{y_i}\| \|v_{\hat{y}_i}\|}.
\end{equation}
We use ``Qwen3-Embedding-4B'' as the embedding model~\citep{yang2025qwen3}.
(2) \textbf{Binary 0--1 loss} for verifiable tasks.
It measures the accuracy drop when the expert answer is correct:
\begin{equation}
\label{eq:binary_loss}
\ell(y_i, \hat{y}_i) = \ell(y_i, \hat{y}_i \mid y_i^{gold}) = \mathbbold{1}\{ \hat{y}_i \neq y_i^{gold}\} \mathbbold{1}\{y_i=y_i^{gold}\}.
\end{equation}
Here $y_i^{gold}$ is the gold answer for the problem $x_i$.
We discuss the choice of loss function in Appendix~\ref{sec:choice_loss_functions}.

% including a semantic cosine similarity between $y$ and $\tilde{y}$ and an 0-1 binary loss function.  
% For the reasoning tasks, i.e, MATH-500 and ZebraLogic, and the alignment task, i.e., Arena-Hard. 
% For the loss function, a semantic distance metric is more appropriate than simple lexical overlap for generative tasks.
% Given the reference output $y_i = f(x_i)$ and a candidate output $\tilde{y}_i = \tilde{f}(x_i)$, we compute their respective embeddings, $v_y = E(y_i)$ and $v_{\tilde{y}} = E(\tilde{y}_i)$.
% The loss $\ell(y_i, \tilde{y}_i)$ is then defined as the cosine distance between these vectors:

% This loss, bounded in $[0, 2]$, captures the semantic discrepancy between the candidate and reference outputs.
% By applying PAC reasoning with these components, one can make rigorous statements such as: "With 99\% confidence, the total semantic error of our 10-step distilled model, relative to the 250-step reference model, is no more than 5\%."
% This provides a principled and efficient methodology for navigating the speed-quality trade-off in generative model acceleration.

% For the semantic entropy, we choose ``Qwen/Qwen3-Embedding-4B'' as the embedding model.

\textbf{Metrics}
To evaluate the effectiveness of the PAC reasoning in optimizing budget usage, we define two metrics: \textbf{Expert Call Percentage} (ECP) and \textbf{Saved Token Percentage} (STP). These metrics are formally defined as follows:
\begin{align*}
    \text{ECP} &:= \frac{|\{i : U_i \geq \hat{u}, \, i \in \mathcal{I}_{\text{test}}\}|}{N}\times100\%, \\
    \text{STP} &:= \frac{1}{N} \sum_{i \in \mathcal{I}_{\text{test}}} \left(1-\frac{l_{\tilde{y}_i} + \mathbbold{1}\{U_i \geq \hat{u}\} l_{y_i}}{l_{y_i}}\right)\times100\%,
\end{align*}
where $l_{\tilde{y}_i}$ and $l_{y_i}$ represent the number of tokens in the candidate answer $\tilde{y}_i$ and the reference answer $y_i$, respectively. The ECP measures the proportion of test cases requiring expert intervention. At the same time, the STP quantifies the token efficiency by comparing the token counts of candidate and reference answers, accounting for cases where expert calls are triggered. We also report accuracy of our proposed method in Appendix~\ref{sec:acc}.

We repeat each experiment 100 times and report the mean and standard deviation of the budget savings. We fix $\alpha = 0.05$ throughout all experiments while varying $\epsilon$, and set the sampling weight $\pi = \pi_i = 0.5$ for each \(i\in \mathcal{I}_{cal}\) and the sample size $m = n \times \frac{1}{\pi}$. 

\begin{figure*}[t]
    \centering
    \subcaptionbox{MATH-500}{
        \includegraphics[width=0.3\linewidth]{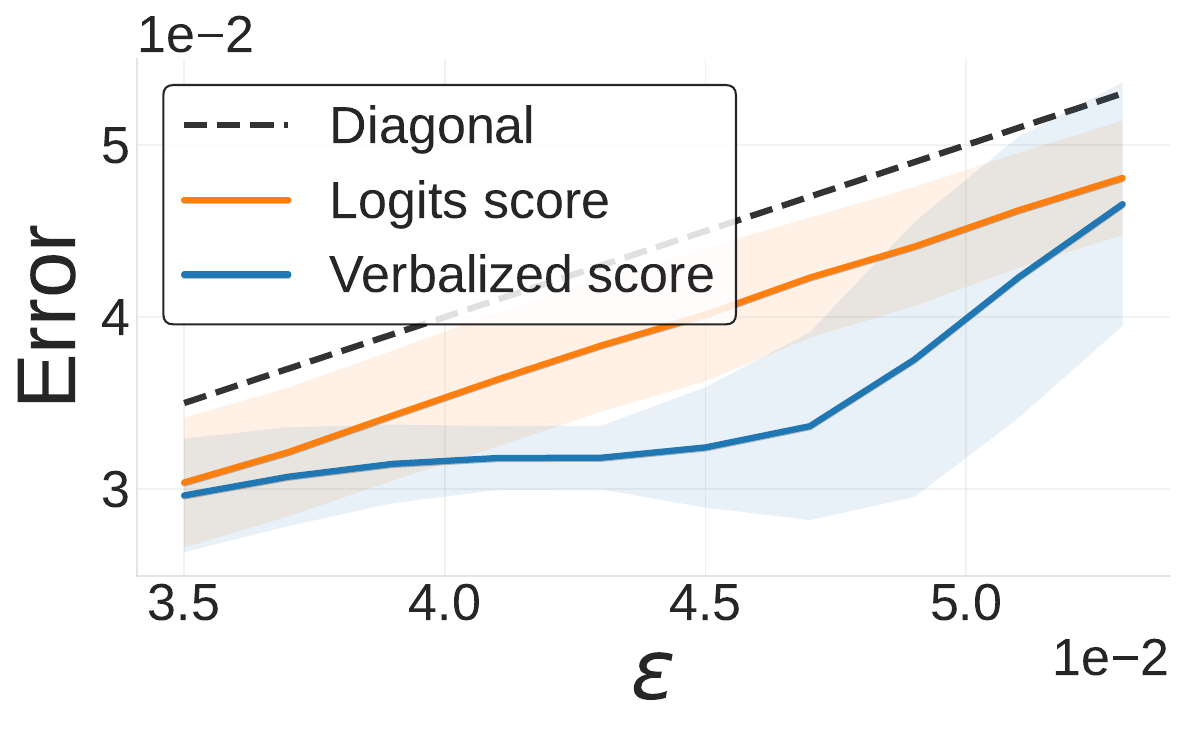}
         \includegraphics[width=0.3\linewidth]{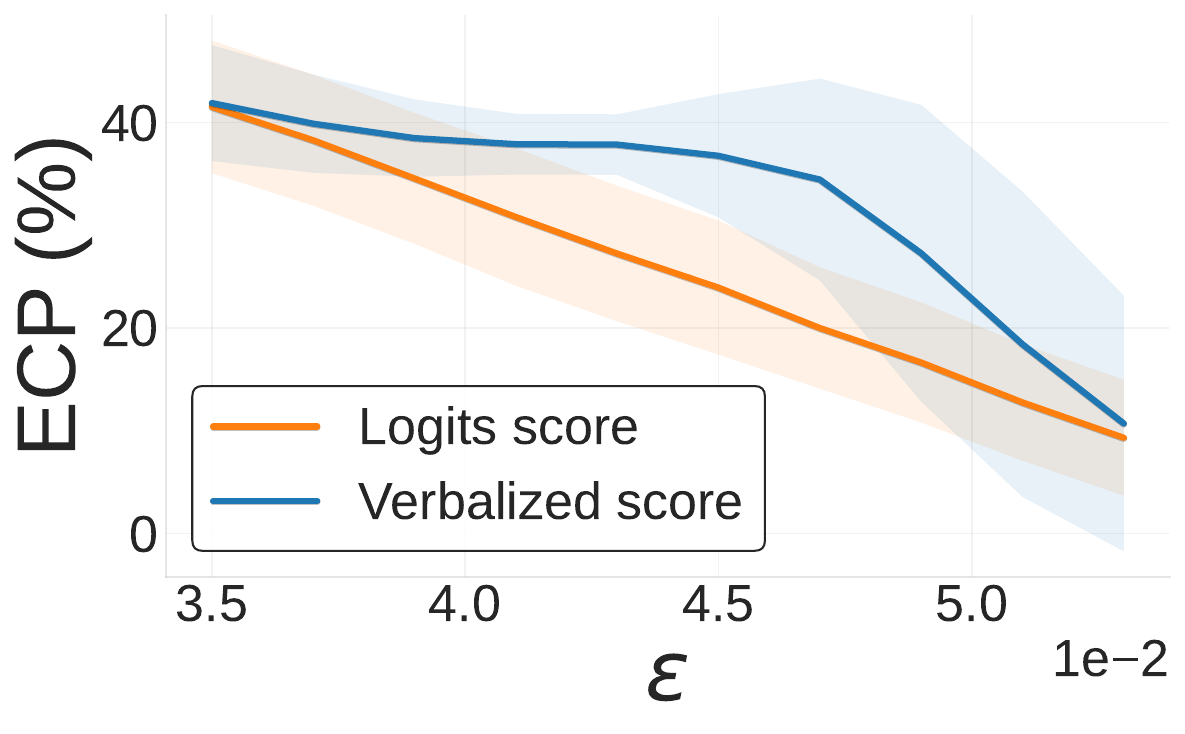}
        \includegraphics[width=0.3\linewidth]{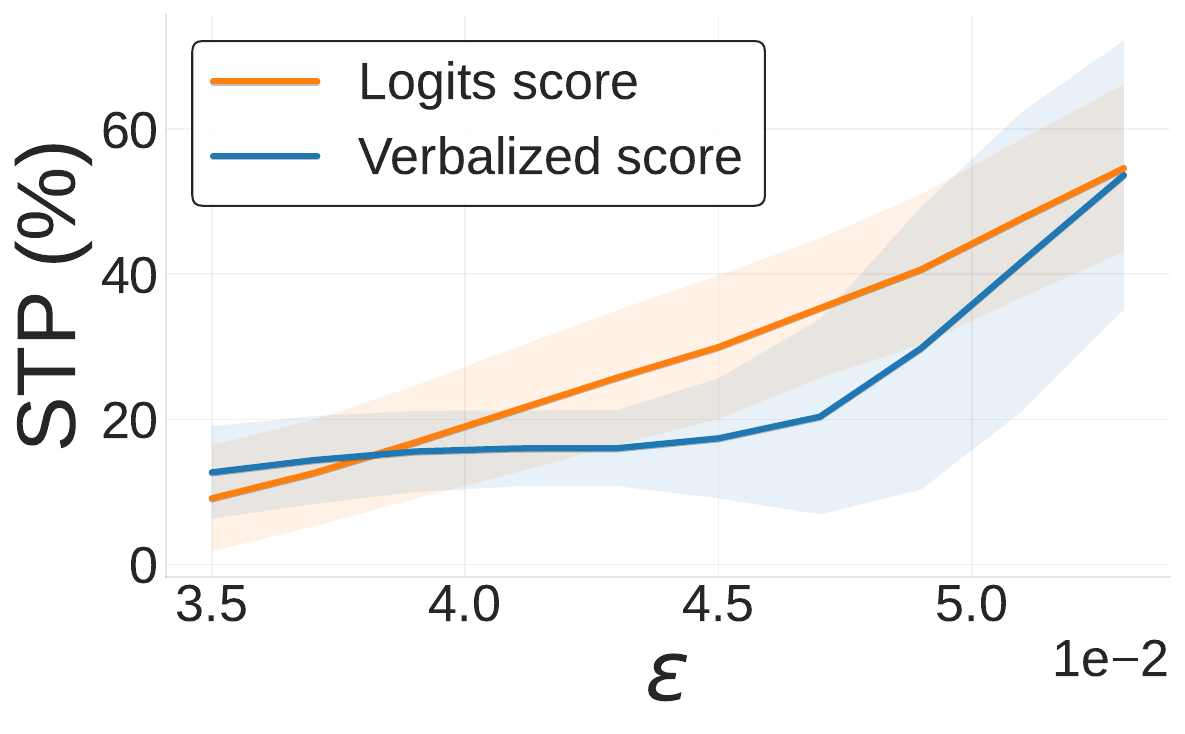}
    }
    \\
    \subcaptionbox{ZebraLogic}{
        \includegraphics[width=0.3\linewidth]{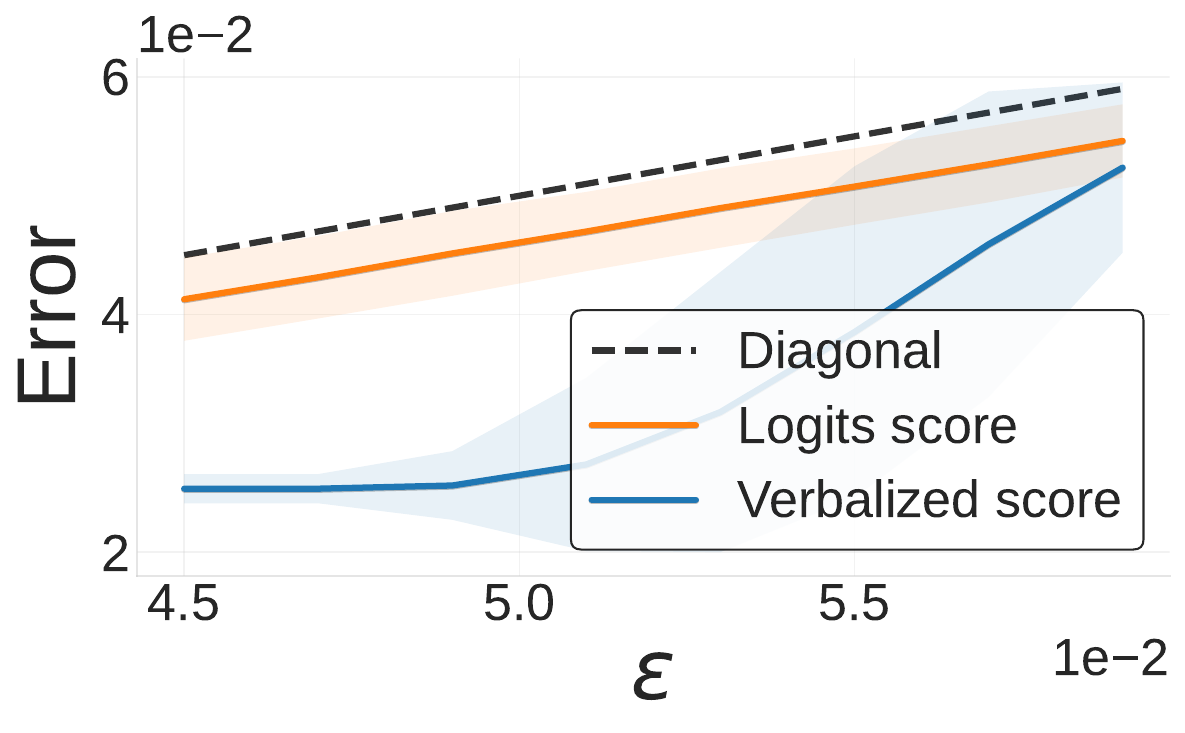}
        \includegraphics[width=0.3\linewidth]{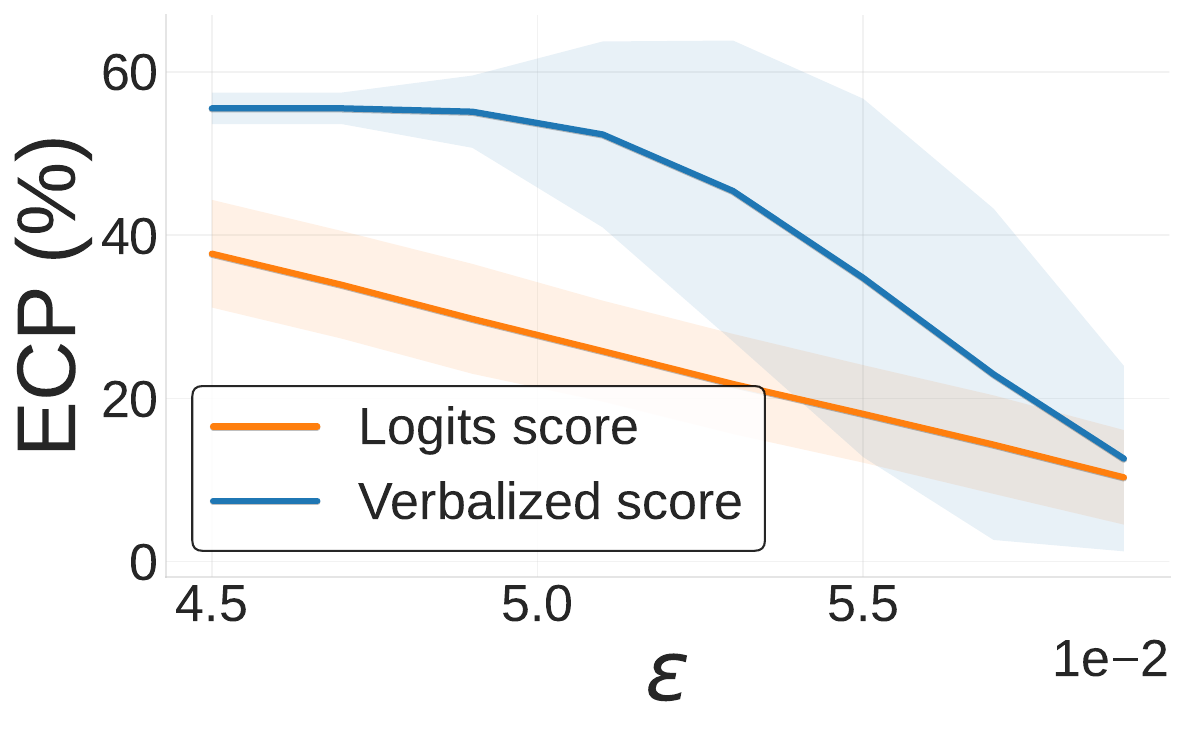}
        \includegraphics[width=0.3\linewidth]{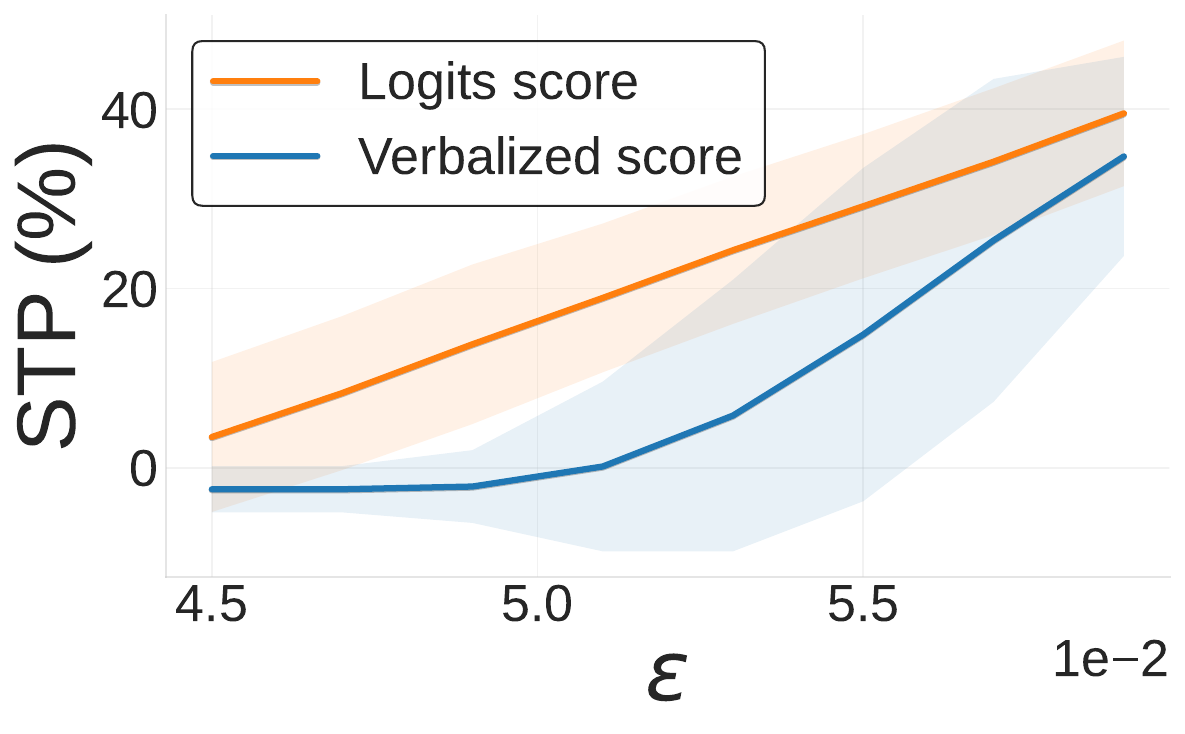}
    }
        \\
    \subcaptionbox{Arena-Hard}{
        \includegraphics[width=0.3\linewidth]{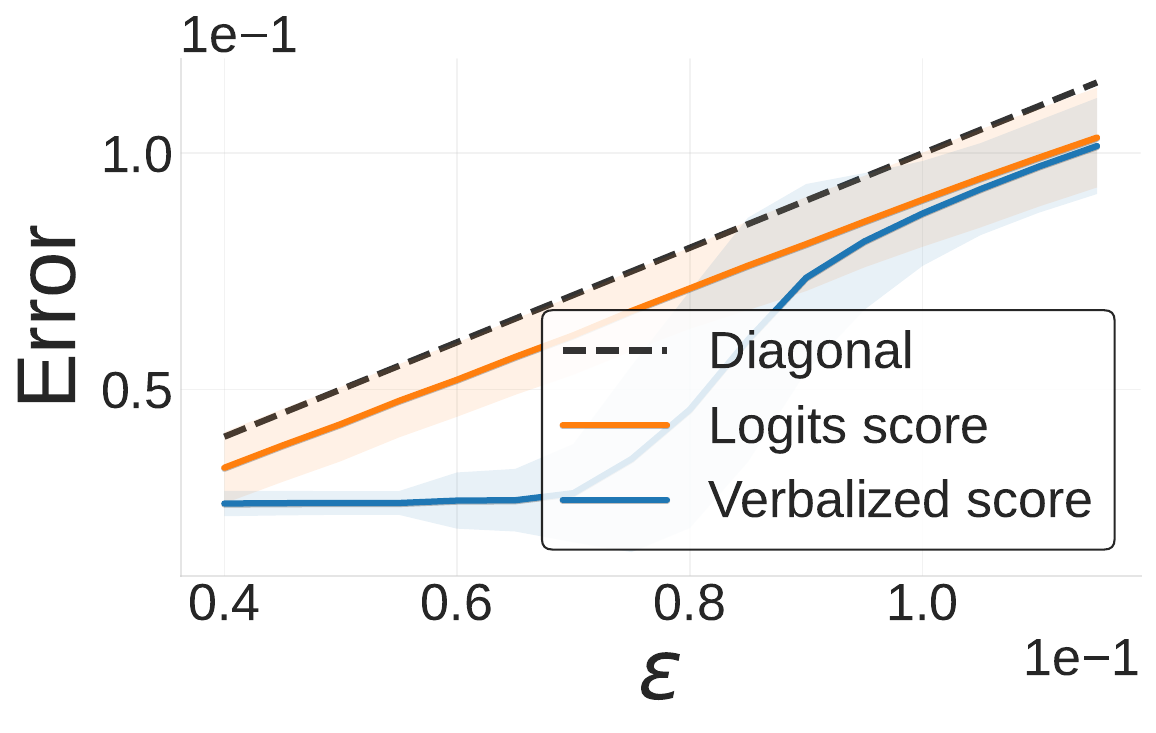}
        \includegraphics[width=0.3\linewidth]{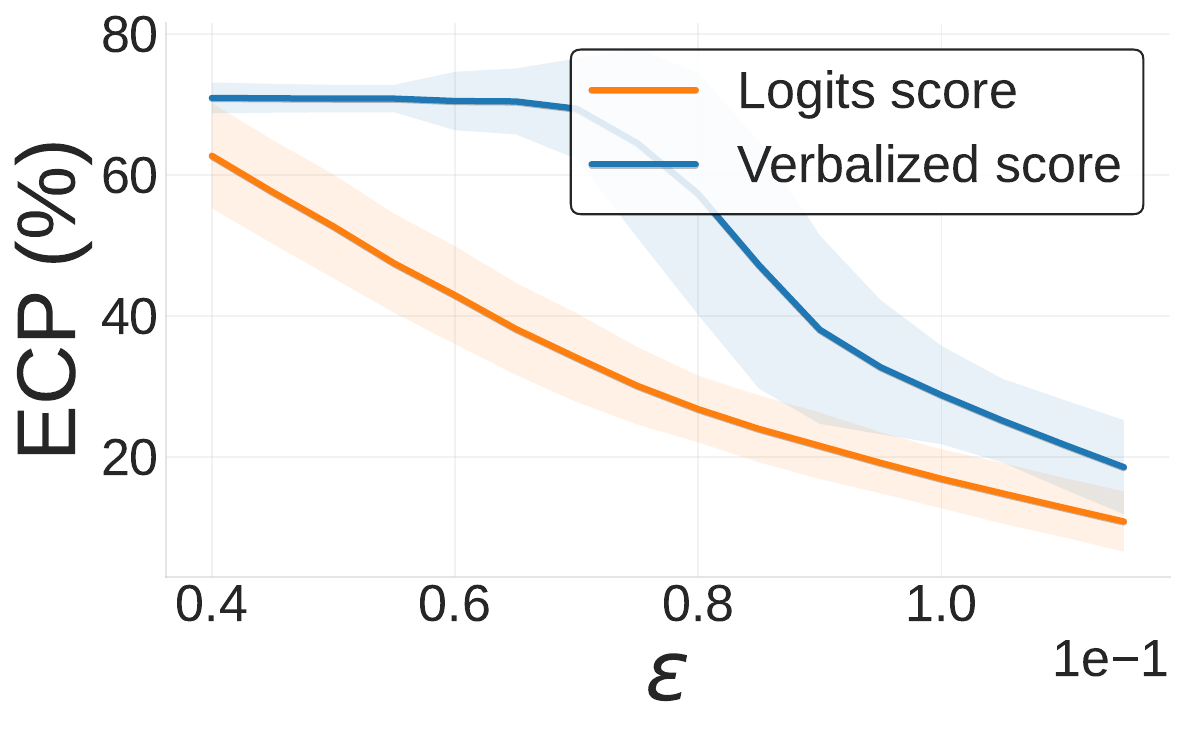}
        \includegraphics[width=0.3\linewidth]{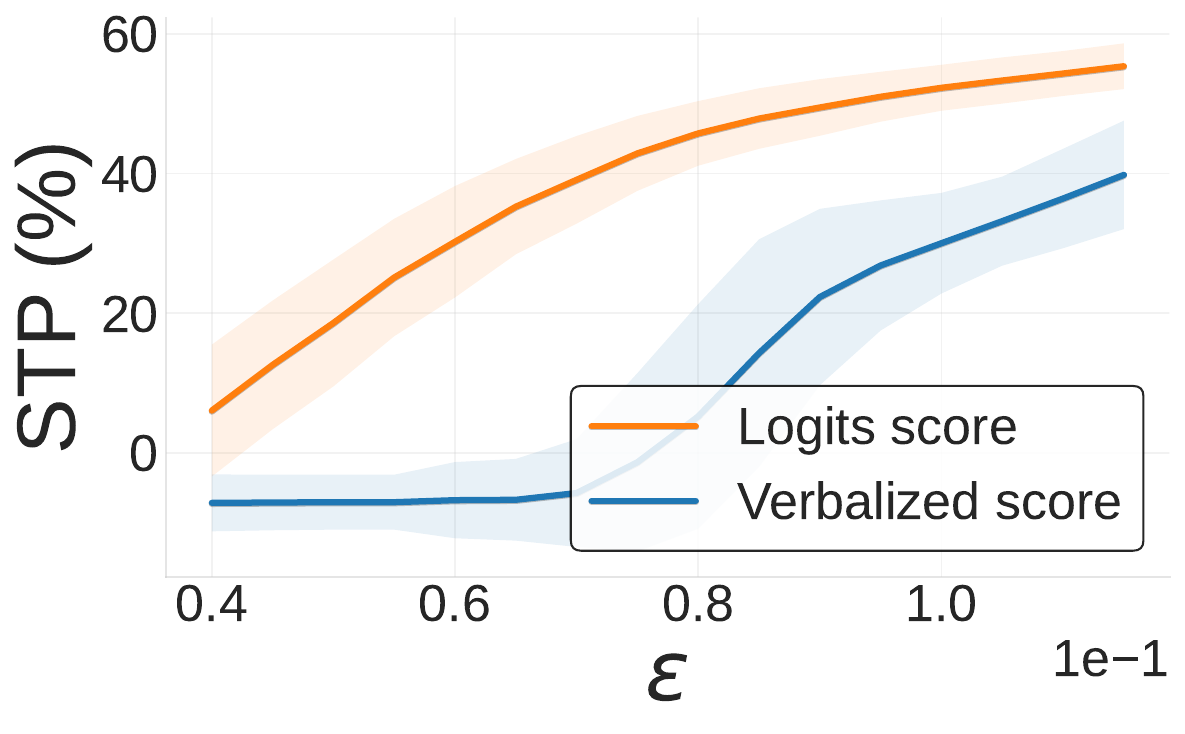}
    }
    \caption{\textbf{Error control, ECP and STP of PAC reasoning} for semantic loss across three benchmarks at $\alpha = 0.05$. 
    Uncertainty score includes the logits-based score and the verbalized score.
    All experiments are repeated 100 times, and the shaded areas represent standard deviations.
    }\label{fig:semantic_results}
    \vspace{-10pt}
\end{figure*}

\subsection{Results}\label{sec:exp_results}

% We summarize our experimental results from three complementary perspectives:

\textbf{PAC reasoning improves the efficiency and controls the performance loss.} 
Figure~\ref{fig:semantic_results} presents the reults of PAC reasoning under the semantic loss. The results show that PAC reasoning consistently ensures validity of performance loss across all the benchmarks: the empirical performance loss remains below the target risk level $\epsilon$ while achieving substantial budget savings. 
For instance, on Arena-Hard with $\epsilon = 0.08$ for the logits uncertainty score, the average empirical performance loss is approximately $0.06$, the ECP is about $20\%$, and the STP is around $40\%$.
Under the 0-1 binary loss (see Table~\ref{table:01_results} in Appendix~\ref{sec:binary_result}), PAC reasoning also maintains error rates within the target risk level and saves computational budget. For example, for the MATH-500 dataset, PAC reasoning saves ECP by $22.50\%$ and STP by $23.13\%$. 
The results also align with the one using Llama-based model shown in Appendix~\ref{app:exp_other_llm}.
In summary, PAC reasoning bounds the performance loss within the target risk level and significantly improves inference efficiency.

% In contrast, applying a naive fixed-threshold strategy (e.g., $U_i \geq 0.05$) leads to volatile outcomes across datasets, and relying solely on the non-thinking model results in errors far above the nominal risk. For example, on ZebraLogic, the naive threshold method even results in a higher computational cost than directly solving all queries with the thinking model.

\textbf{Logits-based uncertainty score is more stable.} From the results in Figure~\ref{fig:semantic_results} and Table~\ref{table:01_results}, the logits-based uncertainty score consistently shows smoother and more stable behavior, with lower variance in both ECP and STP\@.
In contrast, the verbalized uncertainty score exhibits larger fluctuations due to its sparse and clustered distribution. For instance, on ZebraLogic (see Table~\ref{table:01_results}), the standard deviations of ECP and STP under the verbalized score are $20.68\%$ and $17.20\%$, considerably higher than the corresponding $7.47\%$ and $9.90\%$ values for the logits-based score. 
Similar patterns can also be observed in Figure~\ref{fig:semantic_results}, where the variance of the verbalized score is consistently higher than that of the logits-based score across different settings.
This difference affects efficiency and variance, but it does not affect the validity of error control for PAC reasoning.
% Moreover, in Figure~\ref{fig:semantic_results}, the verbalized score exhibits conservative error control and fails to improve inference efficiency when $\epsilon$ is small. 
Therefore, although the verbalized score occasionally achieves stronger risk control or higher savings, its calibration is less reliable, leading to less consistent performance.

\begin{table*}[t]
\centering
\caption{\textbf{PAC reasoning enables safe and effective inference-cost reduction compared to heuristic baselines} under $\epsilon=0.03$. Cells with green background indicates valid risk control. Experiments are conducted on MATH-500 using the binary loss. We use the logits-based uncertainty score for Naive control and PAC reasoning.}
\label{table:binary_results_more_methods}
\setlength{\tabcolsep}{2mm}{
\resizebox{0.9\textwidth}{!}{
\begin{tabular}{llccccc}
\toprule
Metric
& \multicolumn{1}{c}{Naive control} 
& \multicolumn{1}{c}{PAC reasoning} 
& \multicolumn{1}{c}{Router} 
& \multicolumn{1}{c}{CoD} 
& \multicolumn{1}{c}{NoThinking} \\
\midrule

\multirow{1}{*}{Binary Loss} 
& $0.0351 \pm 0.0094$ 
& \colorbox{green!10}{$0.0206 \pm 0.0126$ }
& $0.0435 ±\pm 0.0107$
& $0.3548 \pm 0.0604$ 
& $0.4445 \pm 0.0583$ \\

\multirow{1}{*}{STP (\%) $\uparrow$} 
& $61.53 \pm 6.06$ 
& $37.61 \pm 23.19$ 
& $74.73 \pm 2.59$
& $-1.33 \pm 0.82$ 
& $-4.51 \pm 1.00$ \\

\multirow{1}{*}{Pass@1} 
& $0.93 \pm 0.25$ 
& $\boldsymbol{0.95 \pm 0.23}$
& $0.93\pm 0.26 $
& $0.61 \pm 0.49$ 
& $0.50 \pm 0.50$ \\
\bottomrule
\end{tabular}}}
\end{table*}

% \textbf{Comparison with Baseline Methods.}
% \label{sec:more_methods}
% We compare PAC reasoning with several efficiency-oriented baselines, including Naive control, Router~\cite{ong2025routellm}, Chain of Draft (CoD)~\citep{xu2025chain}, and NoThinking~\citep{ma2025reasoning}. 
% These methods reduce inference cost via heuristic design choices, but lack explicit theoretical guarantees on performance loss (see Appendix~\ref{app:exp_other_llm}).
% Experiments are conducted on MATH-500 using the binary loss following Appendix~\ref{app:experimental_details}. 
% We use the logits-based uncertainty score for Naive control and PAC reasoning.

\paragraph{PAC reasoning enables safe and effective inference-cost reduction compared to heuristic baselines.}
Table~\ref{table:binary_results_more_methods} summarizes the results under a target tolerance of $\epsilon=0.03$.
PAC reasoning is the \textbf{only} method that consistently satisfies the target error tolerance, which is a direct consequence of its explicit PAC-style guarantees on performance loss.
In contrast, the naive control baseline and other heuristic methods fail to reliably meet the risk constraint, despite in some cases achieving higher token savings.
In particular, learned routing achieves the highest saved token percentage but does not explicitly control downstream prediction risk, resulting in empirical error that exceeds the target tolerance.
Similarly, CoD and NoThinking substantially reduce reasoning cost, but incur large performance degradation under the binary loss, reflecting the absence of mechanisms to control or calibrate the induced error.
Overall, these results demonstrate that \textbf{PAC reasoning enables safe and effective inference-cost reduction} by explicitly controlling performance loss with high probability, clearly distinguishing it from existing heuristic approaches.

\section{Discussion}
In this section, we study how robust PAC reasoning is to practical design.
We first consider the reliability of the uncertainty score using expected calibration error (ECE).
Then we study the ablation on varying the uncertainty score and the calibration set size.
Across these settings, PAC reasoning keeps valid error control and shows stable efficiency.

\textbf{Quality of uncertainty score}
To assess the reliability of uncertainty scores, we compute the ECE, which measures the difference between predicted confidence and actual accuracy.
Our analysis in Appendix~\ref{sec:ece} reveals that logits-based uncertainty scores exhibit lower ECE values compared to verbalized scores on most benchmarks.
For example, on MATH-500, the logits-based score achieves an ECE of 0.0450, while the verbalized score has an ECE of 0.0634.
Despite the differences in calibration quality, both approaches successfully maintain the PAC guarantee, demonstrating the robustness of our framework to imperfect uncertainty quantification.
It means that perfect calibration is not necessary for PAC reasoning.
% , as long as the uncertainty score provides some signal for disagreement likelihood.

\textbf{Calibration set size}
The size of the calibration set presents a practical trade-off between calibration cost and guarantee of tightness.
Our comprehensive ablation study in Appendix~\ref{sec:size_cal} examines calibration ratios ranging from 10\% to 90\% of the total dataset.
The results show that error control is remarkably stable across different calibration sizes, with average loss remaining within the target tolerance even for small calibration sets.
For instance, on MATH-500 with logits-based score, the average loss varies only from 0.0331 to 0.0344 as the calibration ratio increases from 10\% to 90\%.
The STP also remains relatively stable, ranging from 14.39\% to 18.52\%.
This demonstrates that PAC reasoning can be deployed with modest calibration costs while still providing meaningful efficiency gains and statistical guarantees.

\textbf{Alternative uncertainty scores}
Our framework is flexible and can accommodate any scoring function that correlates with the disagreement rate.
An alternative is reward model-based scoring, where a trained reward model evaluates the quality of the non-thinking model's output.
We conduct experiments in Appendix~\ref{sec:reward} using reward scores on MATH-500 and ZebraLogic, demonstrating that PAC reasoning maintains valid error control even with this alternative scoring method.
For example, on MATH-500 with $\epsilon = 0.045$, reward-based PAC reasoning achieves an error of 0.0362 with 22.43\% token savings.
This flexibility is particularly valuable in scenarios where uncertainty scores are unreliable or unavailable.
% Future work could explore other scoring functions, such as ensemble disagreement or semantic consistency measures.
% }

% \paragraph{Range of performance loss control} Actually, the maximum error control for performance loss is bound by the error risk of using only ``No thinking'', i.e., $\hat u = 1$. 

\section{Related work}\label{sec:related_work}
Our work intersects efficiency improvement for reasoning models and the distribution-free inference for risk control of its performance loss with confidence. 

\paragraph{Efficiency improvement for reasoning models}
Large Reasoning Models (LRMs) have recently become a research hotspot due to their outstanding performance in handling complex tasks~\citep{yue2025dont}. 
However, the problem of overthinking has emerged~\citep{sui2025stop,chen2025reasoning}, where LRMs tend to engage in unnecessarily long reasoning chains and redundant computational steps. 
This increases latency and cost, and may even cause error accumulation through extended reasoning paths. 
For example, in mathematical problems, LRMs may explore irrelevant solution branches or perform excessive intermediate calculations that do not contribute to the final answer.
To alleviate this issue, recent studies propose efficient reasoning strategies such as early exit of thinking~\citep{yang2025dynamic, jiang2025flashthink} and adaptive switching between ``fast'' and ``slow'' thinking modes to reduce reasoning tokens and avoid redundant steps~\citep{cheng2025think, chung2025thinker, fang2025thinkless, li2025dynamicmind, liang2025thinkswitcher, ma2025reasoning, paliotta2025thinking, pan2024dynathink, pan2024dynathinka, xiao2025fastslow, yao2024hdflow, yong2025think, zhang2025avengers, zhang2025gpt5}. 
Despite their empirical effectiveness, these techniques lack theoretical guarantees on the performance loss when using the non-thinking mode. 
We fill this gap by introducing a PAC-based reasoning that provides statistically guaranteed performance loss for efficient reasoning.

\paragraph{PAC-style error rate control}
Modern PAC-style error rate control~\citep{valiant1984theory} is built on distribution-free risk control.
Conformal prediction provides finite-sample coverage guarantees under exchangeability~\citep{angelopoulos2025conformal}.
This idea extends to controlling expected loss on prediction sets, enabling risk control for tasks such as multi-label classification and image segmentation~\citep{bates2021distributionfree}.
Risk control has also been reframed as multiple hypothesis testing, which enables simultaneous control of multiple risks, including false discovery rate~\citep{angelopoulos2025learn}.
These methods have been applied to various domains, including conformal language modeling for text generation with guaranteed quality~\citep{quach2024conformal}, and automatically adaptive conformal risk control that adjusts to the difficulty of test samples~\citep{blot2025automatically}.
Localized adaptive risk control enables online calibration with conditional guarantees~\citep{zecchin2024localized}.
% Recently, distribution-free risk control has been applied to data labeling, producing labels that are probably approximately correct~\citep{candes2025probably}.
Despite this progress, PAC-style risk control has not addressed efficiency control in reasoning models: the routing problem of choosing between a thinking model and a non-thinking model under a user-specified loss tolerance remains open.
We fill this gap by bringing PAC-style risk control to LRM routing, learning a threshold that reduces computation while providing a distribution-free guarantee on performance loss.

\paragraph{Concurrent submissions.}
We also study two extensions of PAC-efficient reasoning in concurrent anonymous ICML submissions
\citep{ConcurrentAnytimeSafePAC2026, ConcurrentGCPAC2026}.
One companion work studies \textit{anytime-safe} PAC-efficient reasoning for streaming deployment, using time-uniform bounds and optional drift detection to keep validity under arbitrary stopping rules without calibration.
The other companion work studies \textit{group-conditional} PAC-efficient reasoning, which learns a separate threshold for each predefined group and aims to control risk within each group.
Anonymized PDFs of these concurrent submissions are included in the supplementary material.
The present submission is self-contained; the concurrent submissions are optional companion works.

\section{Conclusion}\label{sec:conclusion}
We propose PAC reasoning, which is designed to provide rigorous, theoretically grounded, and practical efficiency improvement for reasoning models while maintaining probabilistic correctness guarantees.
Our approach constructs a composite model that selectively uses either an expert model or a candidate model based on a constructed confidence bound.
The PAC reasoning contributes by delivering statistical assurances for model performance and demonstrating that it reliably achieves the specified error rate with probability.
We validate our method through extensive experiments on real-world reasoning tasks, showing it can significantly reduce computational costs while maintaining user-specified error tolerances with confidence.
Future research will focus on developing more advanced uncertainty estimation techniques, exploring tighter theoretical bounds, and broadening the method's applicability to other large language model efficiency-improvement strategies.

\paragraph{Future work.}
Our current framework uses a single threshold for all inputs, which may be suboptimal when tasks require different levels of model capability.
A promising direction is \textit{multi-level PAC reasoning}, which goes beyond binary routing to multiple model tiers by learning thresholds
$u_1 < u_2 < \ldots < u_k$ that route inputs based on uncertainty levels, with PAC guarantees that bound cumulative performance loss across tiers. We discuss the multi-level reasoning in Appendix~\ref{sec:multipac}.

\section*{Impact Statement}

This paper presents work whose goal is to advance the field of Machine
Learning. There are many potential societal consequences of our work, none
which we feel must be specifically highlighted here.

% \newpage
\bibliographystyle{icml2026}
\bibliography{PAC_Model_Acceleration_Proposal,anonymous}

\newpage
\appendix
\onecolumn

\section{Finite-Sample Upper Confidence Bounds}\label{app:bounded_loss_alg}
This section presents an alternative algorithm for computing confidence bounds that provides strict finite-sample guarantees under bounded loss assumptions.
This approach leverages concentration inequalities such as the Hoeffding inequality or betting-based confidence intervals~\citep{bentkus2004hoeffdings, hao2019bootstrapping, hoeffding1994probability, learned-miller2020new, ramdas2022admissible, waudby-smith2021confidence, waudby-smith2024estimating} to achieve tighter bounds compared to the asymptotic normal approximation used in Algorithm~\ref{alg:compute_bound}.

\begin{algorithm}[ht]
\caption{Compute Confidence Bound $\hat{L}_u(\alpha)$ Based on Hoeffding Inequality}\label{alg:compute_bound_finite_sample}
\textbf{Input:} Calibration set $\{(x_i, y_i)\}_{i=1}^n$, model with thinking $f$, model without thinking $\tilde{f}$, uncertainty scores $\{U_i\}_{i=1}^n$, sampling weights $\{\pi_i\}_{i=1}^n$, sampling size $m$, confidence level $\alpha$, loss upper bound $B > 0$ \\
\textbf{Output:} The finite-sample upper confidence bound $\hat{L}_u(\alpha)$.
\begin{algorithmic}[1]
\STATE Initialize an empty list of samples $\mathcal{Z} = []$.
\STATE Let $\tilde{y}_i = \tilde{f}(x_i)$ for all $i=1, \dots, n$.
\FOR{$j = 1, \dots, m$}
    \STATE Sample an index $i_j \sim \text{Unif}(\{1, \dots, n\})$.
    \STATE Sample a Bernoulli random variable $\xi_{i_j} \sim \text{Bern}(\pi_{i_j})$.
    \IF{$\xi_{i_j} = 1$}
        \STATE Query the true label $y_{i_j}$ and compute $Z_j = \min\left(\frac{\ell(y_{i_j}, \tilde{y}_{i_j})}{\pi_{i_j}}, \frac{B}{\pi_{i_j}}\right)$.
    \ELSE
        \STATE $Z_j = 0$.
    \ENDIF
    \STATE Append $Z_j$ to $\mathcal{Z}$.
\ENDFOR
\STATE For a threshold $u$, define the variables $Z_j(u) = Z_j \cdot \mathbf{1}\{U_{i_j} \le u\}$ for $j=1, \dots, m$.
\STATE $\hat{\mu}_Z(u) \leftarrow \frac{1}{m} \sum_{j=1}^m Z_j(u)$
\STATE $R \leftarrow \frac{B}{\min_i \pi_i}$
\STATE $\delta_{\text{HB}}(\alpha) \leftarrow \sqrt{\frac{R^2 \log(2/\alpha)}{2m}}$
\STATE \textbf{Return} $\hat{L}_u(\alpha) \leftarrow \hat{\mu}_Z(u) + \delta_{\text{HB}}(\alpha)$.
\end{algorithmic}
\end{algorithm}

\section{Validity of CLT-based upper confidence bound}\label{sec:clt_ucb_validity}
We show that a CLT-based upper confidence bound computed on the calibration set satisfies Assumption~\ref{assump:validity} asymptotically.

\begin{proposition}[Asymptotic validity of UCB baed on CLT]\label{prop:clt_ucb_validity}
Fix a threshold $u$ and let $Z_j(u)$ be the i.i.d. random variables defined in Algorithm~\ref{alg:compute_bound} for $j=1,\dots,m$, with mean $L(u)$ and variance $\sigma_Z^2(u)>0$.
Let $\widehat \mu_Z(u)=m^{-1}\sum_{j=1}^m Z_j(u)$ and $\widehat \sigma^2_Z(u)=(m-1)^{-1}\sum_{j=1}^m\big(Z_j(u)-\widehat \mu_Z(u)\big)^2$.
Define the UCB based on CLT 
\[
\widehat L^{\mathrm{CLT}}_u(\alpha)\;:=\;\widehat \mu_Z(u)\; +\; z_{1-\alpha}\,\sqrt{\widehat \sigma^2_Z(u)/m},
\]
where $z_{1-\alpha}$ is the $(1-\alpha)$-quantile of the standard normal distribution.
Then
\[
\liminf_{m\to\infty}\,\mathbb{P}\big( L(u)\le \widehat L^{\mathrm{CLT}}_u(\alpha)\big)\;\ge\;1-\alpha.
\]
\end{proposition}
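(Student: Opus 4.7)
The plan is to reduce the statement to a standard studentized CLT argument. The roadmap has three pieces: first establish a classical CLT for the sample mean $\widehat\mu_Z(u)$, then establish consistency of the sample variance $\widehat\sigma_Z^2(u)$, combine the two via Slutsky's theorem to obtain a studentized limit, and finally translate the resulting distributional statement into the one-sided coverage bound for $\widehat L^{\mathrm{CLT}}_u(\alpha)$.

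First, since $Z_1(u),\dots,Z_m(u)$ are i.i.d.\ with mean $L(u)$ and finite positive variance $\sigma_Z^2(u)$, the Lindeberg-Lévy CLT yields $\sqrt{m}\,(\widehat\mu_Z(u)-L(u))/\sigma_Z(u)\xrightarrow{d}\mathcal{N}(0,1)$. Second, the weak law of large numbers applied to $Z_j(u)$ and $Z_j(u)^2$, together with the continuous mapping theorem, gives $\widehat\sigma_Z^2(u)\xrightarrow{p}\sigma_Z^2(u)$, and therefore $\sigma_Z(u)/\widehat\sigma_Z(u)\xrightarrow{p} 1$ since $\sigma_Z^2(u)>0$. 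Slutsky's theorem then yields
$$T_m \;:=\; \frac{\sqrt{m}\,(\widehat\mu_Z(u)-L(u))}{\widehat\sigma_Z(u)} \;=\; \frac{\sigma_Z(u)}{\widehat\sigma_Z(u)}\cdot\frac{\sqrt{m}\,(\widehat\mu_Z(u)-L(u))}{\sigma_Z(u)} \;\xrightarrow{d}\; \mathcal{N}(0,1).$$

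The concluding step is to rewrite the event of interest. A direct rearrangement shows $\{L(u)\le\widehat L^{\mathrm{CLT}}_u(\alpha)\}=\{-T_m\le z_{1-\alpha}\}$. Since $-T_m\xrightarrow{d}\mathcal{N}(0,1)$ by symmetry of the normal law, and the standard normal CDF is continuous at $z_{1-\alpha}$, the Portmanteau theorem gives $\mathbb{P}(-T_m\le z_{1-\alpha})\to \Phi(z_{1-\alpha})=1-\alpha$, which delivers the claimed $\liminf$ bound (in fact a full limit).

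The main obstacle is not any deep technical point but rather verifying the implicit moment condition: the proposition only states $\sigma_Z^2(u)>0$, while the CLT additionally requires $\sigma_Z^2(u)<\infty$ and the sample-variance consistency step requires $\mathbb{E}[Z_j(u)^2]<\infty$. These follow from the construction: the losses used throughout the paper (for example $0$-$1$ loss or cosine distance) are bounded, and provided the sampling weights $\pi_i$ are bounded away from zero, the importance-weighted variable $Z_j(u)=\ell(y_{i_j},\tilde y_{i_j})(\xi_{i_j}/\pi_{i_j})\mathbf{1}\{U_{i_j}\le u\}$ is bounded almost surely. Under these mild regularity conditions, the proof is essentially a direct invocation of the classical limit theorems in the order outlined above.
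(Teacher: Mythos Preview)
Your proposal is correct and follows essentially the same approach as the paper's own proof: apply the CLT to the i.i.d.\ sample, invoke consistency of the sample variance via the weak law of large numbers, combine via Slutsky to obtain the studentized limit, and then rewrite the coverage event to read off the asymptotic $1-\alpha$ probability. Your version is in fact slightly more careful than the paper's, as you invoke the Portmanteau theorem explicitly for the final step and flag the implicit finite-second-moment condition that underlies the argument.
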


\begin{proof}
By the classical Lindeberg--Feller central limit theorem, 
\begin{equation*}
\frac{\sqrt{m}\,\big(\widehat \mu_Z(u)-L(u)\big)}{\sigma_Z(u)}\rightarrow \mathcal{N}(0,1)
\end{equation*}
in distribution
as $m\to\infty$ because the variables are i.i.d.
Since $\widehat \sigma_Z(u)\xrightarrow{p}\sigma_Z^2(u)$ by the weak law of large numbers, Slutsky's theorem yields
\begin{equation*}
\frac{\sqrt{m}\,\big(\widehat \mu_Z(u)-L(u)\big)}{\sqrt{\widehat \sigma_Z(u)}}\rightarrow \mathcal{N}(0,1).
\end{equation*}
Therefore
\begin{equation*}
\mathbb{P}\left( \frac{L(u) - \widehat \mu_Z(u)}{\sqrt{\widehat V_Z(u)/m}}\leq z_{1-\alpha}\right)\to 1-\alpha.
\end{equation*}
Equivalently, define $d_m := (L(u) - \widehat \mu_Z(u))\big/\sqrt{\widehat V_Z(u)/m}$ and observe that
\begin{equation*}
\{ d_m \leq z_{1-\alpha} \} = \Big\{ L(u) \leq \widehat \mu_Z(u) + z_{1-\alpha} \sqrt{\widehat V_Z(u)/m} \Big\}.
\end{equation*}
Hence,
\begin{equation*}
\liminf_{m\to\infty}\,\mathbb{P}\big( L(u)\leq \widehat L^{\mathrm{CLT}}_u(\alpha)\big)\;\ge\;1-\alpha.
\end{equation*}
\end{proof}

\section{Proof of theorem~\ref{thm:pac_guarantee}}\label{sec:proof_pac_guarantee}

\begin{proof}
Let
\[
u^\star := \inf\{u\in\Lambda: R(u)>\epsilon\}.
\]

As $R(u)$ is non-decreasing, it holds that
\[
R(u)\le \epsilon \text{ for all } u\le u^\star,
\]
and
\[
R(u^\star)>\epsilon.
\]

If $\hat u>u^\star$, then
\[
\widehat L_{u^\star}(\alpha)\le \epsilon < R(u^\star),
\]
which contradicts Assumption~\ref{assump:validity} except with probability at most $\alpha$.

Therefore,
\[
\mathbb{P}(\hat u \le u^\star)\ge 1-\alpha,
\]
and because $R(u)$ is non-decreasing,
\[
R(\hat u)\le \epsilon \text{ on this event.}
\]
\end{proof}

\section{Proof of Theorem~\ref{thm:empirical_test_pac_appendix}}\label{sec:proof_of_empirical_test_pac_appendix}
\subsection{Notation recalling and lemma}
We present PAC guarantees for the empirical test risk under precise Hoeffding conditions, making explicit the roles of calibration and test randomness.
For any threshold $u$, the deployment rule $T_u$ predicts with the expert when $U(x)\ge u$ and otherwise uses the fast model.
The population risk is
\[
R(u)\;=\;\mathbb{E}_{(x,y)\sim P}\big[\ell\big(y,\,T_u(x)\big)\big].
\]
Given an independent test set $\mathcal{D}_\text{test}$ drawn i.i.d.\ from $P$, the empirical test risk is
\[
\widehat R(u)\;=\;\frac{1}{N}\sum_{j=n+1}^{n+N} \ell\big(y_j,\,T_u(x_j)\big).
\]
Our guarantee for the empirical test risk relies on Assumption~\ref{assump:validity} from the main text, in addition to the following lemma.

\begin{lemma}[Conditional Hoeffding bound]\label{lem:hoeffding}
Let $\hat u$ be a random variable determined by the calibration set $\mathcal D_{cal}$.
Assume that, conditioned on $\hat u$, the test losses $Z_j(\hat u):=\ell\big(y_j,\,T_{\hat u}(x_j)\big)$ for $j\in \mathcal I_{test}$ are i.i.d. and bounded in $[a,b]$.
Then for any $t>0$,
\[
\mathbb{P}\!\big(\widehat R(\hat u)-R(\hat u)>t\,\big|\,\hat u\big)\;\le\;\exp\!\Big(-\tfrac{2 N t^2}{(b-a)^2}\Big).
\]
\end{lemma}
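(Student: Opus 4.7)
The plan is to reduce the conditional statement to the classical (unconditional) Hoeffding inequality applied to a sample of fixed size $N$. First, I would fix an arbitrary realization $u$ in the support of $\hat u$ and condition on the event $\{\hat u = u\}$. Because the theorem setup assumes $\mathcal D_\text{test}$ is independent of $\mathcal D_{cal}$, and $\hat u$ is a function of $\mathcal D_{cal}$ only, the conditional joint distribution of the test pairs $\{(x_j,y_j)\}_{j\in\mathcal I_{test}}$ given $\hat u = u$ agrees with their unconditional i.i.d.\ distribution under $P$. Consequently, the variables $Z_j(u) = \ell(y_j, T_u(x_j))$ for $j\in\mathcal I_{test}$ are i.i.d.\ under this conditional law, bounded in $[a,b]$, with common mean $\mathbb{E}[Z_j(u)] = R(u)$.

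Second, I would invoke the classical Hoeffding inequality for the average of $N$ i.i.d.\ $[a,b]$-valued variables: for every $t>0$,
\[
\mathbb{P}\!\left(\frac{1}{N}\sum_{j=n+1}^{n+N} Z_j(u) - R(u) > t\right) \;\le\; \exp\!\Big(-\tfrac{2Nt^2}{(b-a)^2}\Big).
\]
The left-hand side is exactly $\mathbb{P}\big(\widehat R(u)-R(u)>t\big)$, and by the independence argument above it coincides with $\mathbb{P}\big(\widehat R(\hat u)-R(\hat u)>t\mid\hat u=u\big)$. Because the right-hand side does not depend on $u$, the bound extends almost surely to the claim
\[
\mathbb{P}\!\big(\widehat R(\hat u)-R(\hat u)>t \,\big|\, \hat u\big)\;\le\;\exp\!\Big(-\tfrac{2Nt^2}{(b-a)^2}\Big).
\]

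The only subtle step is the second one, namely justifying the identity $\mathbb{P}(\widehat R(\hat u)-R(\hat u)>t\mid\hat u=u)=\mathbb{P}(\widehat R(u)-R(u)>t)$ for a data-dependent threshold. The cleanest way to handle it is to note that, by independence of $\mathcal D_\text{test}$ and $\mathcal D_{cal}$, the regular conditional law of the test batch given $\hat u$ equals its marginal law, so plug-in of $u=\hat u$ is valid. Apart from this measure-theoretic bookkeeping, the proof is a direct invocation of Hoeffding and introduces no additional obstacles.
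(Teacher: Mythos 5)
Your proposal is correct and follows essentially the same route as the paper's proof: condition on the calibration-determined threshold, use independence of $\mathcal D_{test}$ from $\mathcal D_{cal}$ to conclude the test losses are conditionally i.i.d., bounded in $[a,b]$, with conditional mean $R(\hat u)$, and then apply the one-sided Hoeffding inequality. Your extra care about the plug-in step via regular conditional distributions is a harmless refinement of the same argument.
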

\begin{proof}
Let $Z_j(\hat u) = \ell(y_j, T_{\hat u}(x_j))$ for $j\in \mathcal{I}_{test}$.
The model $\hat u$ is fixed when we condition on it.
Since the test set $\mathcal{D}_\text{test}$ consists of i.i.d. samples and is independent of $\hat u$, the random variables $Z_1(\hat u), \dots, Z_N(\hat u)$ are conditionally independent and identically distributed.

By the boundness of the loss function, each $Z_j(\hat u)$ is bounded in $[a, b]$.
The conditional expectation of each $Z_j(\hat u)$ is
$ \mathbb{E}[Z_j(\hat u) \,|\, \hat u] = \mathbb{E}[\ell(y_j, T_{\hat u}(x_j)) \,|\, \hat u] $.
Since $(x_j, y_j)$ is independent of $\hat u$, this is equal to the unconditional expectation over the data distribution, $ \mathbb{E}_{(x,y) \sim P}[\ell(y, T_{\hat u}(x))] $, which is the definition of the true risk $R(\hat u)$.
The empirical risk is the sample mean: $$\widehat{R}(\hat u) = \frac{1}{N}\sum_{j=1}^N Z_j(\hat u).$$
Its conditional expectation is $$\mathbb{E}[\widehat R(\hat u)\,|\,\hat u] = \frac{1}{N}\sum_{j=1}^N \mathbb{E}[Z_j(\hat u) \,|\, \hat u] = R(\hat u).$$

We can now apply Hoeffding's inequality \citep{hoeffding1963probability} to the conditional i.i.d. bounded variables $Z_j(\hat u)$.
For any $t > 0$, the one-sided version states that
$$ \mathbb{P}\left( \frac{1}{N}\sum_{j=n+1}^{n+N} Z_j(\hat u) - \mathbb{E}\left[\frac{1}{N}\sum_{j=n+1}^{n+N} Z_j(\hat u) \,\middle|\, \hat u\right] > t \,\middle|\, \hat u \right) \le \exp\left(-\frac{2Nt^2}{(b-a)^2}\right). $$
Substituting the empirical risk and true risk, we get
$$ \mathbb{P}\left( \widehat{R}(\hat u) - R(\hat u) > t \,\middle|\, \hat u \right) \le \exp\left(-\frac{2Nt^2}{(b-a)^2}\right). $$
This completes the proof.
\end{proof}

\subsection{Proof details for Theorems~\ref{thm:empirical_test_pac_appendix}}
\begin{proof}
The independence of $\mathcal{D}_\text{test}$ and $\mathcal D_{cal}$ ensures Lemma~\ref{lem:hoeffding} hold.
Use the inclusion
\[
\{\widehat R(\hat u)>\epsilon+t\}\subseteq\{R(\hat u)>\epsilon\}\cup\{\widehat R(\hat u)-R(\hat u)>t\}
\]
and take probabilities.
Combine the result of Theorem~\ref{thm:pac_guarantee} (i.e., $\mathbb{P}(R(\hat u)>\epsilon)\le\alpha$) with the law of total probability and Lemma~\ref{lem:hoeffding} to conclude the claim:
\begin{align*}
\mathbb{P}(\widehat R(\hat u)>\epsilon+t) &\le \mathbb{P}(R(\hat u)>\epsilon) + \mathbb{P}(\widehat R(\hat u)-R(\hat u)>t) \\
& = \mathbb{P}(R(\hat u)>\epsilon) + \mathbb{E}\big[\mathbb{P}\big(\widehat R(\hat u)-R(\hat u)>t\,\big|\,\hat u\big)\big] \\
& \le \alpha + \exp\!\Big(-\tfrac{2 N t^2}{(b-a)^2}\Big).
\end{align*}
This is equivalent to the stated guarantee.
\end{proof}

\section{Transductive PAC Reasoning}
\label{sec:transductive_pac_reasoning}

In this section, we introduce a transductive version of PAC reasoning.
In this setting, the calibration set and the test set are identical.
We consider a fixed dataset $\mathcal D = \mathcal D_{test} = \mathcal D_{cal}=\{x_1,\dots, x_n\}$, and the randomness only comes from the algorithm itself (e.g., which data points are selected to query the expert, the sampling design, and the internal randomization of the mean upper bound estimator).
The goal is to provide a guarantee of empirical performance over this fixed dataset.
Specifically, the algorithm ensures that the empirical average performance loss is controlled below a user-specified tolerance level $\epsilon$ with a confidence of at least $1-\alpha$.

% \paragraph{Notations and Setup}
Let's update the setup for the transductive setting.
For a given threshold $u$, the empirical risk on the dataset $\mathcal D$ is defined as
$$
L(u) := \frac{1}{n}\sum_{i=1}^n \ell(y_i,\hat y_i)\,\mathbf{1}\{U_i\le u\}.
$$
This represents the average loss for the data points where the model is used (i.e., uncertainty is below the threshold $u$), and $L(u)$ is non-decreasing in $u$ by construction.
The validity of our transductive PAC reasoning algorithm relies on the following assumptions.
For any given threshold $u$ and significance level $\alpha$, there exists an upper confidence bound (UCB) $\widehat L_u(\alpha)$, computable from samples drawn by the algorithm, that satisfies
$$
\Prob\big( L(u) \le \widehat L_u(\alpha) \big) \ge 1-\alpha.
$$
In practice, we instantiate $\widehat L_u(\cdot)$ using our UCB procedures.
Concretely, one may compute $\widehat L_u(\alpha')$ via Algorithm~\ref{alg:compute_bound} (CLT-based) or Algorithm~\ref{alg:compute_bound_finite_sample} (Hoeffding/Bentkus/betting-based), depending on sample size and desired conservatives. We summarize our transductive style method in Algorithm~\ref{alg:transductive_pac_labeling}.

\begin{algorithm}[ht]
\caption{Transductive PAC-Labeling}
\label{alg:transductive_pac_labeling}
\begin{algorithmic}[1]
\STATE \textbf{Inputs:} Test ataset $\mathcal D = \{x_1, \dots, x_n\}$, uncertainty scores $U_i, i =1, ... n$, model predictions $\tilde y_i, \tilde y_n$, tolerance $\epsilon$, significance level $\alpha$, number of trials $m$, sampling probabilities $\pi_i, i = 1, ...n$, UCB function $\hat L_u(\alpha)$.
\STATE \textbf{Output:} Labeled dataset $\{(X_i, \widetilde Y_i)\}_{i=1}^n$ and threshold $\hat u$.
\STATE \textit{Sampling phase:}
\FOR{$j = 1, \dots, m$}
    \STATE Draw an index $i_j$ according to the sampling design (e.g., uniform or importance-based).
    \STATE With probability $\pi_{i_j}$, query the expert for $y_{i_j}$. Let $\xi_j \sim \text{Bernoulli}(\pi_{i_j})$.
    \IF{$\xi_j = 1$}
        \STATE Observe the true label $y_{i_j}$ and compute the loss $\ell(y_{i_j}, \tilde y_{i_j})$.
    \ELSE
        \STATE Mark as not-observed.
    \ENDIF
\ENDFOR
\STATE \textit{For each candidate threshold, compute {UCB}} $\hat L_u(\alpha)$
\STATE \textit{Choose the estimated threshold $\hat u := \max\{\widehat L_u \le \varepsilon \}$}.
% \STATE \textit{Final label assignment on the fixed dataset $\mathcal D$.}
\FOR{$i = 1, \dots, n$}
    \IF{$U_i \ge \hat u$}
        \STATE Ensure expert label $y_i$ is obtained (query now if not already queried).
        \STATE Set $\widetilde y_i := y_i$.
    \ELSE
        \STATE Set $\widetilde y_i := \hat y_i$.
    \ENDIF
\ENDFOR

\end{algorithmic}
\end{algorithm}

\begin{theorem}[Transductive PAC guarantee]\label{thm:transductive_pac}
Suppose $\Prob\big( L(u) \le \widehat L_u(\alpha) \big) \ge 1-\alpha$.
% Assume further that $u\mapsto L_u$ is monotone over the candidate set $\mathcal{U}_{\text{vals}}$.
Then the procedure in Algorithm~\ref{alg:transductive_pac_labeling}, which selects $\hat u = \max\{u: \widehat L_u(\alpha) \le \varepsilon\}$, achieves
$$
\Prob\big( L(\hat u) \le \varepsilon \big) \ge 1-\alpha.
$$
\end{theorem}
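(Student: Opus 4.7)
The plan is to mirror the proof of Theorem~\ref{thm:pac_guarantee} in the transductive setting, where the dataset $\mathcal{D}$ is fixed, so $L(u)$ is a deterministic, non-decreasing step function of $u$ and all randomness resides in the algorithm's sampling and the resulting UCB $\widehat{L}_u(\alpha)$. First, I would introduce the oracle threshold $u^\star := \min\{u \in \{U_i\}_{i=1}^{n} : L(u) > \epsilon\}$, with the convention $u^\star = +\infty$ if this set is empty (in which case $L(u) \le \epsilon$ for every candidate and the conclusion is immediate). Because the data are fixed, $u^\star$ is deterministic, with $L(u^\star) > \epsilon$ while $L(u) \le \epsilon$ for every candidate $u < u^\star$.

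Second, I would exploit monotonicity of the UCB in $u$, which follows from the construction of $\widehat{L}_u(\alpha)$ through the non-decreasing indicator $\mathbf{1}\{U_{i_j} \le u\}$ in Algorithm~\ref{alg:compute_bound} (or equivalently, by interpreting the threshold selection as the fixed-sequence single-start test of Appendix~\ref{sec:from_ltt_to_pac}). Monotonicity implies that the event $\{\hat{u} \ge u^\star\}$ forces $\widehat{L}_{u^\star}(\alpha) \le \widehat{L}_{\hat{u}}(\alpha) \le \epsilon$; combined with $L(u^\star) > \epsilon$ this event is therefore contained in the UCB-violation event $\{L(u^\star) > \widehat{L}_{u^\star}(\alpha)\}$ at the \emph{fixed} threshold $u^\star$. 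Invoking the UCB validity assumption pointwise at the deterministic $u^\star$ then yields $\mathbb{P}(\hat{u} \ge u^\star) \le \alpha$, and on the complementary event the monotonicity of $L$ together with the definition of $u^\star$ gives $L(\hat{u}) \le \epsilon$, producing the claimed $\mathbb{P}(L(\hat{u}) \le \epsilon) \ge 1 - \alpha$.

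The main obstacle, as in Theorem~\ref{thm:pac_guarantee}, is justifying the reduction from the data-dependent threshold $\hat{u}$ to the fixed oracle threshold $u^\star$, since the UCB assumption holds only pointwise and cannot be applied at the random $\hat{u}$ directly. The joint monotonicity of $L$ and $\widehat{L}$ in $u$ is precisely what enables this reduction, collapsing any algorithmic failure onto a single deterministic point at which the pointwise guarantee applies; this is the only nontrivial ingredient, with the remainder being a direct manipulation of definitions that is essentially identical to Appendix~\ref{sec:proof_pac_guarantee}.
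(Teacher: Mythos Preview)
Your proposal is correct and faithfully transplants the argument of Appendix~\ref{sec:proof_pac_guarantee} to the transductive setting, but it does \emph{not} match the paper's own proof of Theorem~\ref{thm:transductive_pac}. The paper instead works with the simultaneous coverage event $E=\{L(u)\le \widehat L_u(\alpha)\ \text{for all}\ u\in\Lambda\}$, asserts $\mathbb{P}(E)\ge 1-\alpha$, and observes that on $E$ the selection rule $\widehat L_{\hat u}(\alpha)\le\varepsilon$ immediately yields $L(\hat u)\le\varepsilon$; no oracle threshold $u^\star$ is introduced and no monotonicity of the UCB is invoked. That route is shorter but implicitly reads the hypothesis as a \emph{uniform}-in-$u$ coverage statement. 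Your reduction to a single deterministic $u^\star$ is the more careful argument under a strictly pointwise reading of the UCB assumption (consistent with Assumption~\ref{assump:validity}), and this is precisely what the transductive setting buys you, since with the data fixed $u^\star$ is no longer random. The price is the step $\{\hat u\ge u^\star\}\Rightarrow \widehat L_{u^\star}(\alpha)\le\varepsilon$, which, as you rightly flag, requires either the fixed-sequence interpretation of the selection rule or monotonicity of $u\mapsto\widehat L_u(\alpha)$; the latter is automatic for the Hoeffding bound of Algorithm~\ref{alg:compute_bound_finite_sample} (the radius is $u$-independent), but for the CLT bound of Algorithm~\ref{alg:compute_bound} the standard-deviation term need not be monotone in $u$, so the fixed-sequence reading is the safer justification there.
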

\begin{proof}
Let $E$ be the event that $L_u \le \widehat L_u(\alpha)$ holds simultaneously for all $u \in \Lambda$.
Then $\Prob(E) \ge 1-\alpha$.
On $E$, for any $u$ with $\widehat L_u(\alpha) \le \varepsilon$ we have $L(u) \le \varepsilon$.
By the selection rule, either the set is non-empty and $\hat u$ is the maximal such $u$, which implies $L_{\hat u} \le \varepsilon$ by monotonicity, and we take $\hat u=-\infty$, in which case $L_{\hat u}=0 \le \varepsilon$ since no model predictions are used.
Therefore $\mathbf{1}\{L(\hat u) \le \varepsilon\}=1$ on $E$, and hence $\Prob(L_{\hat u} \le \varepsilon) \ge \Prob(E) \ge 1-\alpha$.
\end{proof}

\begin{remark}[Comparison with the inductive setting]
In the inductive setting, the calibration and test sets are drawn independently of the same distribution.
The goal is to guarantee performance on future, unseen data points from that distribution.
The guarantee is of the form $\Prob(\mathbb{E} L(\hat u) \le \varepsilon) \ge 1-\alpha$, where the probability is over the random draws of both datasets.
In contrast, our transductive approach provides a guarantee for a specific, fixed dataset, which can be more suitable in applications where the set of items to be labeled is known in advance or the calibration and test datasets are not exchangeable.
\end{remark}

\section{Multi level PAC reasoning}\label{sec:multipac}
We extend PAC reasoning to a three-tier composite LRM while preserving the statistical guarantees.
We introduce two non-thinking LRMs, denoted by \(\tilde f_1\) and \(\tilde f_2\), whose computational costs and accuracies lie below the model with thinking \(f\).
For each input prompt \(x\), the expert output is \(y=f(x)\), and the non-thinking outputs are \(\tilde y_1=\tilde f_1(x)\) and \(\tilde y_2=\tilde f_2(x)\).
We assume the existence of a unified uncertainty score \(U(x)\in[0,1]\) produced inexpensively, for example by \(\tilde f_1\), as in Section~\ref{sec:theoretical_analysis}.
We define the deployment mapping with two thresholds \(u_1,u_2\in[0,1]\) and \(u_1\le u_2\):
\[
T_{u_1,u_2}(x)=f(x)\,\mathbf{1}\{U(x)\ge u_2\}+\tilde f_2(x)\,\mathbf{1}\{u_1<U(x)\le u_2\}+\tilde f_1(x)\,\mathbf{1}\{U(x)\le u_1\}.
\]
The composite LRM is \(\hat f=T_{\hat u_1,\hat u_2}\) for calibrated thresholds \((\hat u_1,\hat u_2)\).
We re-parameterize the population risk and empirical risk of the composite LRM by the threshold pair:
\[
R(u_1,u_2)=\mathbb{E}[\ell(y, T_{u_1,u_2}(x))],\quad \widehat R(u_1,u_2)=\frac{1}{N}\sum_{i\in \mathcal{I}_{test}}\ell(y_i, T_{u_1,u_2}(x_i)).
\]
Expanding the loss conditional on the tiers yields
\[
R(u_1,u_2)=\mathbb{E}\big[\ell(y,\tilde f_1(x))\,\mathbf{1}\{U\le u_1\}+\ell(y,\tilde f_2(x))\,\mathbf{1}\{u_1<U\le u_2\}\big],
\]
because the tier \(U>u_2\) uses the expert \(f\) and contributes zero loss.
The risk function is non-decreasing in both coordinates by construction.
Fixing \(u_2\), increasing \(u_1\) assigns more inputs to the cheaper \(\tilde f_1\), which weakly increases the risk.
Fixing \(u_1\), increasing \(u_2\) defers less often to \(f\) and assigns more inputs to \(\tilde f_2\), which also weakly increases the risk.
This bi-variate monotonicity enables valid fixed-sequence calibration and preserves the PAC guarantee as in Definition~\ref{def:pac}.
We construct a two-dimensional upper confidence bound\(\widehat L_{u_1,u_2}(\alpha)\) satisfying
\[
\mathbb{P}\big(R(u_1,u_2)\le \widehat L_{u_1,u_2}(\alpha)\big)\ge 1-\alpha\quad\text{for all valid pairs }(u_1,u_2)\in[0,1]^2,\ u_1\le u_2.
\]
The UCB can be built on the calibration set via importance sampling, using partial expert queries with sampling probabilities \(\pi_i\) and weights, analogously to Algorithms~\ref{alg:compute_bound} and~\ref{alg:compute_bound_finite_sample}.
Specifically, when an expert label \(y_i\) is queried, we record two weighted losses \(Z_{i,1}=\ell(y_i,\tilde y_{i,1})/\pi_i\) and \(Z_{i,2}=\ell(y_i,\tilde y_{i,2})/\pi_i\), otherwise we record zeros, and then aggregate tier-wise according to \((u_1,u_2)\).
Under a central limit theorem or concentration inequalities, we obtain a valid \(\widehat L_{u_1,u_2}(\alpha)\).
Once the UCB is available, we calibrate the thresholds by searching over the empirical grid of unique uncertainty scores in the calibration set and selecting a pair that minimizes computation under the constraint \(\widehat L_{u_1,u_2}(\alpha)\le \epsilon\).
A simple and effective choice is to maximize \(u_2\) subject to validity and then break ties by maximizing \(u_1\), which prioritizes using \(\tilde f_2\) and \(\tilde f_1\) more often while respecting the target tolerance. By monotonicity and the validity of \(\widehat L_{u_1,u_2}(\alpha)\), the selected pair \((\hat u_1,\hat u_2)\) ensures \(R(\hat u_1,\hat u_2)\le \epsilon\) with probability at least \(1-\alpha\).
The multi-level extension therefore preserves the PAC efficiency improvement while enabling finer control over computation across multiple tiers.

\paragraph{Generalization to K-tier PAC reasoning}
This framework extends directly to K-tier systems with \(K-1\) non-thinking LRMs ordered by cost and accuracy.
Let \(\tilde f_1,\dots,\tilde f_{K-1}\) be the non-thinking LRMs and introduce thresholds \(0\le u_1\le \cdots\le u_{K-1}\le 1\).
Define the deployment mapping for a threshold vector \(\mathbf{u}=(u_1,\dots,u_{K-1})\) as
\[
T_{\mathbf{u}}(x)=\tilde f_1(x)\,\mathbf{1}\{U(x)\le u_1\}+\sum_{k=2}^{K-1} \tilde f_k(x)\,\mathbf{1}\{u_{k-1}<U(x)\le u_k\}+f(x)\,\mathbf{1}\{U(x)>u_{K-1}\}.
\]
The population risk is
\[
R(\mathbf{u})=\mathbb{E}\Big[\ell\big(y,\tilde f_1(x)\big)\,\mathbf{1}\{U\le u_1\}+\sum_{k=2}^{K-1}\ell\big(y,\tilde f_k(x)\big)\,\mathbf{1}\{u_{k-1}<U\le u_k\}\Big],
\]
which is coordinate-wise non-decreasing in each threshold \(u_k\).
A valid upper confidence bound \(\widehat L_{\mathbf{u}}(\alpha)\) is constructed by recording \(K-1\) weighted losses when querying the expert and aggregating tier-wise.
Threshold calibration proceeds on the empirical grid to minimize computation under the constraint \(\widehat L_{\mathbf{u}}(\alpha)\le \epsilon\), and the fixed-sequence strategy applies under monotonicity. The multi-tier deployment then uses \(T_{\mathbf{u}}\) on the test set.

\section{Experimental details}\label{app:experimental_details}

\paragraph{Hyperparameter settings of LLMs} In this study, we configure the decoding parameters as follows: for Qwen/Qwen3-4B-Instruct-2507, we set \textit{Temperature} = 0.7, \textit{TopP} = 0.8, \textit{TopK} = 20, and \textit{MinP} = 0; for Qwen/Qwen3-4B-Thinking-2507, we set \textit{Temperature} = 0.6, \textit{TopP} = 0.95, \textit{TopK} = 20, and \textit{MinP} = 0. Experiments were run on one NVIDIA RTX A6000 Graphics Card.

\paragraph{The prompt for verbalized uncertainty score} In Table~\ref{tab:verbalized_score_prompt}, we present the prompt used to elicit the verbalized confidence scores. After ten trials, we obtained the average confidence score and defined the verbalized uncertainty score as $1$ minus this average confidence.

\begin{table}
\caption{Prompt for the verbalized confidence scores.}\label{tab:verbalized_score_prompt}
\rowcolors{1}{Gray}{Gray}
% \resizebox{\textwidth}{!}{}
% \begin{tabular}{!{\vrule width 1.2pt} p{\linewidth}!{\vrule width 1.2pt}}
\begin{tabular}{!{\vrule width 1.2pt} p{\linewidth}!{\vrule width 1.2pt}}
    \Xhline{1.2pt}
        \textbf{System prompt:} You are a reasoning assistant. For each question and proposed answer, you must estimate how likely the proposed answer is correct.\\
        
        \textbf{User prompt:}\\
        Question: \{QUESTION\}\\
        Answer: \{ANSWER\}\\
        Provide a probability (between 0.0 and 1.0) that your answer is correct. Only output the probability.\\
    \Xhline{1.2pt}
\end{tabular}
\end{table}

\paragraph{Details of Datasets} Table~\ref{table:splitting_settings} summarizes the datasets employed in our experiments, together with their corresponding splitting strategies. For each dataset, we report its type, overall size, and the partitioning into PAC calibration and PAC test sets.  

\begin{table}[ht]
\centering 
\caption{The details of datasets and splitting settings for PAC experiments}\label{table:splitting_settings}
\begin{tabular}{l|l|l|l|r}
\hline
\textbf{Dataset}& \textbf{Dataset Type} &\textbf{Dataset Size} & \textbf{Split Setting} & \textbf{Size} \\
\hline
\multirow{2}{*}{MATH-500} & \multirow{2}{*}{Math Reasoning} &\multirow{2}{*}{\centering 500}& PAC Calibration & 300 \\
     & & & PAC Test & 200 \\
\hline
\multirow{2}{*}{ZebraLogic} & \multirow{2}{*}{Text reasoning} &\multirow{2}{*}{\centering 1000}& PAC Calibration & 500 \\
    & & & PAC Test & 500 \\
\hline
\multirow{2}{*}{Arena-Hard} & \multirow{2}{*}{Alignment Task} &\multirow{2}{*}{\centering 750}& PAC Calibration & 450 \\
     & & & PAC Test & 300 \\
\hline
\end{tabular}
\end{table}

% \paragraph{Loss function}
% We define loss functions in Section~\ref{sec:exp_setup}.

\subsection{Baselines}
\label{sec:baseline}
We compare PAC reasoning with several representative efficiency-oriented baselines, including Naive control, learned routing~\cite{2025llmrouter}, prompting-based efficient reasoning~\citep{xu2025chain}, and reasoning-free generation~\cite{ma2025reasoning}. 
While these methods have demonstrated empirical effectiveness in reducing inference cost, they are primarily heuristic and do not provide explicit theoretical guarantees on performance loss.

\paragraph{Naive control} We include a naive baseline, which closely resembles our method but omits the procedure of UCB.
Specifically, given a target error tolerance $\epsilon$, this baseline selects the largest threshold $u$ such that the empirical loss
\[
L(u)=\frac{1}{n}\sum_{i=1}^{n}\ell(y_i,\tilde{y}_i)\mathbf{1}\{U_i\le u\}
\]
is below $\epsilon$ on the calibration set.
This approach directly matches the empirical loss without accounting for the statistical estimation of $L(u)$, and therefore lacks the inductive, high-probability guarantees provided by our PAC-based reasoning.

\paragraph{Router}
We train a learned router for the model pair using the open-source library \textsc{LLMRouter}~\citep{2025llmrouter}. 
Specifically, we sample 2,000 instances from eight widely used benchmarks, including ARC-Challenge~\citep{clark2018think}, CommonsenseQA~\citep{talmor2019commonsenseqa}, GSM8K~\citep{cobbe2021training}, MATH~\citep{hendrycks2021measuring}, HumanEval~\citep{chen2021evaluating}, MMLU~\citep{hendrycks2021measuringa}, NaturalQA~\citep{kwiatkowski2019natural}, and TriviaQA~\citep{joshi2017triviaqa}.    
The router is implemented as a lightweight classifier that takes the input prompt as features and outputs a routing score, representing the predicted probability that invoking the thinking model is necessary. 
At inference time, inputs are routed to the thinking model if the predicted probability exceeds a fixed threshold of $0.5$, and otherwise handled by the non-thinking model. 
Such router does not provide explicit guarantees on the resulting performance loss after routing.

\paragraph{Chain of Draft}
Chain of Draft (CoD)~\cite{xu2025chain} is a prompting-based method designed to improve reasoning efficiency by explicitly constraining the verbosity of intermediate reasoning steps. 
Instead of generating full chain-of-thoughts, CoD enforces a compact ``draft'' for each reasoning step, typically limited to a few words, thereby reducing token usage while preserving a minimal reasoning structure. 
The specific prompt template used in our experiments is shown in Table~\ref{tab:cod_prompt}. 
As a prompting strategy, CoD focuses on reducing reasoning length, but does not control the performance loss induced by truncated reasoning.

\begin{table}[htbp]
\caption{Prompt for Chain of Draft.}\label{tab:cod_prompt}
\rowcolors{1}{Gray}{Gray}
\begin{tabular}{!{\vrule width 1.2pt} p{\linewidth}!{\vrule width 1.2pt}}
    \Xhline{1.2pt}
        \textbf{System prompt:} Think step by step, but only keep a minimum draft for each thinking step, with at most five words. Finally, put your final answer within \texttt{\textbackslash boxed\{\}}.  
        
        \textbf{User prompt:}\\
        Question: \{QUESTION\}\\
    \Xhline{1.2pt}
\end{tabular}
\end{table}

\paragraph{NoThinking}
NoThinking~\cite{ma2025reasoning} eliminates explicit chain-of-thought reasoning by removing special reasoning markers (e.g., \texttt{<think>}) and relying on direct answer generation combined with simple aggregation strategies. 
The method shows that, for certain reasoning tasks, models can achieve competitive performance without explicitly generating intermediate reasoning steps, leading to reduced inference cost and latency. 
However, NoThinking does not incorporate uncertainty-aware decision mechanisms, nor does it provide explicit control or theoretical guarantees over the resulting performance degradation.

\subsection{Choice of loss functions}
\label{sec:choice_loss_functions}
The loss functions, shown as in~\Cref{eq:semantic_loss} and \Cref{eq:binary_loss}, serve distinct purposes in evaluating the PAC reasoning. 
The semantic cosine distance captures the degree of semantic alignment between the PAC reasoning's prediction and reference outputs. It is particularly suitable for tasks where nuanced differences in meaning are critical, such as natural language understanding or generation tasks. By leveraging the ``Qwen/Qwen3-Embedding4B'' model, we ensure that the embeddings capture rich contextual information, robustly comparing semantic content in high-dimensional spaces.
In contrast, the binary 0–1 loss is designed for scenarios where the correctness of the generated answer is verifiable,
such as in mathematical problem-solving or multiple-choice question answering. This loss function is
particularly effective for evaluating the framework's ability to produce exact matches to ground-truth
answers, emphasizing precision in verifiable tasks. By testing the PAC reasoning on these two loss functions, we can assess the semantic quality and factual accuracy of the PAC reasoning across diverse
tasks.

\section{Additional experiments}
\label{sec:extensive_study}

\subsection{Experimental results of Llama-based LLMs}
\label{app:exp_other_llm}
In this section, we evaluate PAC reasoning on additional LLM architectures and larger-scale models to further verify the generalizability of our framework. Specifically, we conduct experiments using Llama-3.1-8B–based models: the ``DeepSeek-R1-Distill-Llama-8B'' as the thinking model and ``Llama-3.1-8B-Instruct'' as the lower-performance non-thinking model. 
We configure the decoding parameters as follows: for Llama-3.1-8B-Instruct, we set \textit{Temperature} = 0.6, \textit{TopP} = 0.95, \textit{TopK} = 20, and \textit{MinP} = 0, \textit{max\_tokens} = 4096; for DeepSeek-R1-Distill-Llama-8B, we set \textit{Temperature} = 0.6, \textit{TopP} = 0.95, \textit{TopK} = 20, and \textit{MinP} = 0. Experiments were run on one NVIDIA RTX A6000 Graphics Card. Other experimental details are following Appendix~\ref{app:experimental_details}.

Figure~\ref{fig:llama_results} summarizes the performance of PAC reasoning on Llama-3.1-8B–based models. Across all three benchmarks, PAC reasoning consistently maintains valid performance loss control, with empirical performance loss staying below the diagonal reference line. For uncertainty estimation, the logits-based score exhibits tighter calibration and lower ECP than the verbalized score, particularly under smaller $\epsilon$ values, while the verbalized score shows slightly higher variance but still adheres to theoretical bounds. In terms of efficiency, both scores achieve substantial STP, demonstrating that PAC reasoning can reliably identify confident cases and reduce unnecessary calls to the thinking model. Overall, the results confirm that PAC reasoning generalizes well to larger LLM architectures and continues to deliver stable risk control and efficiency gains.

\begin{figure}[t]
    \centering
    \subcaptionbox{MATH-500}{
        \includegraphics[width=0.31\linewidth]{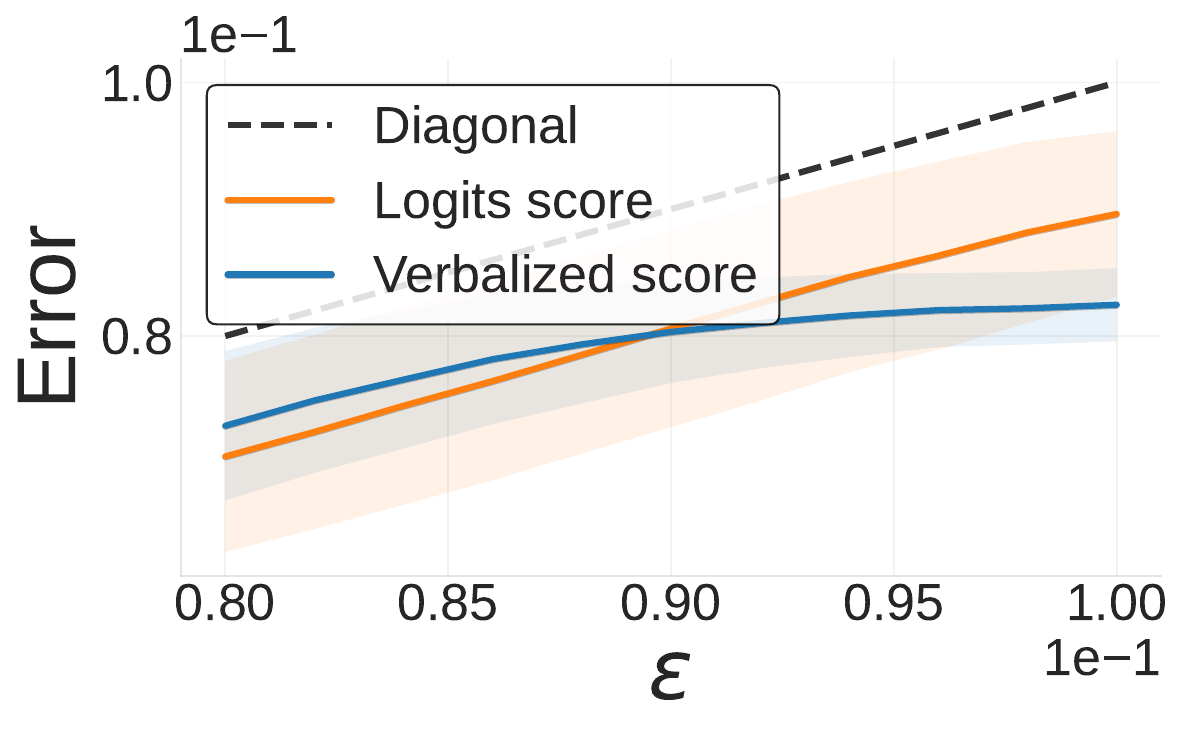}
         \includegraphics[width=0.31\linewidth]{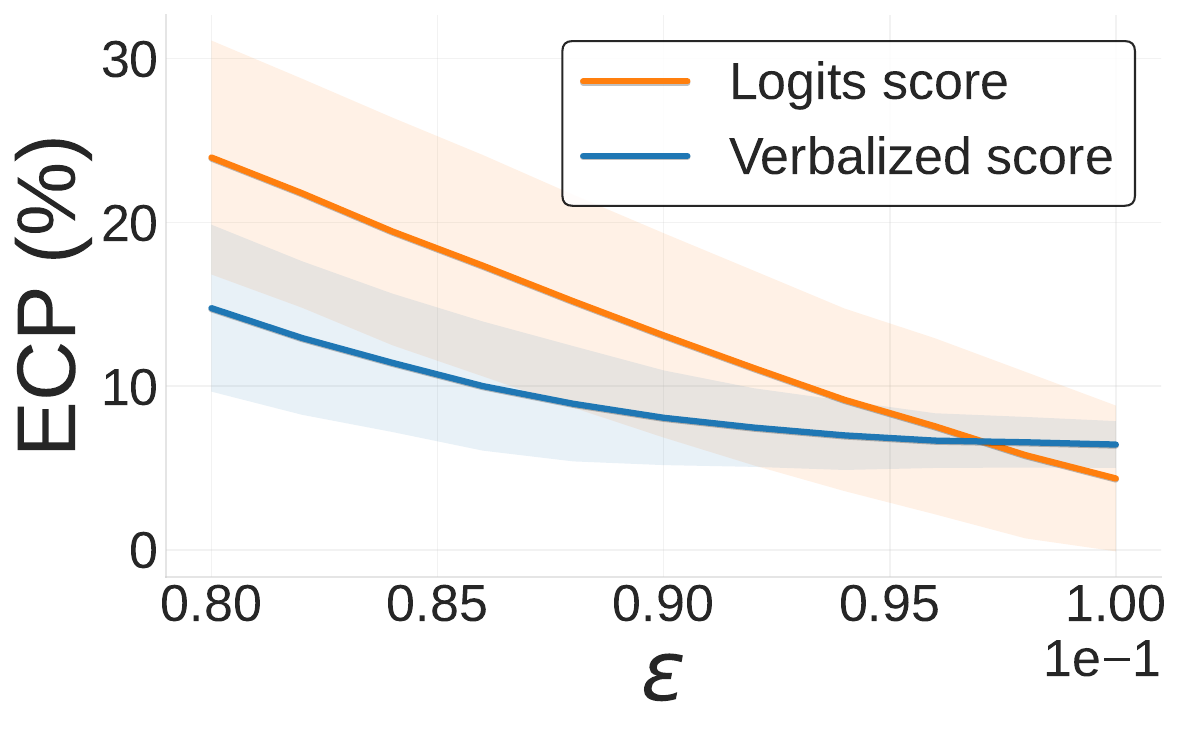}
        \includegraphics[width=0.31\linewidth]{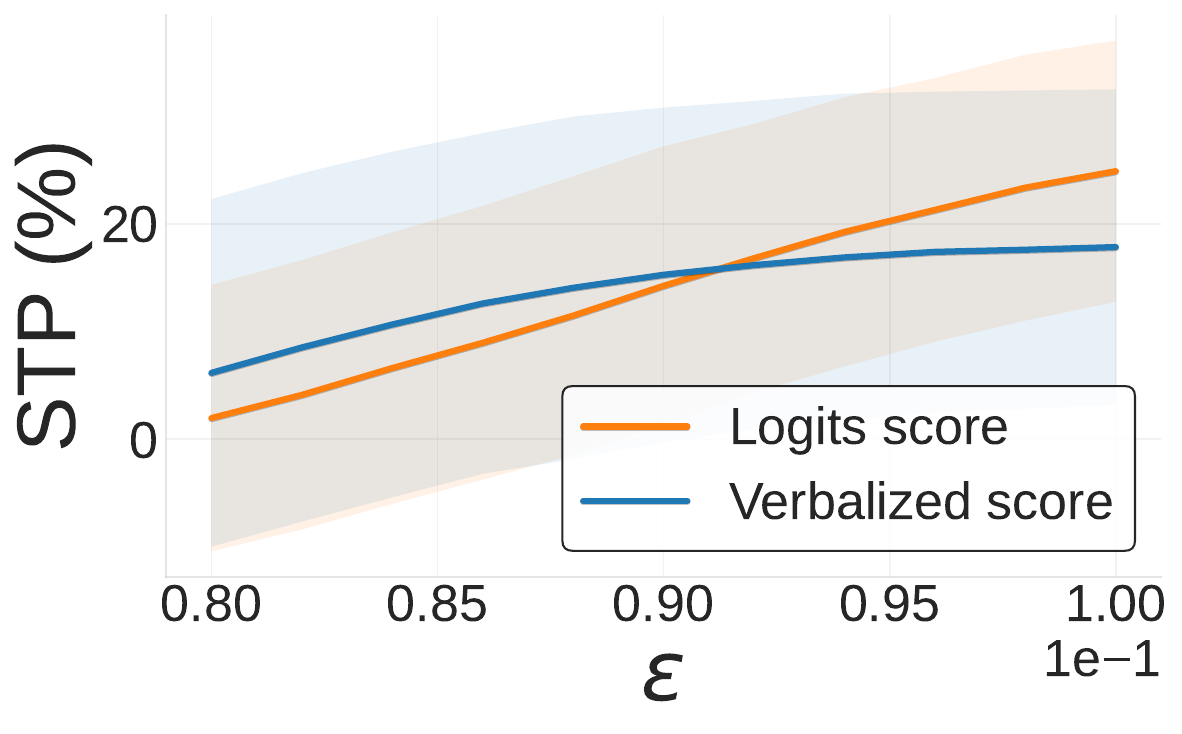}
    }
    \\
    \subcaptionbox{ZebraLogic}{
        \includegraphics[width=0.31\linewidth]{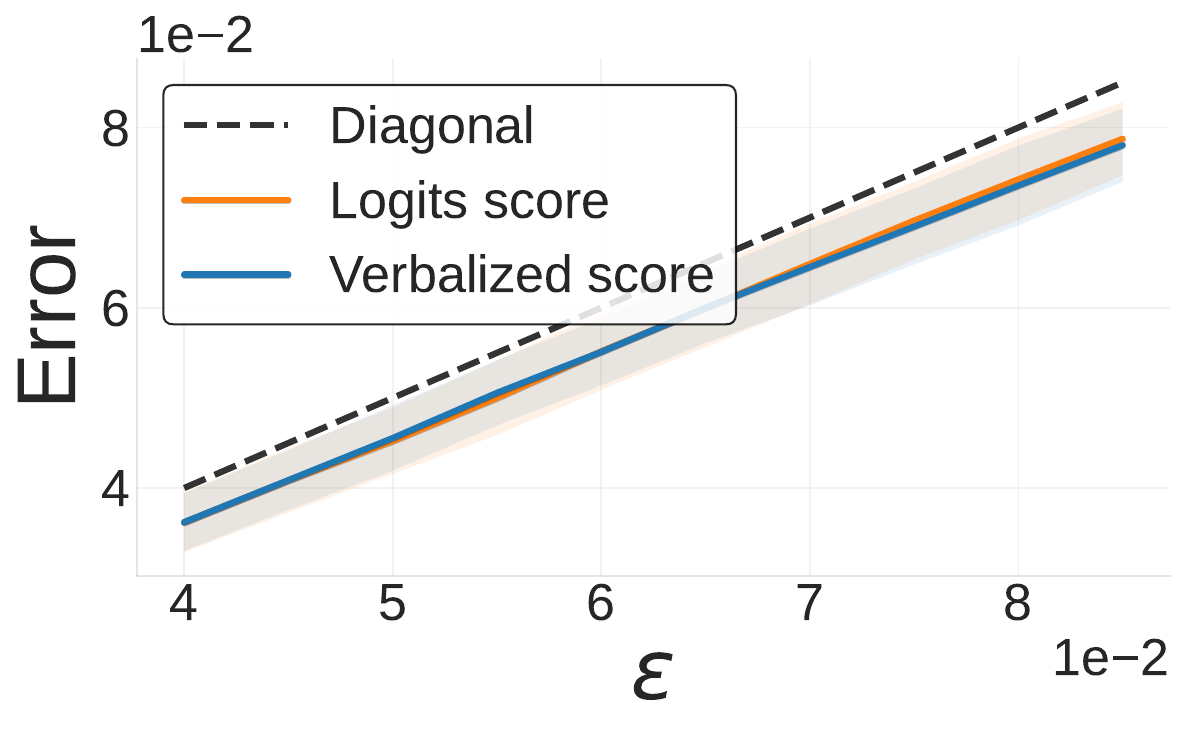}
        \includegraphics[width=0.31\linewidth]{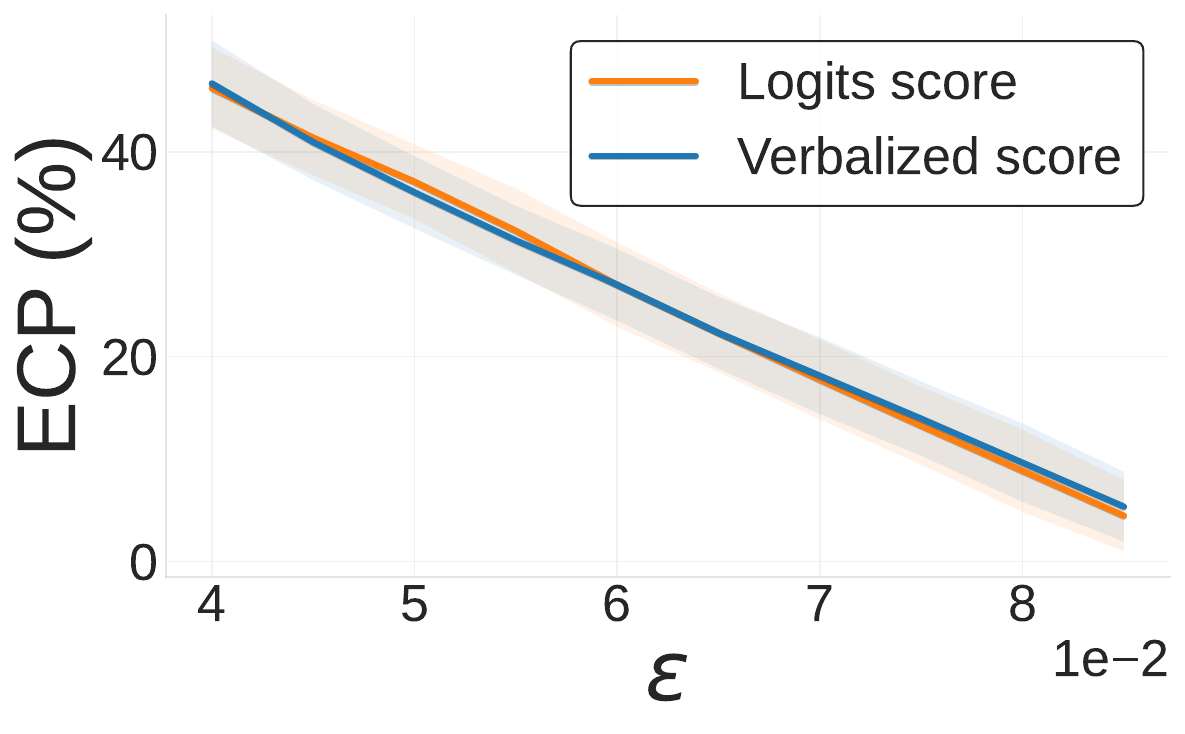}
        \includegraphics[width=0.31\linewidth]{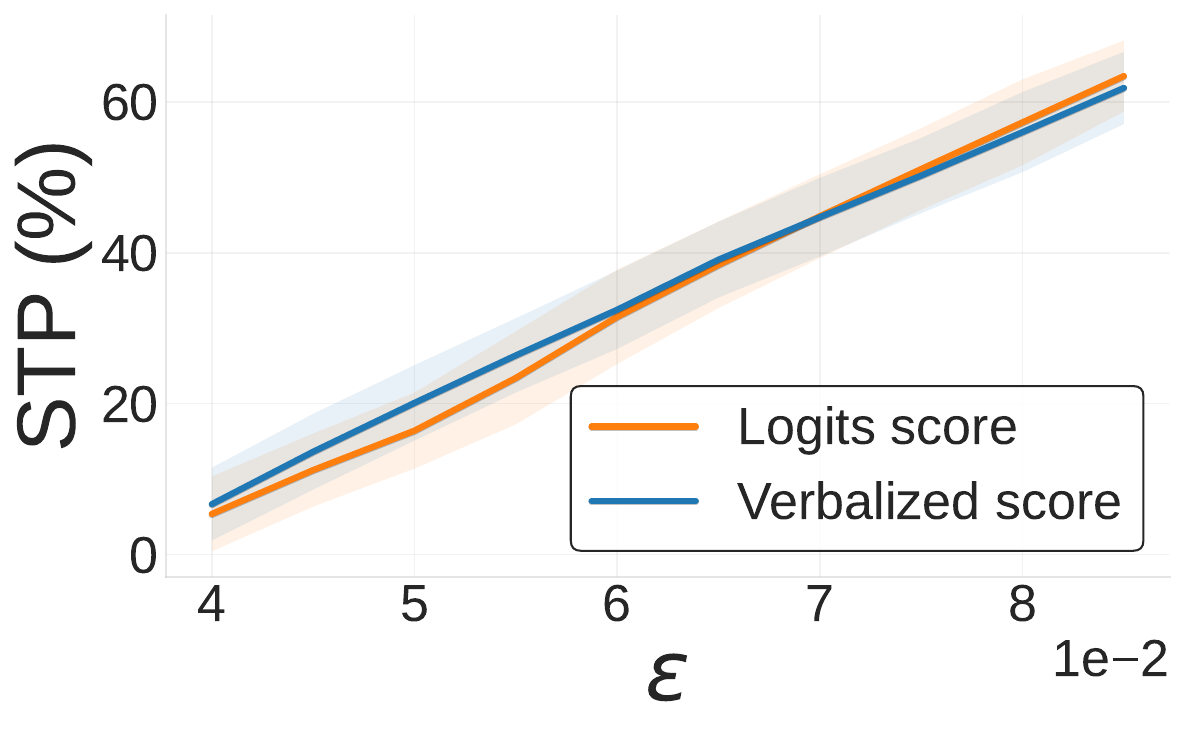}
    }
        \\
    \subcaptionbox{Arena-Hard}{
        \includegraphics[width=0.31\linewidth]{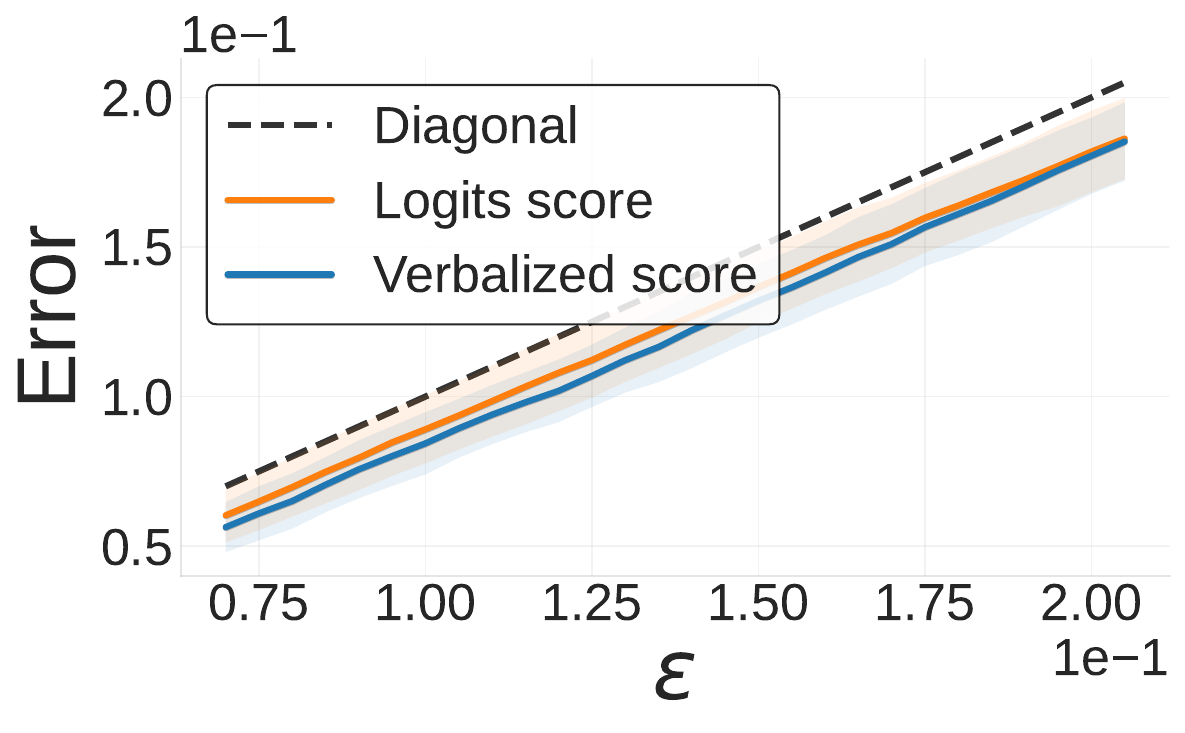}
        \includegraphics[width=0.31\linewidth]{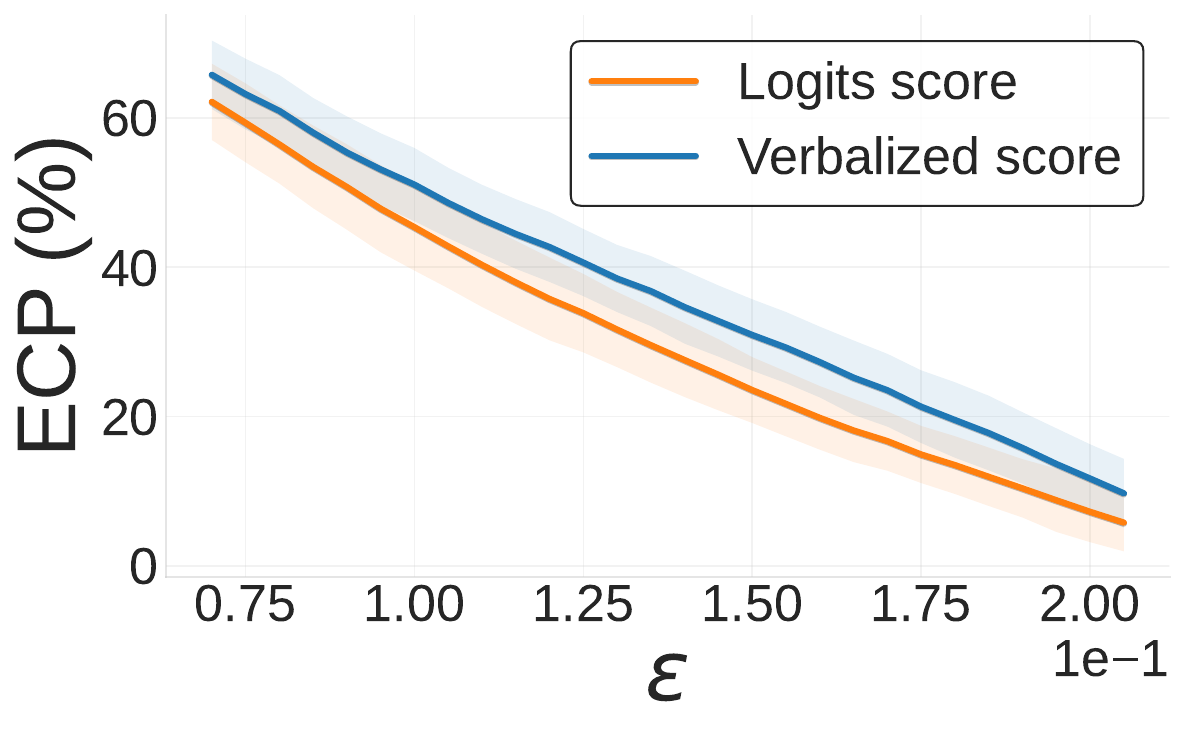}
        \includegraphics[width=0.31\linewidth]{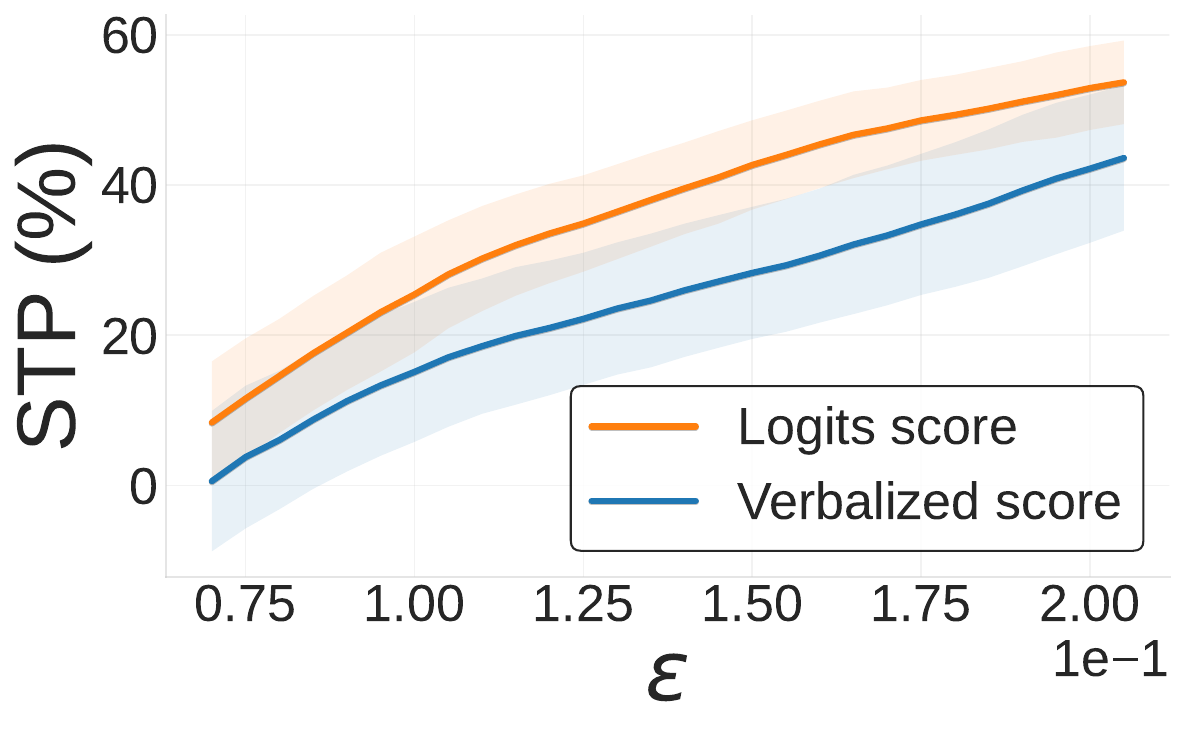}
    }
    \caption{\textbf{Error control, ECP and STP of PAC reasoning} for semantic loss across three benchmarks under a confidence level of $\alpha = 0.05$, on the Llama-3.1-8B–based models. 
    Uncertainty score includes the logits-based score and the verbalized score.
    All experiments are repeated 100 times, and the shaded areas represent standard deviations.}
    \label{fig:llama_results}
\end{figure}

\subsection{Additional benchmarks}\label{app:more_data}
In this section, we further validate the effectiveness of our framework on additional datasets, including GPQA~\citep{rein2024gpqa} and HumanEval~\citep{chen2021evaluating}. The experimental results shown in Figure~\ref{fig:results_other_datasets} evaluate the performance of PAC reasoning on these benchmarks using two types of uncertainty scores: the logits-based score and the verbalized score. Across both datasets, the two uncertainty scores demonstrate valid error control. The ECP increases steadily as the tolerance level $\epsilon$ grows, with the verbalized score exhibiting slightly worse ECP performance. For STP, the logits-based score consistently achieves higher performance compared to the verbalized score. Overall, these observations indicate that PAC reasoning remains effective and reliable when applied to a broader range of datasets.

\subsection{Expected calibration error of two uncertainty scores}\label{sec:ece} We evaluate the calibration quality of uncertainty estimates on MATH-500 and ZebraLogic using Qwen3-4B-Instruct-2507.
We consider two uncertainty scores: a logits-based score derived from the model's predictive distribution and a verbalized score elicited from the model's self-reported confidence.
Expected calibration error (ECE)~\citep{guo2017calibration} quantifies the discrepancy between predicted confidence and empirical accuracy via binning and a weighted average of absolute gaps, where smaller values indicate better calibration.
Across both benchmarks, the logits-based score exhibits consistently lower ECE and smoother reliability than the verbalized score, indicating tighter calibration of uncertainty estimates.
The verbalized score shows higher variance and mild overconfidence in high-confidence bins.
These findings support the use of the logits-based score within PAC reasoning and motivate improved elicitation methods for verbalized confidence.
Figure~\ref{fig:expected_error_uncertainty} summarizes the reliability plots and aggregated ECE\@.
All experiments are repeated 100 times under the decoding configuration described in Section~\ref{sec:exp_setup}.

\begin{figure}[htbp]
    \centering

    % ----------- Row 1: MATH-500 -----------
    \subcaptionbox{MATH-500 (Logits-based)}{
        \includegraphics[width=0.4\linewidth]{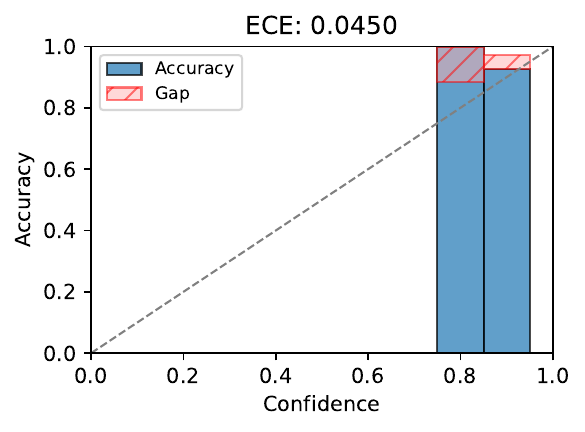}
    }
    \subcaptionbox{MATH-500 (Verbalized)}{
        \includegraphics[width=0.4\linewidth]{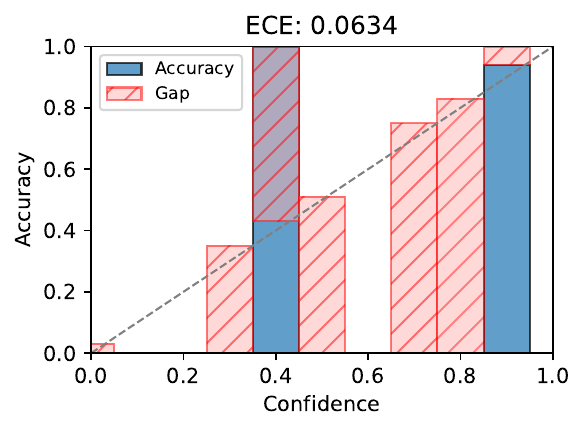}
    }

    \vspace{0.3cm}

    % ----------- Row 2: ZebraLogic -----------
    \subcaptionbox{ZebraLogic (Logits-based)}{
        \includegraphics[width=0.4\linewidth]{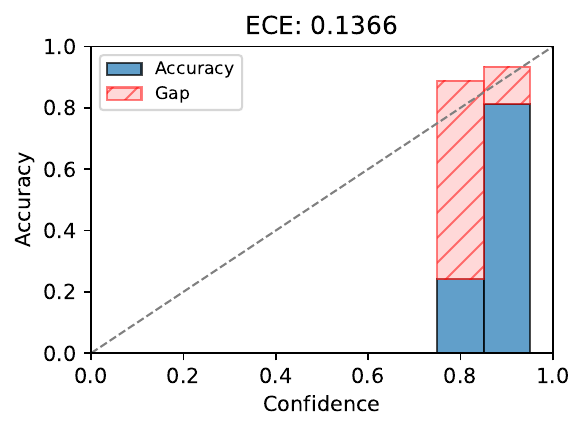}
    }
    \subcaptionbox{ZebraLogic (Verbalized)}{
        \includegraphics[width=0.4\linewidth]{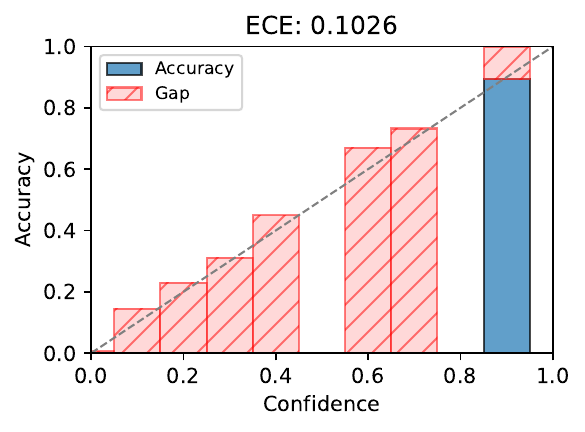}
    }
    \caption{Expected Calibration Error across two mathematical benchmarks.}\label{fig:expected_error_uncertainty}
\end{figure}

\subsection{Ablation study on the size of the calibration set}\label{sec:size_cal} We conduct experiments to investigate the stability of efficiency gains under different correction set sizes. Specifically, we repeat the experiments with varying calibration ratios to examine how the size of the correction set influences performance. The results are presented in Figure~\ref{fig:results_calibration_ratio}. PAC reasoning maintains stable error control and consistent uncertainty calibration across all benchmarks. Both uncertainty scoring methods are capable of controlling the theoretical risk, though the verbalized score exhibits larger variance. Moreover, the logits-based score consistently outperforms the verbalized uncertainty score, achieving lower ECP and higher STP\@. These findings demonstrate that our framework, i.e., PAC reasoning, can effectively maintain valid risk control and stable efficiency gains under varying calibration dataset sizes.

\begin{figure}[htbp]
    \centering
    \subcaptionbox{MATH-500}{
        \includegraphics[width=0.31\linewidth]{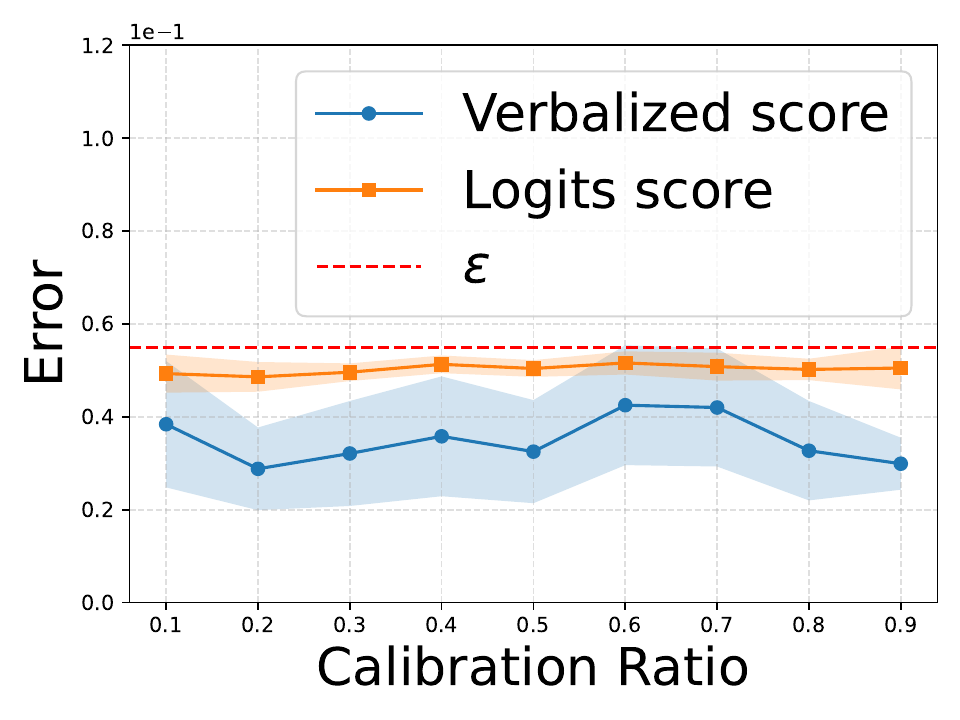}
         \includegraphics[width=0.31\linewidth]{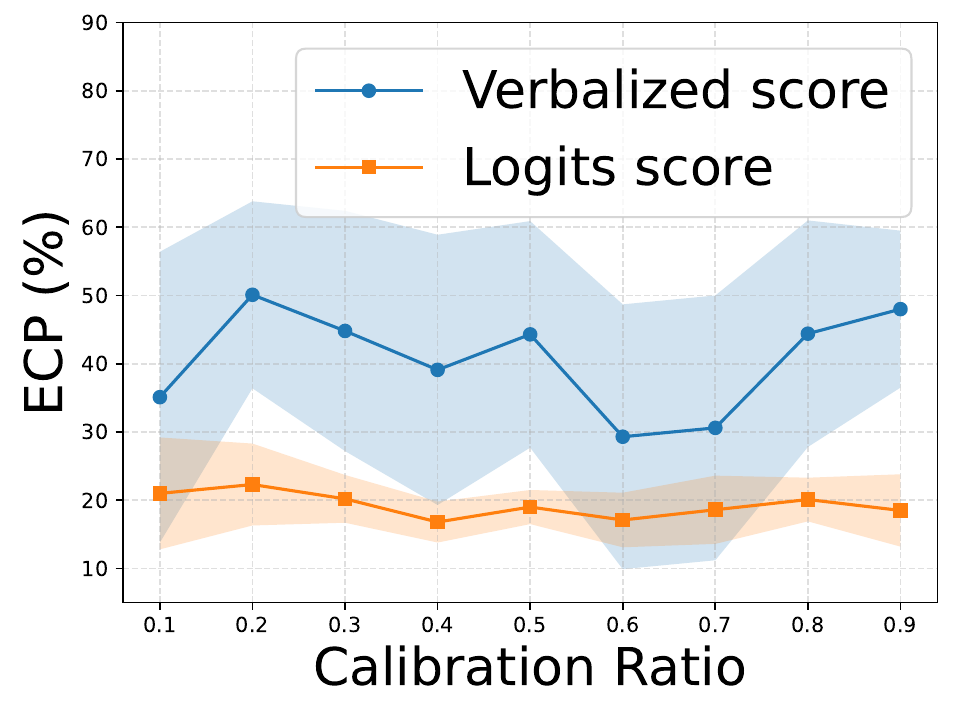}
        \includegraphics[width=0.31\linewidth]{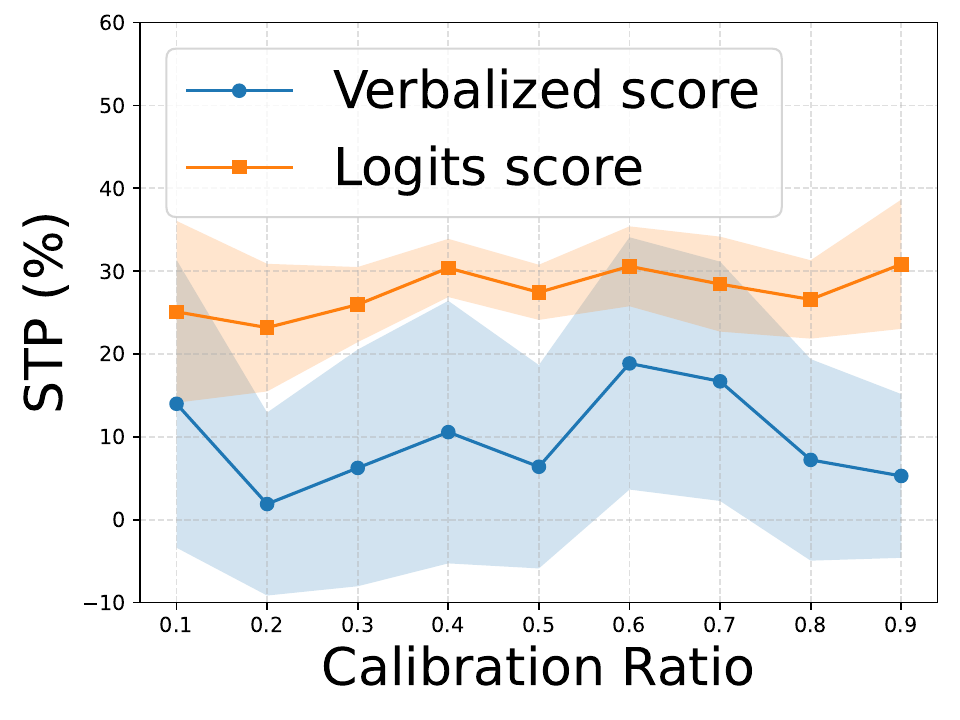}
    }
    \\
    \subcaptionbox{ZebraLogic}{
        \includegraphics[width=0.31\linewidth]{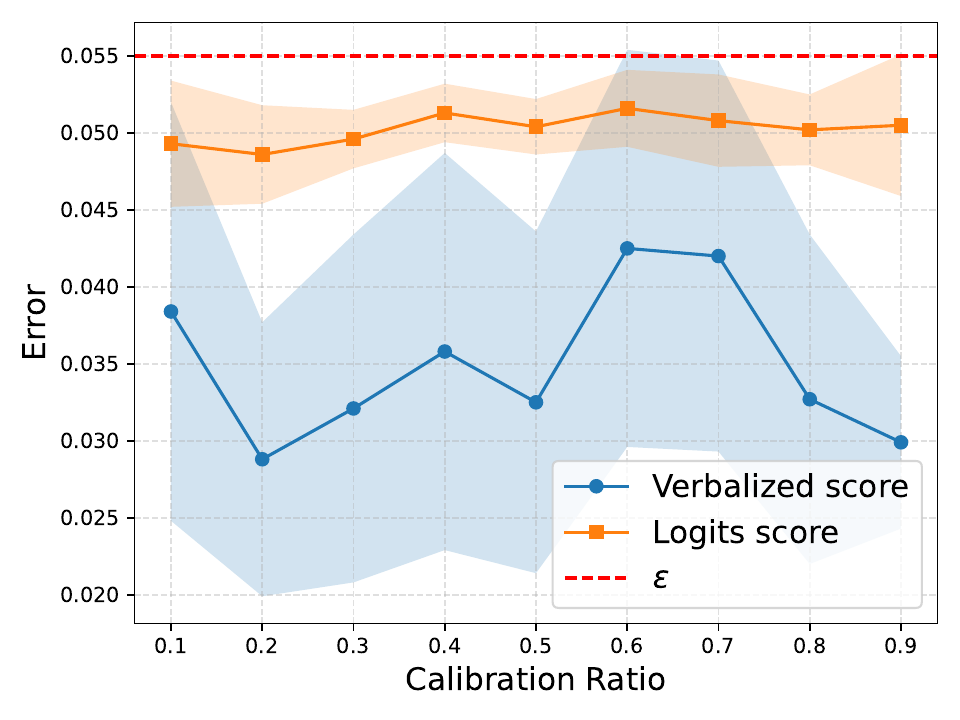}
        \includegraphics[width=0.31\linewidth]{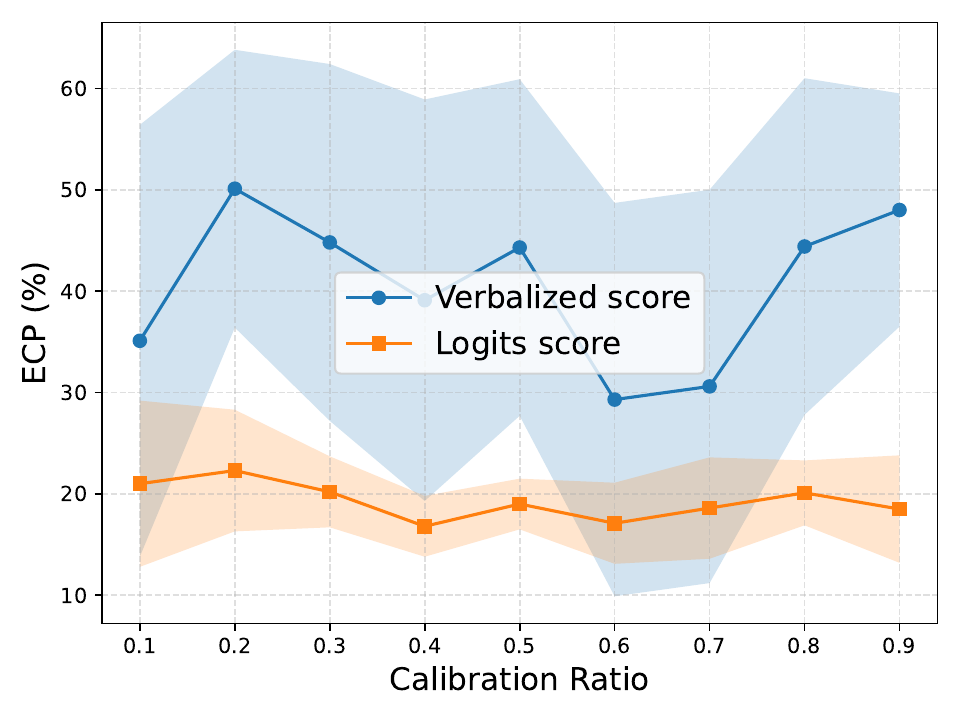}
        \includegraphics[width=0.31\linewidth]{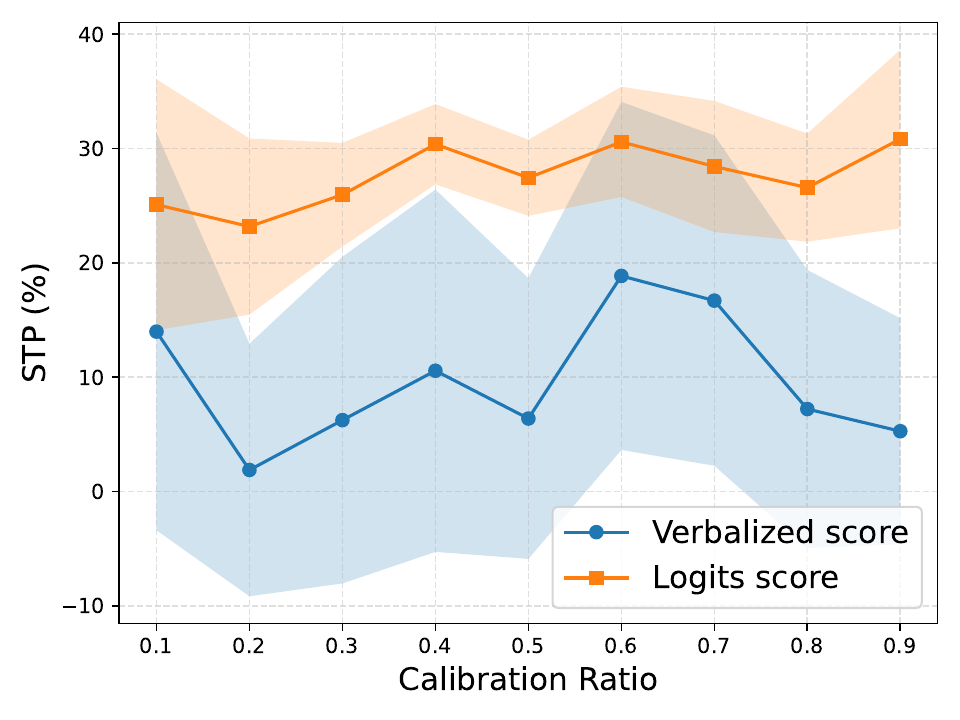}
    }
        \\
    \subcaptionbox{Arena-Hard}{
        \includegraphics[width=0.31\linewidth]{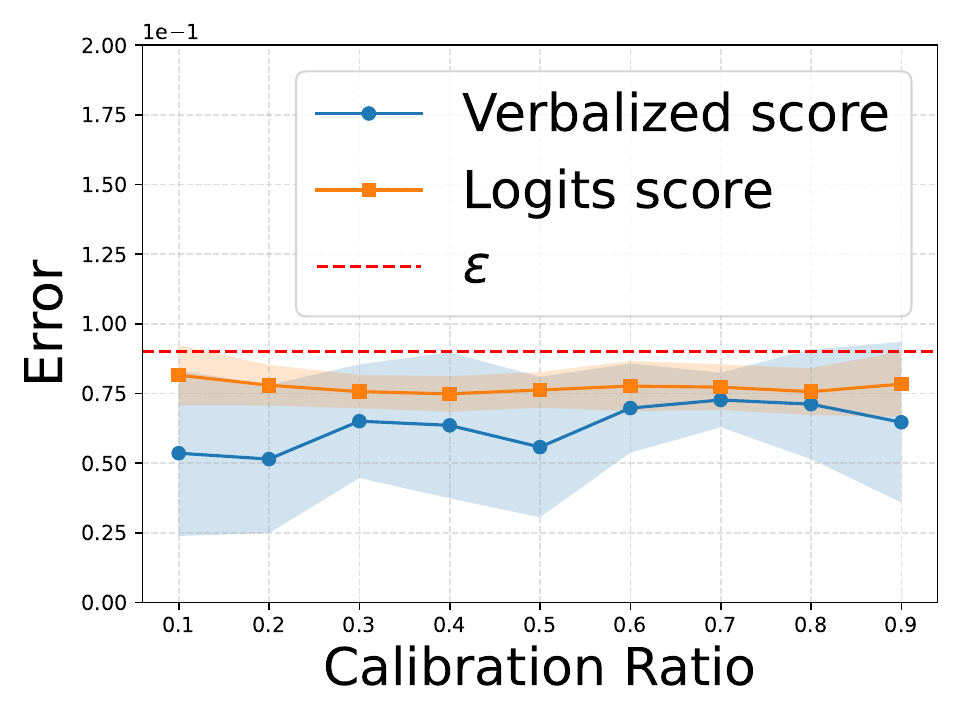}
        \includegraphics[width=0.31\linewidth]{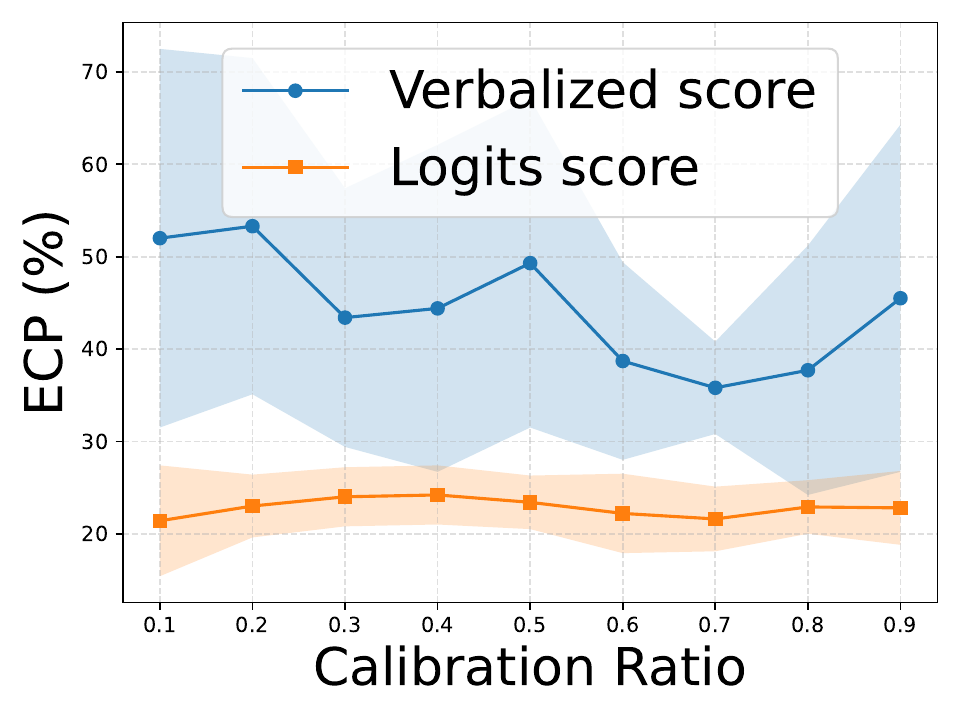}
        \includegraphics[width=0.31\linewidth]{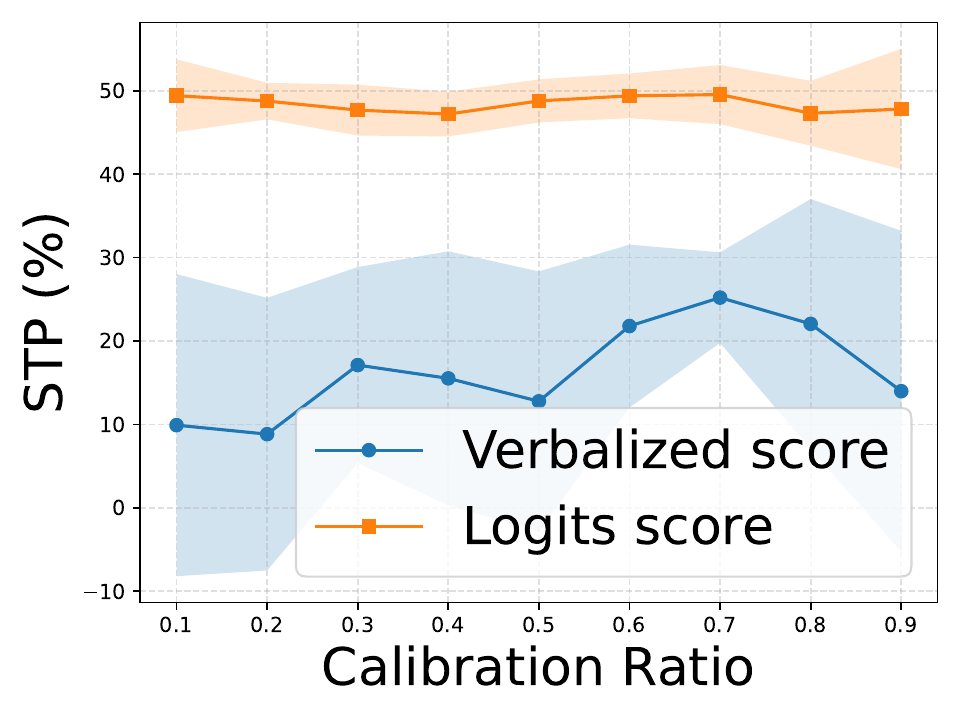}
    }
    \caption{Error control, ECP and STP of PAC reasoning for semantic loss for different calibration ratios at a confidence level of $\alpha = 0.05$. 
    Uncertainty score includes the logits-based score and the verbalized score. The red dashed line $\epsilon$ means the target risk level, and the shaded areas represent standard deviations.
    All experiments are repeated 100 times.}
    \label{fig:results_calibration_ratio}
\end{figure}

\subsection{Reward score as an alternative uncertainty score}\label{sec:reward}
In this part, we evaluate whether PAC reasoning remains valid when replacing the original uncertainty score with the reward score. Concretely, we apply PAC reasoning to MATH-500 and ZebraLogic using the reward score as the uncertainty estimate, and we report its error control based on the semantic cosine distance, ECP, and STP under varying 
$\epsilon$. We follow the experimental setting described in Section~\ref{sec:exp_setup}, and the reward model is ``Qwen2.5-Math-PRM-7B''~\citep{zhang2025lessons}.

The results are presented in Figure~\ref{fig:semantic_results_reward}. Across both benchmarks, the observed error curves remain below the diagonal baseline, indicating that PAC reasoning still satisfies the theoretical error guarantee even with this alternative scoring method. For efficiency, ECP consistently decreases as $\epsilon$ increases, showing that the method becomes more selective.
These results show that PAC reasoning is robust to the choice of uncertainty score: using the reward score still ensures valid error control and provides reasonable efficiency gains.

\begin{figure}[htbp]
    \centering
    \subcaptionbox{MATH-500}{
        \includegraphics[width=0.31\linewidth]{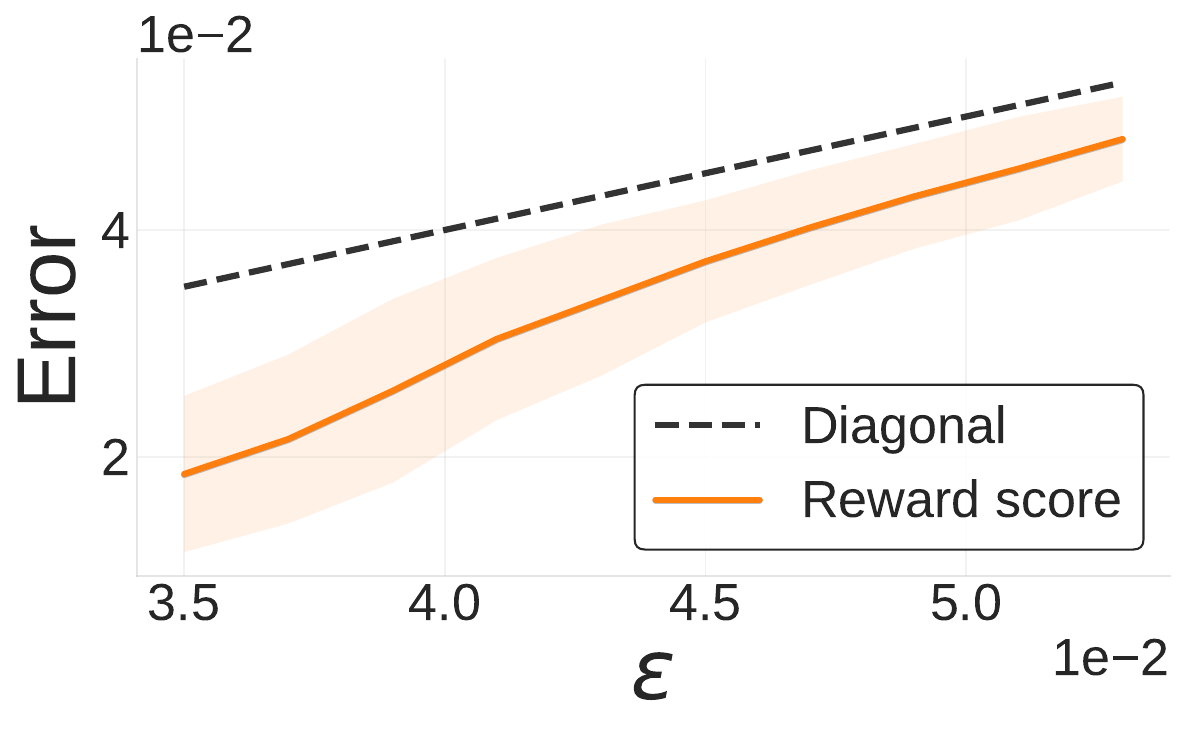}
         \includegraphics[width=0.31\linewidth]{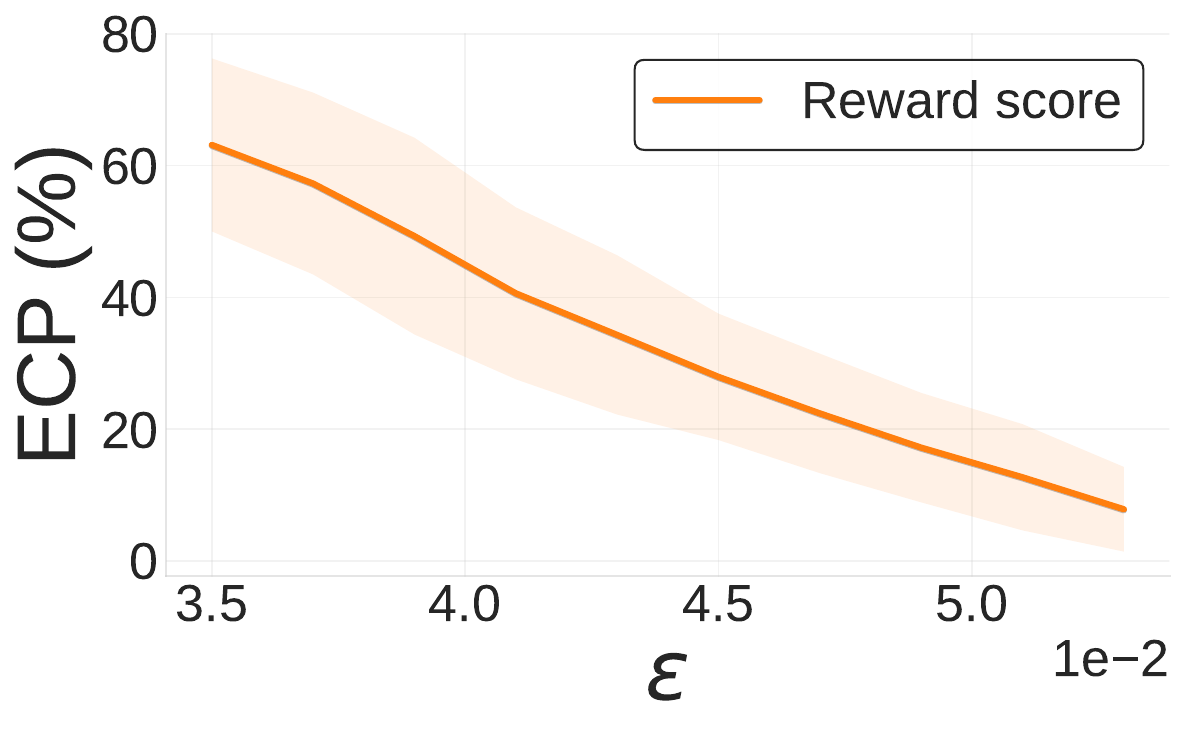}
        \includegraphics[width=0.31\linewidth]{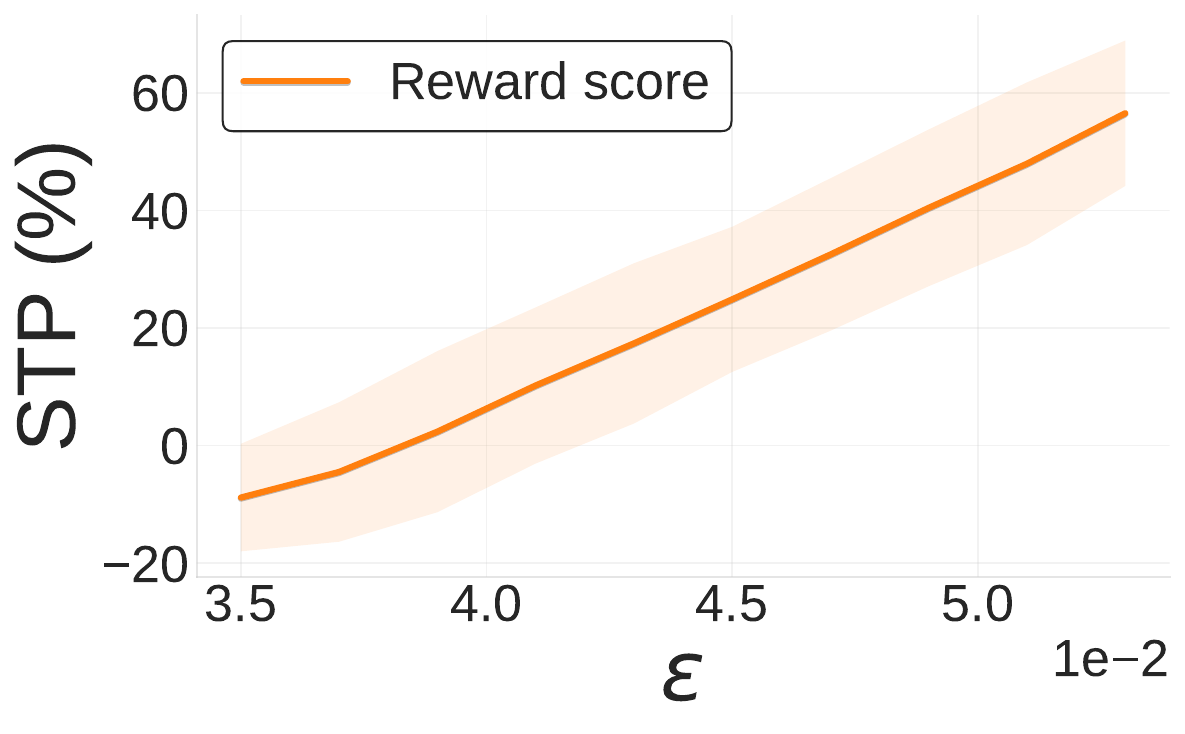}
    }
    \\
    \subcaptionbox{ZebraLogic}{
        \includegraphics[width=0.31\linewidth]{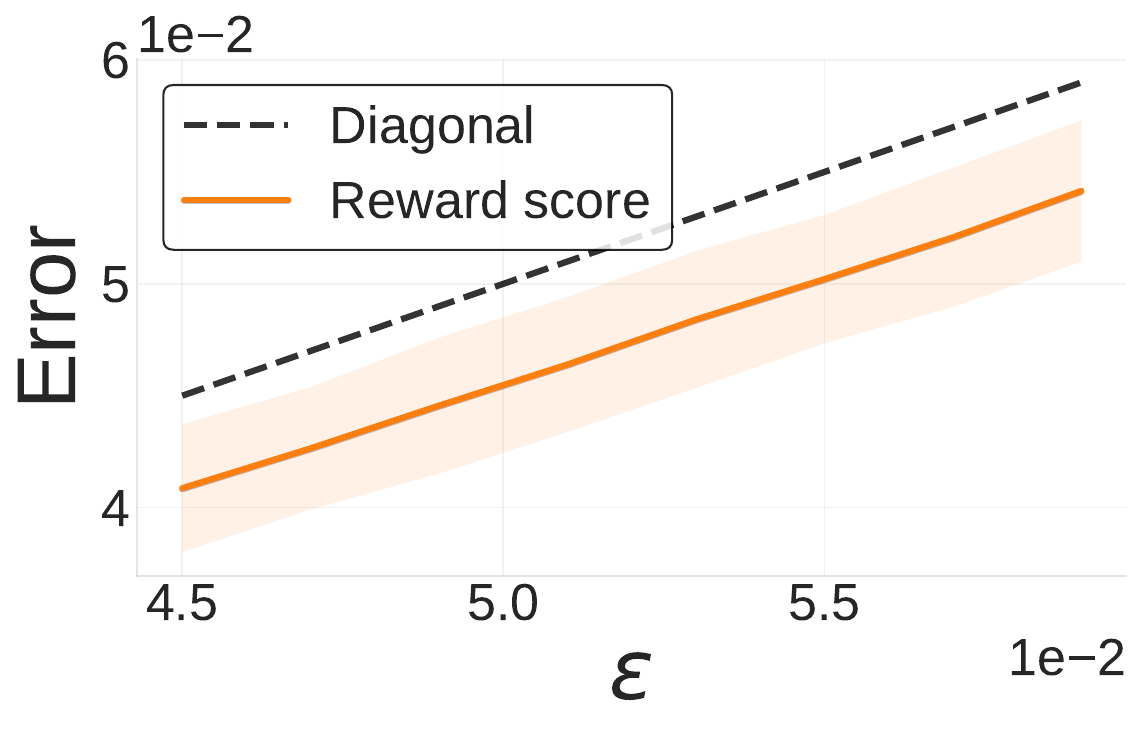}
        \includegraphics[width=0.31\linewidth]{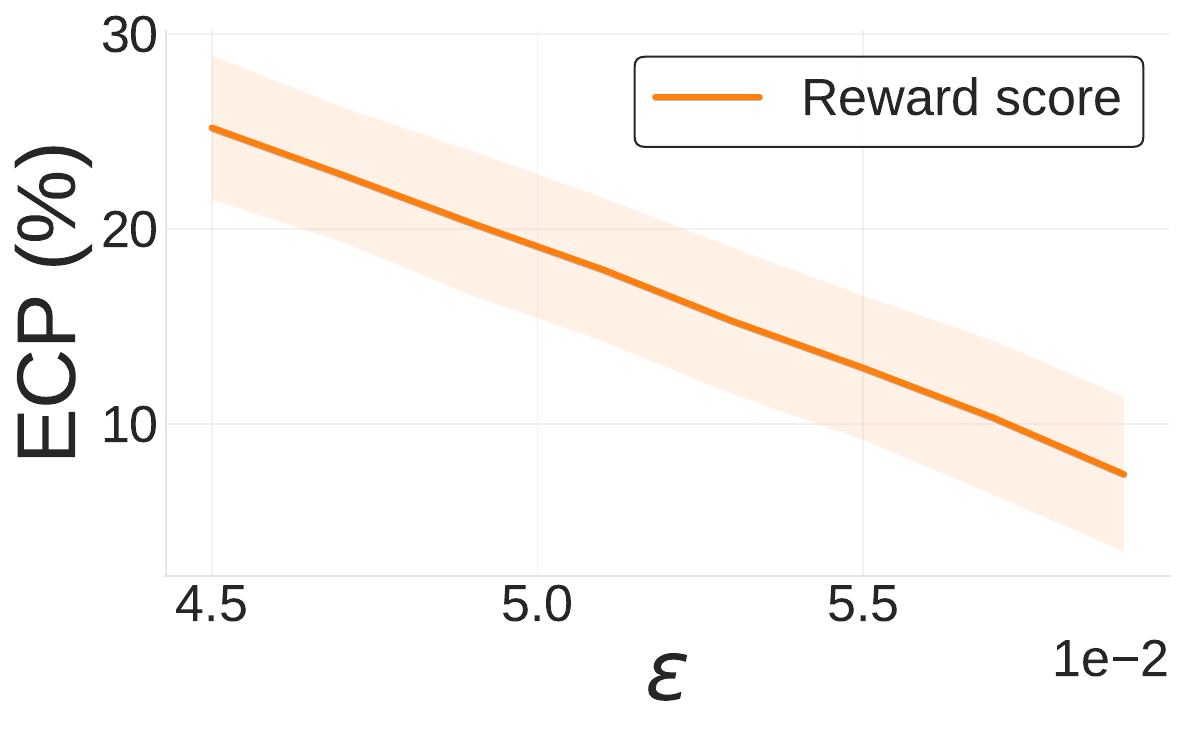}
        \includegraphics[width=0.31\linewidth]{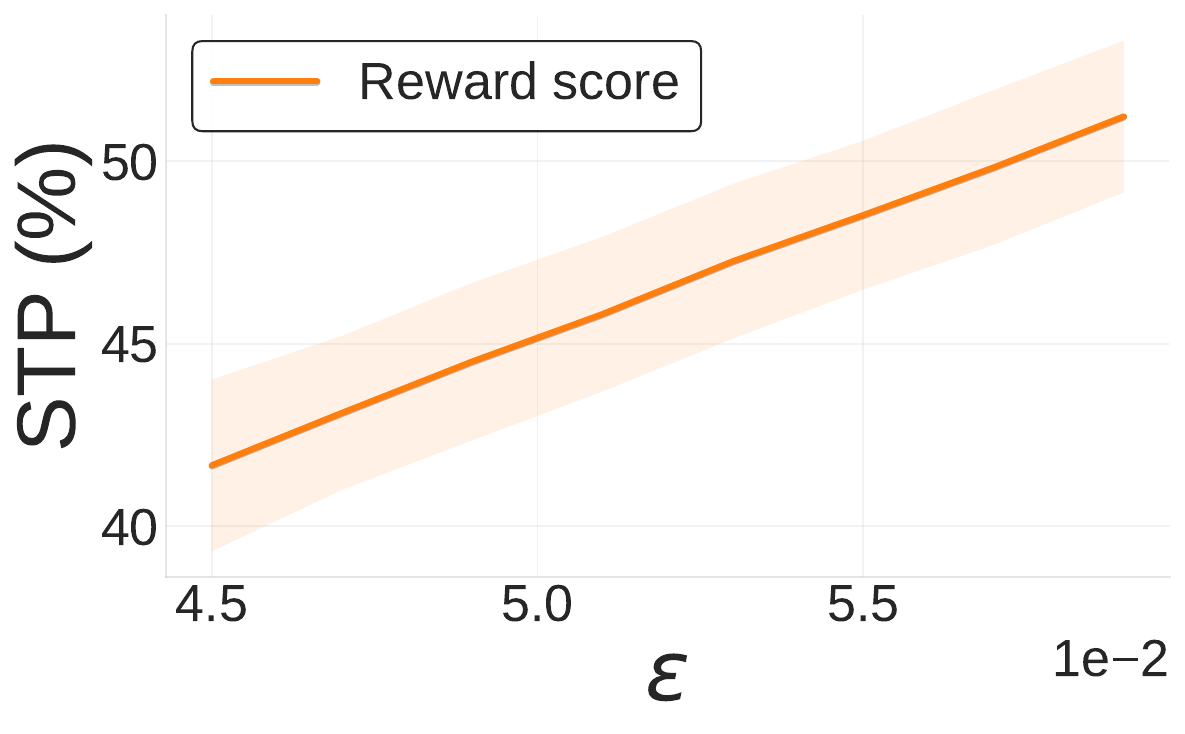}
    }
    \caption{Error control, ECP and STP of PAC reasoning for semantic loss across two mathematical benchmarks at a confidence level of $\alpha = 0.05$. 
    Uncertainty score is the \textbf{reward score}.
    All experiments are repeated 100 times, and the shaded areas represent standard deviations.}
    \label{fig:semantic_results_reward}
\end{figure}

\begin{figure}[htbp]
    \centering
    \subcaptionbox{GPQA}{
        \includegraphics[width=0.31\linewidth]{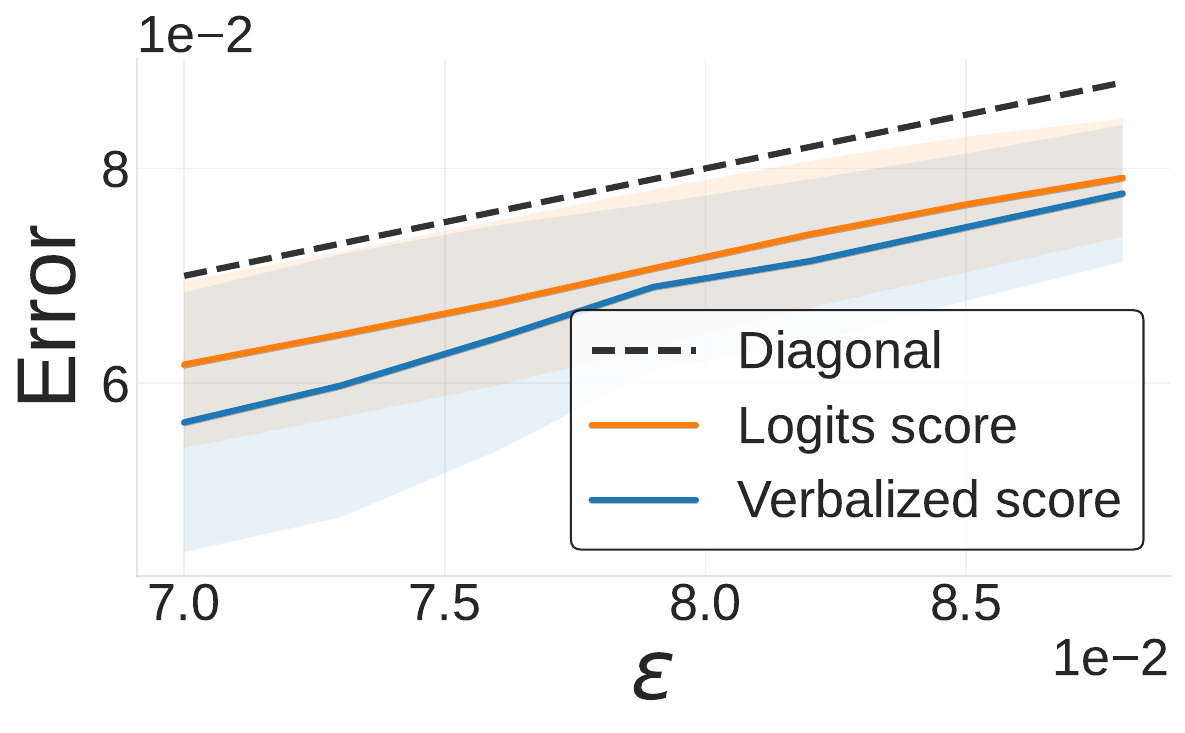}
        \includegraphics[width=0.31\linewidth]{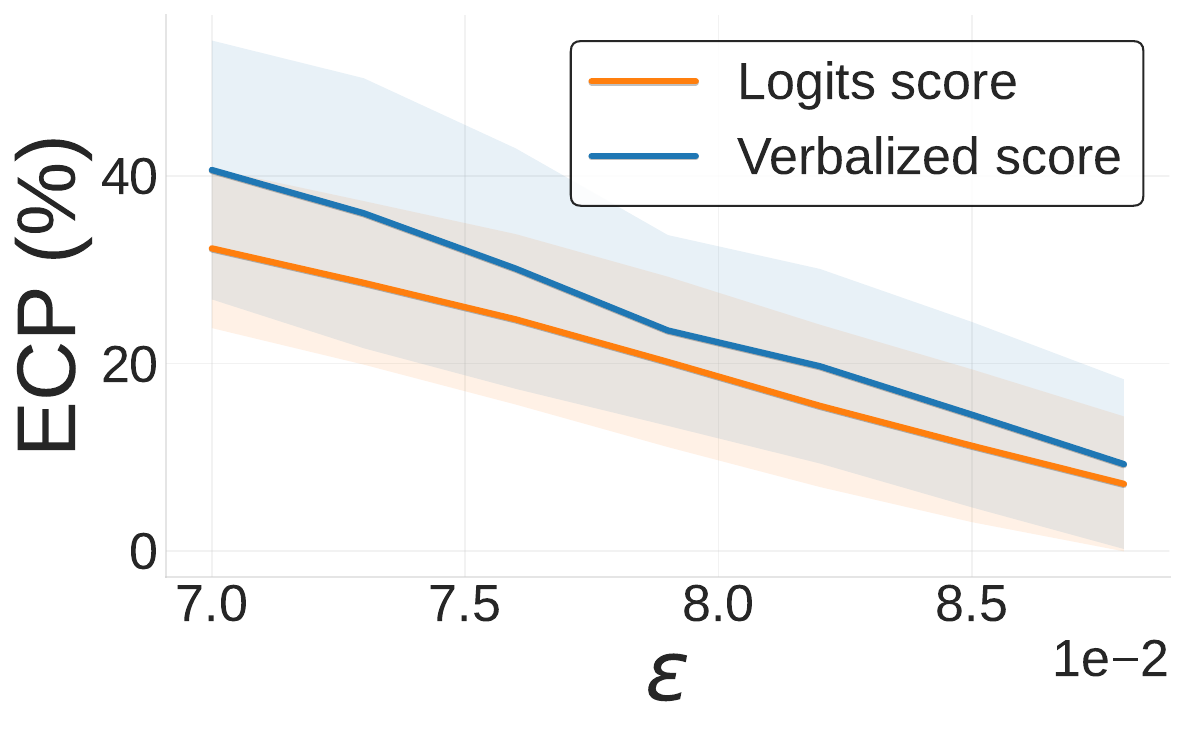}
        \includegraphics[width=0.31\linewidth]{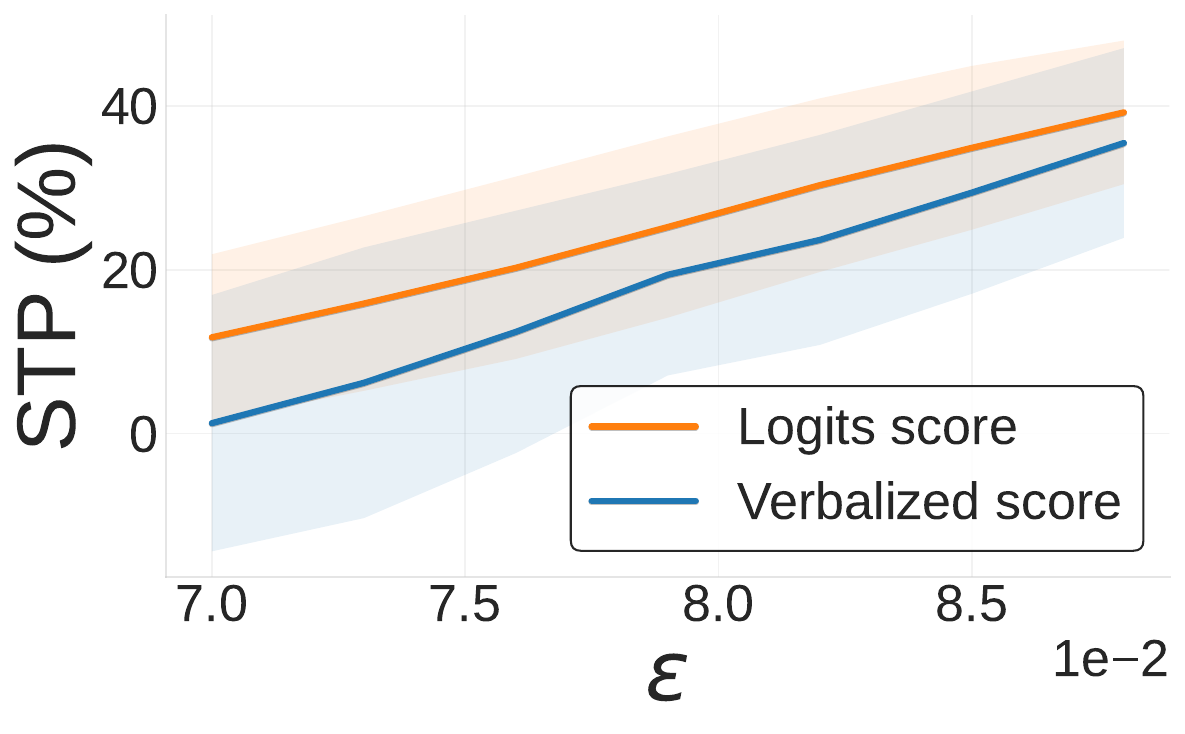}
    }\\
    \subcaptionbox{HumanEval}{
        \includegraphics[width=0.31\linewidth]{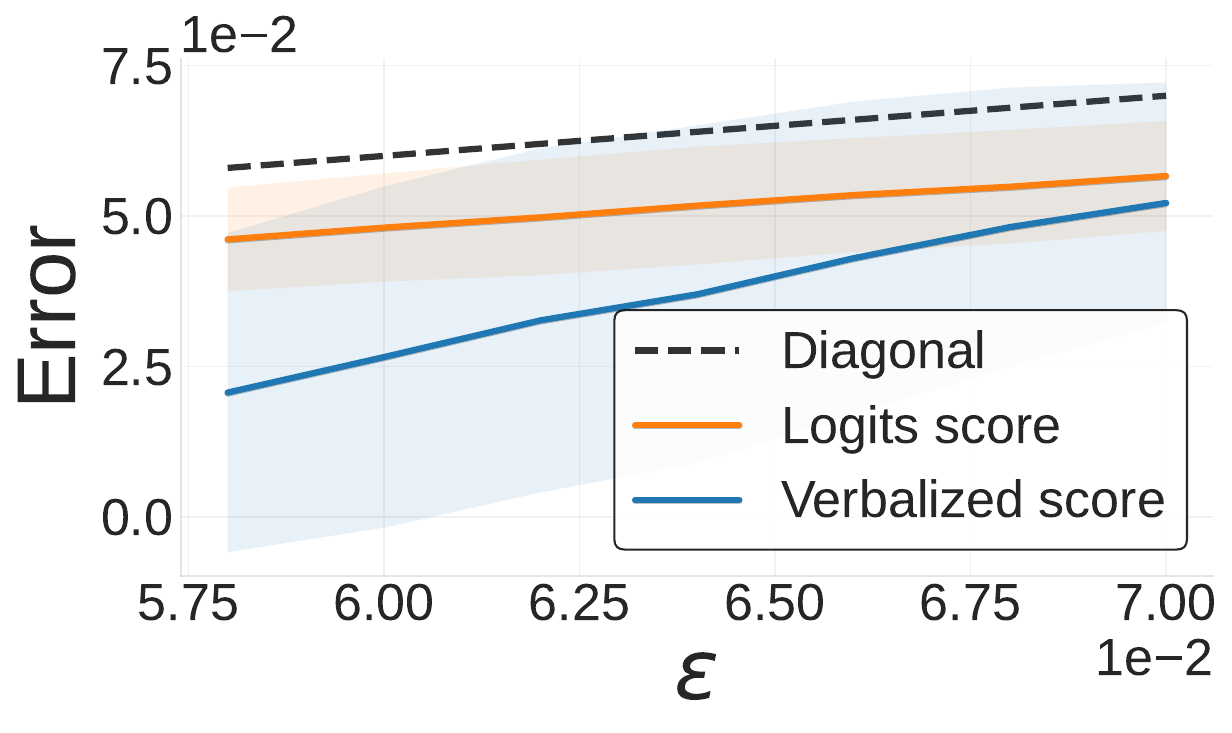}
        \includegraphics[width=0.31\linewidth]{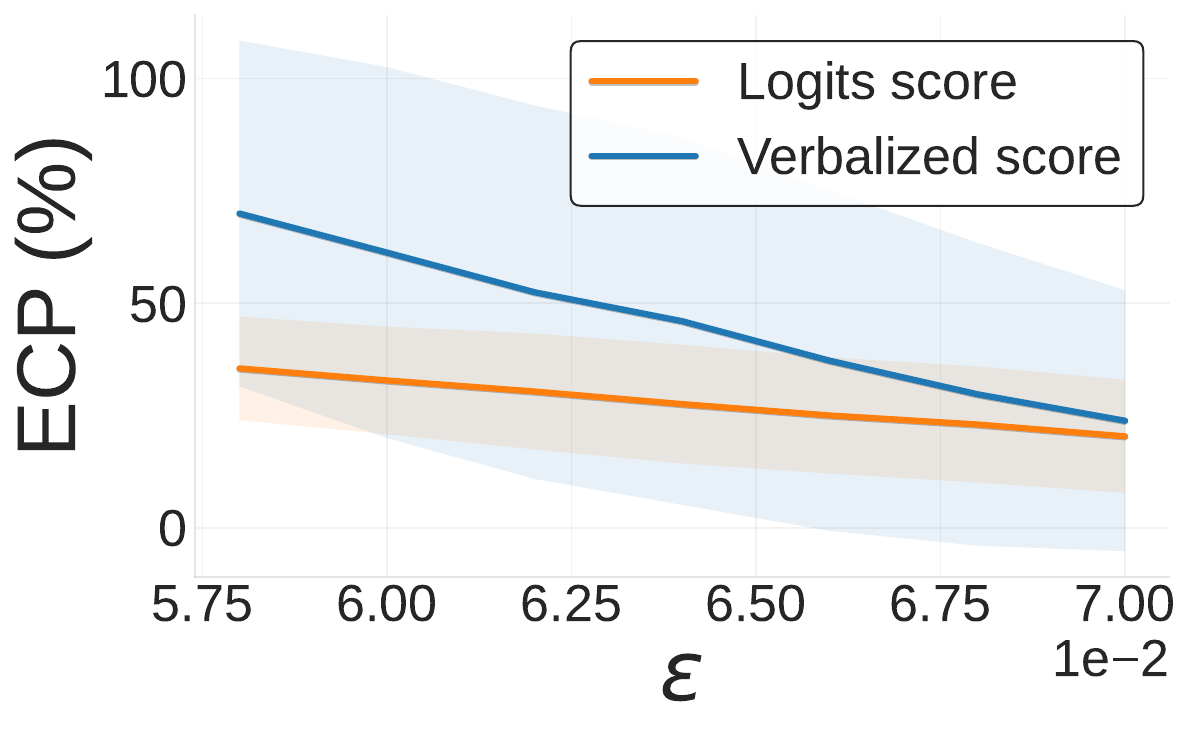}
        \includegraphics[width=0.31\linewidth]{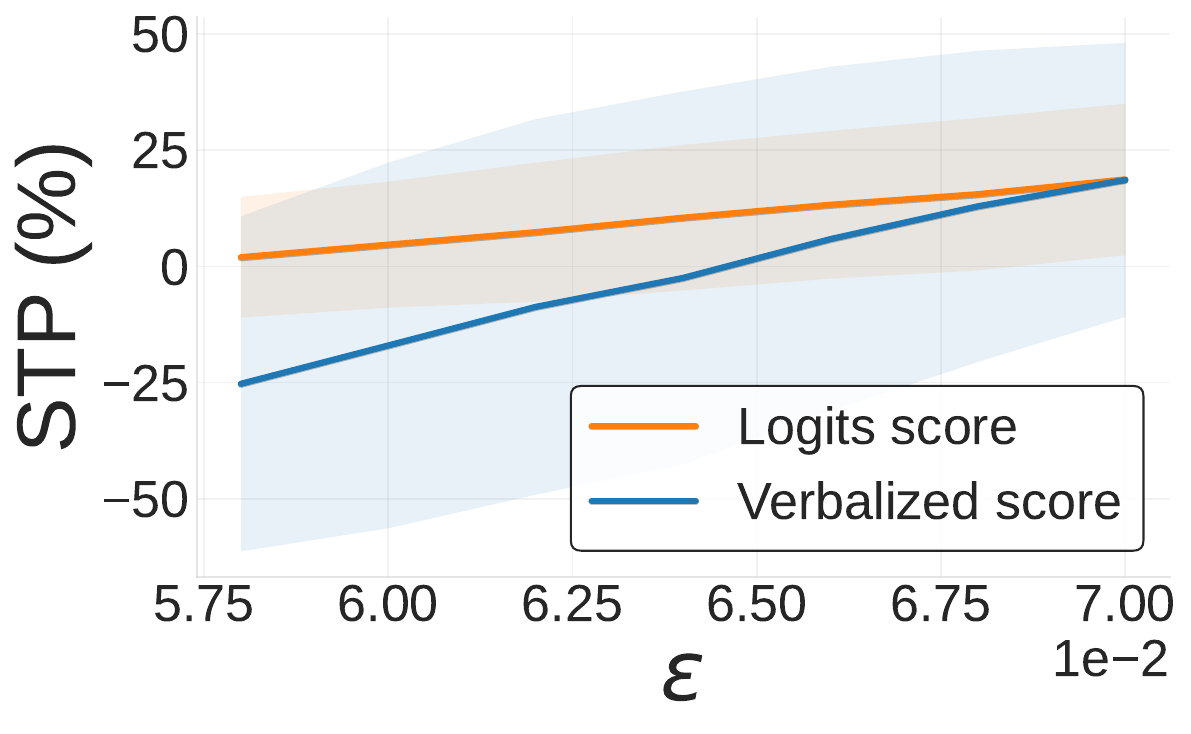}
    }
    \caption{Error control, ECP and STP of PAC reasoning for semantic loss for GPQA and HumanEval at a confidence level of $\alpha = 0.05$. 
    Uncertainty score includes the logits-based score and the verbalized score. The red dashed line $\epsilon$ means the target risk level, and the shaded areas represent standard deviations.
    All experiments are repeated 100 times.}
    \label{fig:results_other_datasets}
\end{figure}

\subsection{Accuracy of PAC reasoning}
\label{sec:acc}
In this section, we investigate the effectiveness of the PAC reasoning framework when controlled by semantic loss. As shown in Figure~\ref{fig:semantic_acc}, applying semantic loss to regulate the PAC filtering process leads to consistently improved final accuracies across both MATH-500 and ZebraLogic. Both logits-based and verbalized uncertainty scores yield higher Pass@1 performance than the baseline non-thinking model, with the verbalized score performing the best. The results indicate that PAC reasoning controlled by semantic loss reliably enhances output accuracy while remaining robust to different $\epsilon$ settings.

\begin{figure}[htbp]
    \centering
    \subcaptionbox{MATH-500}{
        \includegraphics[width=0.45\linewidth]{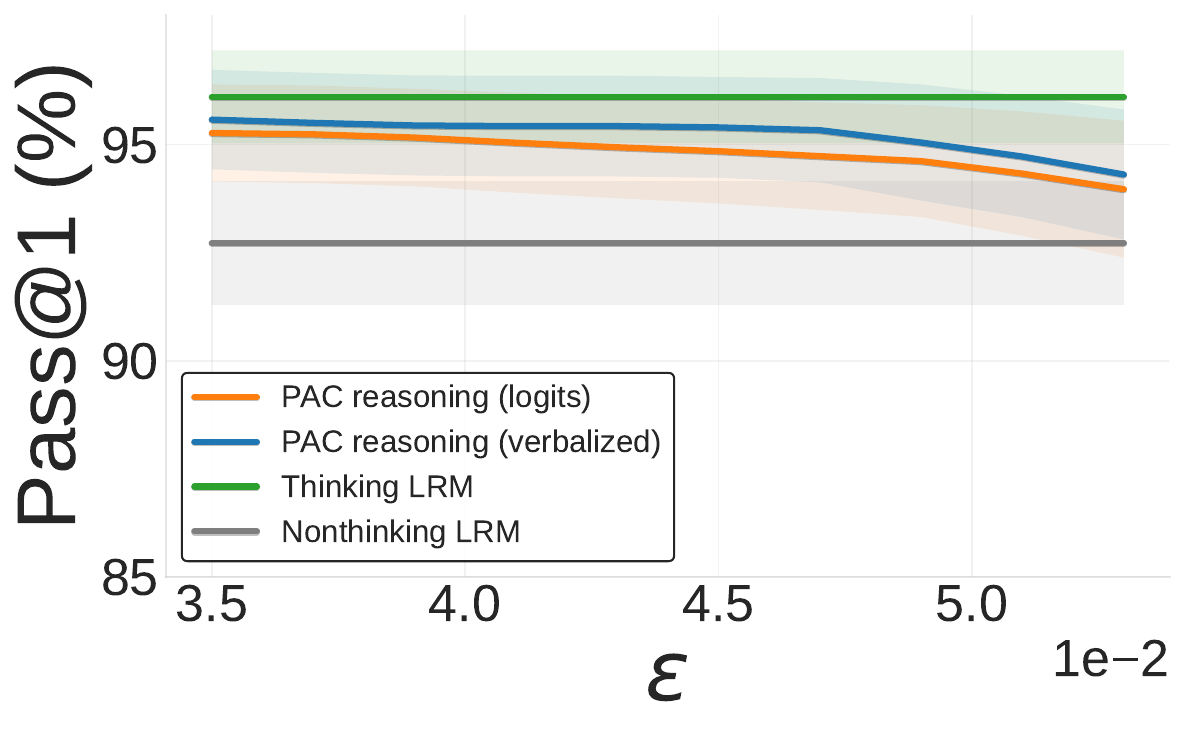}}
     \subcaptionbox{ZebraLogic}{    \includegraphics[width=0.45\linewidth]{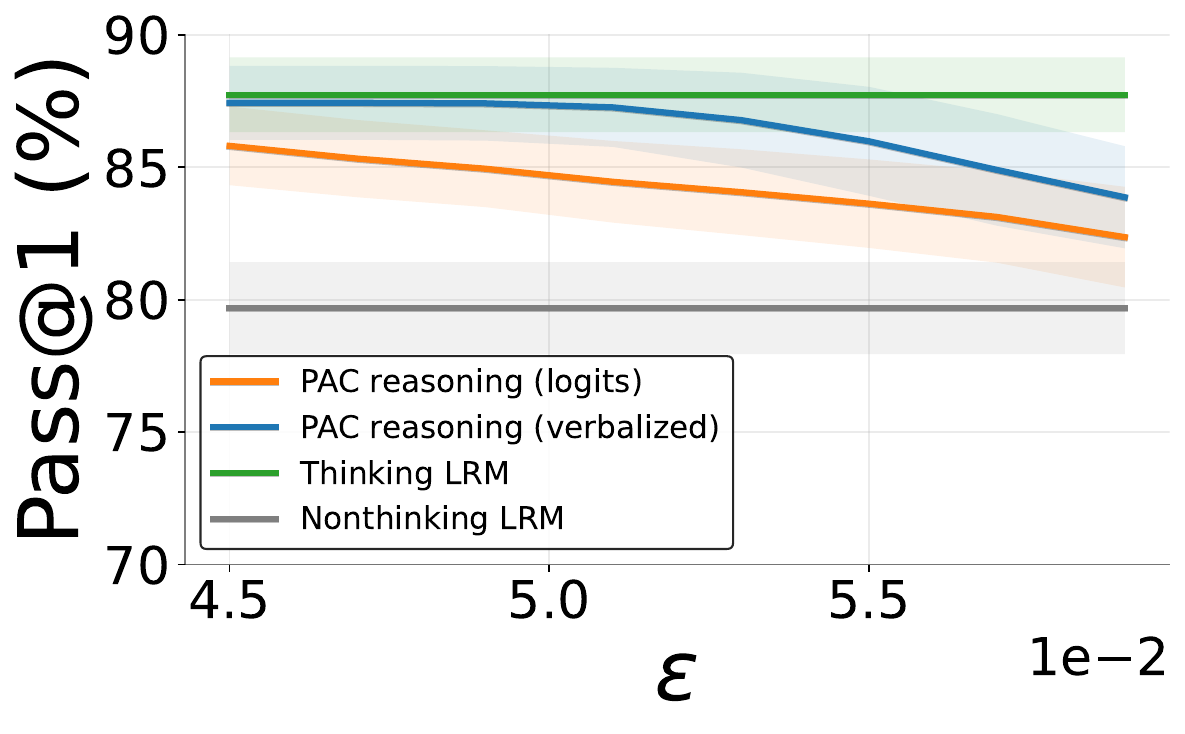}
    }
    \caption{Accuracy of PAC reasoning based on semantic loss across mathematical benchmarks at a confidence level of $\alpha = 0.05$. 
    Uncertainty score includes the logits-based score and the verbalized score.
    All experiments are repeated 100 times, and the shaded areas represent standard deviations.}
    \label{fig:semantic_acc}
\end{figure}

\subsection{Additional results for the binary loss}
\label{sec:binary_result}
For the binary loss, we evaluate PAC reasoning on the verifiable datasets MATH-500 and ZebraLogic, 
with target risk levels set to $\epsilon=0.03$ and $\epsilon=0.08$, respectively 
(see \Cref{sec:exp_setup} for experimental details). 
For comparison, we also consider a naive fixed-threshold baseline as well as the approach that relies solely on the non-thinking model.

As shown in Table~\ref{table:01_results}, PAC reasoning consistently keeps the error rates below the target risk, while also achieving substantial efficiency gains. 
In contrast, the naive baseline exhibits unstable behavior across datasets: on ZebraLogic, although it attains a very small error with logits-based uncertainty, it violates efficiency by yielding a negative STP ($-34.78\%$), meaning it requires even more tokens than fully using the thinking model.
Meanwhile, on MATH-500 with verbalized uncertainty, the same method produces a large error ($0.0346$), which substantially exceeds the target risk $\epsilon=0.03$.
These results highlight that naive threshold fails to provide reliable control over both loss and budget, often swinging between overly conservative and overly risky outcomes.
In summary, PAC reasoning strikes a balanced trade-off, keeping the error within $\epsilon$ while delivering consistent savings across tasks and datasets.

% \paragraph{Naive strategies are unreliable across datasets.}
% Table~\ref{table:01_results} shows that a simple strategy, i.e., setting a fixed threshold, does not generalize well across datasets. 
% On ZebraLogic, the fixed threshold attains a very small error ($0.0020$) but at a prohibitive computational cost: ECP $78.10\%$ and a negative STP ($-36.13\%$), i.e., more tokens than solely solving all queries with the thinking model.
% On MATH-500, the same threshold yields only modest efficiency gains (ECP $14.10\%$, STP $43.85\%$) and does not improve error relative to PAC reasoning.
% Meanwhile, the non-thinking model violates the target risk with substantially larger errors on both datasets (MATH-500: $0.0440$; ZebraLogic: $0.1131$).
% 

\begin{table}[!t]
\centering
\caption{Experimental results of the binary loss function on verifiable datasets ($\alpha=0.05$). 
For MATH-500, we set $\epsilon=0.03$, and for ZebraLogic, we set $\epsilon=0.08$.}\label{table:01_results}
\setlength{\tabcolsep}{2mm}{
\resizebox{\textwidth}{!}{
\begin{tabular}{llccccc}
\toprule
\multirow{2}{*}{Dataset}  & \multirow{2}{*}{Metric}  & \multicolumn{2}{c}{Logits-based score}& \multicolumn{2}{c}{Verbalized score} & \multirow{2}{*}{non-thinking}
\\
\cmidrule(r){3-4} \cmidrule(r){5-6}
& & PAC reasoning & Naive ($U_i \geq 0.05$) & PAC reasoning & Naive ($U_i \geq 0.05$) &  \\
\midrule
\multirow{3}{*}{MATH-500}
& Error  & $0.0206 \pm 0.0126$ & $0.0179\pm 0.0068$ & $0.0209 \pm 0.0141$ & $0.0346 \pm 0.0095$ & $0.0435 \pm 0.0107$ \\
& ECP (\%) $\downarrow$ & $21.48 \pm 17.85$ & $14.44 \pm 2.02$& $24.59 \pm 20.48$ & $2.83 \pm 0.94$ & $-$\\
& STP (\%) $\uparrow$& $37.61 \pm 23.19$ & $ 43.58 \pm 4.78$ &$36.13 \pm 26.44$ & $66.67 \pm 4.91$ & $-$\\
\midrule
\multirow{3}{*}{ZebraLogic}
& Error & $0.0615 \pm 0.0181$ & $0.0062 \pm 0.0026$ &$0.0530 \pm 0.0246$ & $0.0631 \pm 0.0074$ & $0.1163 \pm 0.0102$ \\
& ECP (\%) $\downarrow$  &$22.50 \pm 7.47$ & $77.28\pm 1.36$ & $26.95 \pm 20.68$ & $12.49 \pm 1.07$ & $-$\\
& STP (\%)  $\uparrow$& $23.13 \pm 9.90$ & $-34.78 \pm 1.26$ & $21.21 \pm 17.20$ & $32.70 \pm 2.11$ & $-$\\
\bottomrule
\end{tabular}}}
\end{table}

\end{document}